\theoremstyle{plain}
\newtheorem{theorem}{Theorem}[section]
\newtheorem{lemma}[theorem]{Lemma}
\theoremstyle{definition}
\theoremstyle{remark}
\newtheorem*{rep@theorem}{\rep@title}
\newcommand{\newreptheorem}[2]{%
\newenvironment{rep#1}[1]{%
 \def\rep@title{\textbf{#2} \ref{##1}}%
 \begin{rep@theorem}}%
 {\end{rep@theorem}}}
\newcommand{\norm}[1]{\left\Vert#1\right\Vert}
\newcommand{\abs}[1]{\left\vert#1\right\vert}
\newcommand{\set}[1]{\left\{#1\right\}}
\newcommand{\parr}[1]{\left (#1\right )}
\newcommand{\brac}[1]{\left [#1\right ]}
\newcommand{\Real}{\mathbb R}
\newcommand{\eps}{\varepsilon}
\newcommand{\too}{\rightarrow}
\definecolor{mygray}{gray}{0.95}
\newcommand{\FM}{\scriptscriptstyle \text{FM}}
\newcommand*{\eg}{{\it e.g.}\@\xspace}
\newcommand*{\ie}{{\it i.e.}\@\xspace}
\def\eqref#1{equation~\ref{#1}}
\def\1{\bm{1}}
\def\eps{{\epsilon}}
\DeclareMathAlphabet{\mathsfit}{\encodingdefault}{\sfdefault}{m}{sl}
\SetMathAlphabet{\mathsfit}{bold}{\encodingdefault}{\sfdefault}{bx}{n}
\def\gF{{\mathcal{F}}}
\def\gL{{\mathcal{L}}}
\def\gN{{\mathcal{N}}}
\def\gU{{\mathcal{U}}}
\newcommand{\E}{\mathbb{E}}
\newcommand{\R}{\mathbb{R}}
\newcolumntype{C}[1]{>{\Centering}m{#1}}
\newcolumntype{Z}[1]{>{\Left}m{#1}}
\newcommand{\odestep}{\text{step}}
\newcommand{\aux}{\text{\tiny aux}}
\newcommand{\nextt}{{\tiny \text{next}}}
\definecolor{linkcolor}{RGB}{74, 102, 146}
\newcommand{\juan}[1]{\textcolor{violet}{[\textbf{Juan:}] #1} }
\newcommand\cincludegraphics[2][]{\raisebox{-0.4\height}{\includegraphics[#1]{#2}}}
\title{Bespoke Solvers for Generative Flow Models}
\author{N. Shaul$^1$\ \ J. Perez$^2$\ \ R. T. Q. Chen$^3$\ \ A. Thabet$^2$\ \ A. Pumarola$^2$\ \ Y. Lipman$^{1,3}$ \\
    $^1$Weizmann Institute of Science  \ \
    $^2$GenAI, Meta \ \ 
    $^3$FAIR, Meta \\
}
\begin{document}

\maketitle

\begin{abstract}
Diffusion or flow-based models are powerful generative paradigms that are notoriously hard to sample as samples are defined as solutions to high-dimensional Ordinary or Stochastic Differential Equations (ODEs/SDEs) which require a large Number of Function Evaluations (NFE) to approximate well. Existing methods to alleviate the costly sampling process include model distillation and designing dedicated ODE solvers. However, distillation is costly to train and sometimes can deteriorate quality, while dedicated solvers still require relatively large NFE to produce high quality samples. In this paper we introduce \emph{``Bespoke solvers''}, a novel framework for constructing custom ODE solvers tailored to the ODE of a given pre-trained flow model. Our approach optimizes an order consistent and parameter-efficient solver (\eg, with 80 learnable parameters), is trained for roughly 1\% of the GPU time required for training the pre-trained model, and significantly improves approximation and generation quality compared to dedicated solvers. For example, a Bespoke solver for a CIFAR10 model produces samples with Fréchet Inception Distance (FID) of 2.73 with 10 NFE, and gets to 1\% of the Ground Truth (GT) FID (2.59) for this model with only 20 NFE. On the more challenging ImageNet-64$\times$64, Bespoke samples at 2.2 FID with 10 NFE, and gets within 2\% of GT FID (1.71) with 20 NFE.\vspace{-5pt}
\end{abstract}

\section{Introduction}\vspace{-5pt}
Diffusion models \citep{sohl2015deep,ho2020denoising}, and more generally flow-based models \citep{song2020score,lipman2022flow,albergo2022si}, have become prominent in generation of images \citep{dhariwal2021diffusion,rombach2021highresolution}, audio \citep{kong2020diffwave,le2023voicebox}, and molecules \citep{kong2020diffwave}. While training flow models is relatively scalable and efficient, sampling from a flow-based model entails solving a Stochastic or Ordinary Differential Equation (SDE/ODE) in high dimensions, tracing a velocity field defined with the trained neural network. Using off-the-shelf solvers to approximate the solution of this ODE to a high precision requires a large Number~(\ie, 100s) of Function Evaluations~(NFE), making sampling one of the main standing challenges in flow models. Improving the sampling complexity of flow models, without degrading sample quality, will open up new applications that require fast sampling, and will help reducing the carbon footprint and deployment cost of these models.

Current approaches for efficient sampling of flow models divide into two main groups: (i) \emph{Distillation}: where the pre-trained model is fine-tuned to predict either the final sampling \citep{luhman2021knowledge} or some intermediate solution steps \citep{salimans2022progressive} of the ODE. Distillation does not guarantee sampling from the pre-trained model's distribution, but, when given access to the training data during distillation training, it is shown to empirically generate samples of comparable quality to the original model \citep{salimans2022progressive,meng2023distillation}. Unfortunately, the GPU time required to distill a model is comparable to the training time of the original model \cite{salimans2022progressive}, which is often considerable. (ii) \emph{Dedicated solvers}: where the specific structure of the ODE is used to design a more efficient solver \citep{song2020denoising,lu2022dpm,lu2022dpm-pp} and/or employ a suitable solver family from the literature of numerical analysis \citep{zhang2022fast,zhang2023improved}. The main benefit of this approach is two-fold: First, it is \emph{consistent}, \ie, as the number of steps (NFE) increases, the samples converge to those of the pre-trained model. Second, it does not require further training/fine-tuning of the pre-trained model, consequently avoiding long additional training times and access to training data. Related to our approach, some works have tried to learn an ODE solver within a certain class \citep{watson2021learning,duan2023optimal}; however, they do not guarantee consistency and usually introduce moderate improvements over generic dedicated solvers. 


\begin{wrapfigure}[20]{r}{0.3\textwidth}
  \begin{center}
    \includegraphics[width=0.29\textwidth]{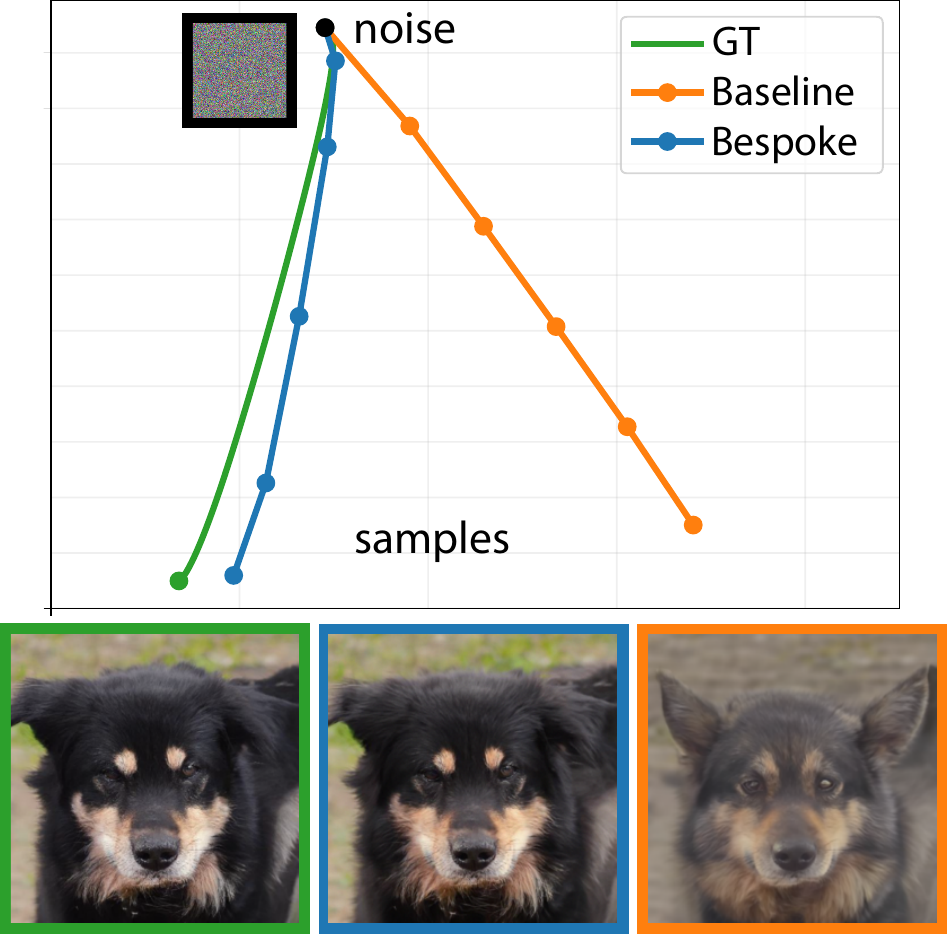}
  \end{center}\vspace{-10pt}
  \caption{Using 10 NFE to sample using our Bespoke solver improves fidelity w.r.t.~the baseline (RK2) solver. Visualization of paths was done with the 2D PCA plane approximating the noise and end sample points. }\label{fig:3_paths}
\end{wrapfigure}
In this paper, we introduce \emph{Bespoke solvers}, a framework for learning consistent ODE solvers \emph{custom-tailored} to pre-trained flow models. 
The main motivation for Bespoke solvers is that different models exhibit sampling paths with different characteristics, leading to local truncation errors that are specific to each instance of a trained model. 
A key observation of this paper is that optimizing a solver for a particular model can significantly improve quality of samples for low NFE compared to existing dedicated solvers. Furthermore, Bespoke solvers use a very small number of learnable parameters and consequently are efficient to train. 
For example, we have trained $n\in\set{5,8,10}$ steps Bespoke solvers for a pre-trained ImageNet-64$\times$64 flow model with $\set{40,64,80}$ learnable parameters (resp.)~producing images with Fréchet Inception Distances (FID) of $2.2$, $1.79$, $1.71$ (resp.), where the latter is within 2\% from the Ground Truth (GT) FID (1.68) computed with $\sim180$ NFE. The Bespoke solvers were trained (using a rather naive implementation) for roughly 1\% of the GPU time required for training the original model. Figure \ref{fig:3_paths} compares sampling at 10 NFE from a pre-trained AFHQ-256$\times$256 flow model with order 2 Runge-Kutta (RK2) and its Bespoke version (RK2-Bes), along with the GT sample that requires $\sim180$ NFE.
%
Our work brings the following contributions:
\begin{enumerate}
    \item A differentiable parametric family of consistent ODE solvers. 
    \item A tractable loss that bounds the global truncation error while allowing parallel computation. 
    \item An algorithm for training a Bespoke $n$-step solver for a specific pre-trained model. 
    \item Significant improvement over dedicated solvers in generation quality for low NFE. \vspace{-5pt}
\end{enumerate}

\section{Bespoke solvers}\vspace{-5pt}
We consider a pre-trained flow model taking some prior distribution~(noise) $p$ to a target~(data) distribution $q$ in data space $\Real^d$. The flow model~\citep{chen2018neural} is represented by a time-dependent Vector Field (VF) $u:[0,1]\times\Real^d\too \Real^d$ that transforms a noise sample $x_0\sim p(x_0)$ to a data sample $x_1\sim q(x_1)$ by solving the ODE 
\begin{equation}\label{e:ode}
    \dot{x}(t) = u_t(x(t)),
\end{equation}
with the initial condition $x(0)=x_0\sim p(x_0)$, from time $t=0$ until time $t=1$, and $\dot{x}(t):=\tfrac{d}{dt}x(t)$. The solution at time $t=1$, \ie, $x(1)\sim q(x(1))$, is the generated target sample. 


\begin{wrapfigure}[9]{R}{0.38\textwidth}
\vspace{-25pt}
    \begin{minipage}{0.37\textwidth}
      \begin{algorithm}[H]
        \caption{Numerical ODE solver.}\label{alg:odesolve}
        \begin{algorithmic}
            \Require $t_0,x_0$
            \For{$i=0,1,\ldots,n-1$}
            \State $(t_{i+1},x_{i+1}) = \odestep(t_i,x_i;u_t)$  
        \EndFor
        \State \Return $x_n$
        \end{algorithmic}
      \end{algorithm}
    \end{minipage}
  \end{wrapfigure}
\textbf{Numerical ODE solvers.} Solving \eqref{e:ode} is done in practice with numerical ODE solvers. A numerical solver is defined by an update step:
\begin{equation}\label{e:step}    (t_{\nextt},x_{\nextt}) = \odestep(t,x;u_t).
\end{equation}
The update step takes as input current time $t$ and approximate solution $x$, and outputs the next time step $t_\nextt$ and the corresponding approximation $x_\nextt$ to the true solution $x(t_\nextt)$ at time $t_{\nextt}$. To approximate the solution at some desired end time, \ie, $t=1$, one first initializes the solution at $t=0$ and repeatedly applies the update step in \eqref{e:step} $n$ times, as presented in Algorithm \ref{alg:odesolve}. 
The $\odestep$ is designed so that $t_n=1$. 

An ODE solver ($\odestep$) is said to be of \emph{order} $k$ if its local truncation error is 
\begin{equation}
    \norm{x(t_{\nextt}) - x_{\nextt}} = O\parr{(t_{\nextt}-t)^{k+1}}, 
\end{equation}
asymptotically as $t_{\nextt}\too t$, where $t\in [0,1)$ is arbitrary but fixed and $t_\nextt, x_\nextt$ are defined by the solver, \eqref{e:step}. A popular family of solvers that offers a wide range of orders is the Runge-Kutta (RK) family~\citep{iserles2009first}. Two of the most popular members of the RK family are (set $h=n^{-1}$): 
\begin{align}\label{e:euler}
    &\text{RK1 (Euler - order 1):} \qquad \ \odestep(t,x; u_t) = \parr{t+h \, ,\,  x + h u_{t}(x)}, \\ \label{e:midpoint}
    &\text{RK2 (Midpoint - order 2):} \ \ \  \odestep(t,x; u_t) = \parr{t+h\,  ,\,  x + h u_{t+\frac{h}{2}}\parr{x + \frac{h}{2}u_{t}(x)}}.
\end{align}

\textbf{Approach outline.} Given a pre-trained $u_t$ and a target number of time steps $n$ our goal is to find a custom (Bespoke) solver that is optimal for approximating the samples $x(1)$ defined via \eqref{e:ode} from initial conditions sampled according to $x(0)=x_0\sim p(x_0)$. To that end we develop two components: (i) a differentiable parametric family of solvers $\odestep^\theta$, with parameters $\theta\in \Real^p$ (where $p$ is \textit{very} small); and (ii) a tractable loss bounding the \emph{global truncation error}, \ie, the Root Mean Square Error~(RMSE) between the approximate sample ${x}^\theta_n$ and the GT sample $x(1)$, 
\begin{equation}\label{e:rmse}
    \text{Global truncation error:} \quad \gL_{\text{RMSE}}(\theta) = \E_{x_0\sim p(x_0)}\norm{x(1)-{x}_n^{\theta}},
\end{equation} 
where ${x}_n^\theta$ is the output of Algorithm  \ref{alg:odesolve} using the candidate solver $\odestep^\theta$ and $\norm{x}=\sqrt{\frac{1}{d}\sum_{j=1}^d [x^{(j)}]^2}$.

\subsection{Parametric family of ODE solvers through transformed sampling paths}

Our strategy for defining the parametric family of solvers $\odestep^\theta$ is using a generic base ODE solver, such as RK1 or RK2, applied to a parametric family of \textit{transformed paths}. 

\begin{wrapfigure}{r}{0.33\textwidth}
\centering
\vspace{-10pt}
\includegraphics[width=0.3\textwidth]{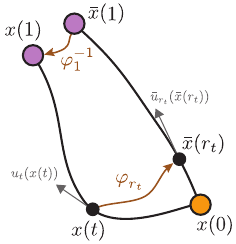}
\caption{Transformed paths.}
\label{fig:paths}
\end{wrapfigure}
\textbf{Transformed sampling paths.}\label{ss:transformed_paths} We transform the sample trajectories $x(t)$ by applying two components: a time reparametrization and an arbitrary invertible transformation. That is,
\begin{equation}\label{e:bar_x_r_from_x_t}
    \bar{x}(r) = \varphi_r(x(t_r)),\quad r\in [0,1],
\end{equation}
where $t_r,\varphi_r(x)$ are arbitrary functions in a family $\gF$ defined by the following conditions: (i) \emph{Smoothness}: $t_r:[0,1]\too[0,1]$ is a diffeomorphism\footnote{A diffeomorphism is a $C^1$ continuously differentiable function with a $C^1$ continuous differentiable inverse.}, and $\varphi:[0,1]\times \Real^d \too \Real^d$ is $C^1$ and a diffeomorphism in $x$. We also assume $r_t$ and $\varphi_r^{-1}$ are Lipschitz continuous with a constant $L>0$. (ii) \emph{Boundary conditions}: $t_r$ satisfies $t_0=0$ and $t_1=1$, and $\varphi_0(\cdot)$ is the identity function, \ie, $\varphi_0(x)=x$ for all $x\in \Real^d$. Figure \ref{fig:paths} depicts a transformation of a path, $x(t)$. Note that $\bar{x}(0)=x(0)$, however the end point $\bar{x}(1)$ does not have to coincide with $x(1)$. Furthermore, as $t_r:[0,1]\too [0,1]$ is a diffeomorphism, 
$t_r$ is strictly monotonically increasing. 

The motivation behind the definition of the transformed trajectories is that it allows reconstructing $x(t)$ from $\bar{x}(r)$. Indeed, denoting $r=r_t$ the inverse function of $t=t_r$ we have 
\begin{equation}\label{e:x_t_from_bar_x_r}
x(t) = \varphi^{-1}_{r_t}(\bar{x}({r_t})). 
\end{equation}
Our hope is to find a transformation that simplifies sampling paths and allows the base solver to provide better approximations of the GT samples. The transformed trajectory $\bar{x}_r$ is defined by a VF $\bar{u}_r(x)$ that can be given an explicit form as follows (proof in Appendix \ref{a:transformed_paths}):
\begin{restatable}{proposition}{barut}\label{prop:bar_u_t}
    Let $x(t)$ be a solution to \eqref{e:ode}. Denote $\dot{\varphi}_r := \tfrac{d}{dr}\varphi_r$ and $\dot{t}_r := \tfrac{d}{dr}t_r$. Then $\bar{x}(r)$ defined in \eqref{e:bar_x_r_from_x_t} is a solution to the ODE (\eqref{e:ode}) with the VF 
    \begin{equation}\label{e:bar_u_t}
        \bar{u}_r(x) = \dot{\varphi}_r(\varphi^{-1}_r(x)) + \dot{t}_r\partial_x \varphi_r(\varphi^{-1}_r(x))u_{t_r}(\varphi^{-1}_r(x)).
    \end{equation}
\end{restatable}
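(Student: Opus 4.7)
The claim is essentially a chain-rule computation dressed up in the language of ODEs: once we differentiate $\bar{x}(r)=\varphi_r(x(t_r))$ with respect to $r$ we just need to rewrite the result in terms of $\bar{x}(r)$ alone. I would proceed in three short steps.

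\textbf{Step 1 (differentiate via the chain rule).} View $\varphi$ as a map of two arguments, the ``time'' $r$ and the spatial variable $y=x(t_r)$. Applying the total derivative in $r$ to $\bar{x}(r)=\varphi_r(x(t_r))$ and using that both $\varphi_r(\cdot)$ and $t_r$ are $C^1$ (from the family $\gF$), I get
\begin{equation*}
    \frac{d}{dr}\bar{x}(r) \;=\; \dot{\varphi}_r(x(t_r)) \;+\; \partial_x\varphi_r(x(t_r))\,\dot{x}(t_r)\,\dot{t}_r,
\end{equation*}
where $\dot{\varphi}_r$ is the partial derivative of $\varphi_r(\cdot)$ with respect to the $r$-slot and $\partial_x\varphi_r$ is the Jacobian in the spatial slot.

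\textbf{Step 2 (substitute the original ODE).} Since $x(\cdot)$ solves \eqref{e:ode}, $\dot{x}(t_r)=u_{t_r}(x(t_r))$. Plugging this into the expression from Step 1 yields
\begin{equation*}
    \frac{d}{dr}\bar{x}(r) \;=\; \dot{\varphi}_r(x(t_r)) \;+\; \dot{t}_r\,\partial_x\varphi_r(x(t_r))\,u_{t_r}(x(t_r)).
\end{equation*}

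\textbf{Step 3 (express in terms of $\bar{x}(r)$).} The defining relation $\bar{x}(r)=\varphi_r(x(t_r))$ combined with the hypothesis that $\varphi_r$ is a diffeomorphism in $x$ gives the pointwise inverse identity $x(t_r)=\varphi_r^{-1}(\bar{x}(r))$. Substituting this into the right-hand side of Step 2 produces exactly $\bar{u}_r(\bar{x}(r))$ as defined in \eqref{e:bar_u_t}, so $\bar{x}(r)$ is a solution of $\dot{\bar{x}}(r)=\bar{u}_r(\bar{x}(r))$, as required.

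\textbf{Where care is needed.} There is no real obstacle, only bookkeeping: one must keep straight the two distinct derivatives of $\varphi_r$ (the $r$-derivative $\dot{\varphi}_r$ versus the spatial Jacobian $\partial_x\varphi_r$), invoke the smoothness/diffeomorphism conditions of $\gF$ to justify both the chain rule and the inversion $y\mapsto\varphi_r^{-1}(y)$, and note that the Lipschitz assumptions on $r_t$ and $\varphi_r^{-1}$ are not needed for this proposition itself—they will be used later to bound the global truncation error. The boundary conditions $t_0=0$, $t_1=1$, $\varphi_0=\mathrm{Id}$ likewise play no role here, only in relating $\bar{x}$ and $x$ at the endpoints.
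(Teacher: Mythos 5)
Your proposal is correct and follows essentially the same route as the paper's proof: chain rule on $\bar{x}(r)=\varphi_r(x(t_r))$, substitution of $\dot{x}(t_r)=u_{t_r}(x(t_r))$, and rewriting via $x(t_r)=\varphi_r^{-1}(\bar{x}(r))$. Your closing remarks on which hypotheses of $\gF$ are actually used are accurate and a nice bonus, but nothing further is needed.
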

\textbf{Solvers via transformed paths.   }
We are now ready to define our parametric family of solvers $\odestep^\theta(t,x;u_t)$: First we transform the input sample $(t,x)$ according to \eqref{e:bar_x_r_from_x_t} to 
\begin{equation}\label{e:step_parametric_1}
    (r,\bar{x})=(r_t, \varphi_{r_t}(x)).
\end{equation}
Next, we perform a step with the base solver of choice, denoted here by $\odestep$, \eg, RK1 or RK2, 
\begin{equation}\label{e:step_parametric_2}
    (r_{\nextt},\bar{x}_{\nextt}) = \odestep(r, \bar{x}; \bar{u}_r),
\end{equation}
and lastly, transform back using \eqref{e:x_t_from_bar_x_r} to define the parametric solver $\odestep^\theta$ via
\begin{equation}\label{e:step_parametric}
    (t_\nextt, x_\nextt) = \odestep^\theta(x,t;u_t) =  \parr{t_{r_{\nextt}}, \varphi_{r_{\nextt}}^{-1}  (\bar{x}_{\nextt})}.
\end{equation}
The parameters $\theta$ denote the parameterized transformations $t_r$ and $\varphi_r$ satisfying the properties of $\gF$ and the choice of a base solver $\odestep$. In Section \ref{sec:use_cases} we derive the explicit rules we use in this paper. 

\textbf{Consistency of solvers.} \label{ss:consistency} An important property of the parametric solver $\odestep^\theta$ is \emph{consistency}. Namely, due to the properties of $\gF$, regardless of the particular choice of $t_r,\varphi_r\in \gF$, the solver $\odestep^\theta$ has the same local truncation error as the base solver. 
\begin{restatable}{theorem}{consistency}(Consistency of parametric solvers)\label{thm:consistency}
    Given arbitrary $t_r,\varphi_r$ in the family of functions $\gF$ and a base ODE solver of order $k$, the corresponding ODE solver $\odestep^\theta$ is also of order $k$, \ie, \vspace{-5pt}
    \begin{equation}
        \norm{x(t_\nextt)  - x_{\nextt}} = O((t_\nextt-t)^{k+1}).\vspace{-5pt}
    \end{equation}
\end{restatable}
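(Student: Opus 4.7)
The plan is to reduce the local truncation error of $\odestep^\theta$ to that of the base solver applied to the transformed ODE, and then transport the bound back through the (Lipschitz) inverse change of variables.

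First, I would invoke Proposition~\ref{prop:bar_u_t} to observe that $\bar{x}(r) = \varphi_r(x(t_r))$ is an exact solution of the transformed ODE $\dot{\bar{x}}(r) = \bar{u}_r(\bar{x}(r))$. By construction, the parametric solver reads $(t,x)$, maps to $(r,\bar{x})=(r_t, \varphi_{r_t}(x))$, and then produces $(r_\nextt,\bar{x}_\nextt)$ by one step of the base solver applied in the transformed coordinates, i.e.\ $(r_\nextt,\bar{x}_\nextt) = \odestep(r,\bar{x};\bar{u}_r)$. Since $\odestep$ is of order $k$ for the ODE driven by $\bar{u}_r$, its local truncation error satisfies
\begin{equation}
\norm{\bar{x}(r_\nextt) - \bar{x}_\nextt} = O\!\left((r_\nextt - r)^{k+1}\right)
\end{equation}
as $r_\nextt \to r$.

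Next, I would transport this bound back to the original coordinates. By \eqref{e:x_t_from_bar_x_r} the true trajectory satisfies $x(t_\nextt) = \varphi_{r_\nextt}^{-1}(\bar{x}(r_\nextt))$, while by definition of $\odestep^\theta$ we have $x_\nextt = \varphi_{r_\nextt}^{-1}(\bar{x}_\nextt)$. The Lipschitz assumption on $\varphi_r^{-1}$ (uniform in $r$, with constant $L$) from the definition of $\gF$ then gives
\begin{equation}
\norm{x(t_\nextt) - x_\nextt} \;\le\; L\,\norm{\bar{x}(r_\nextt) - \bar{x}_\nextt} \;=\; O\!\left((r_\nextt - r)^{k+1}\right).
\end{equation}
Finally, Lipschitz continuity of $r_t$ (also part of the definition of $\gF$) yields $|r_\nextt - r| = |r_{t_\nextt} - r_t| \le L\,|t_\nextt - t|$, so $(r_\nextt - r)^{k+1} = O((t_\nextt - t)^{k+1})$ as $t_\nextt \to t$, and the two asymptotic regimes are equivalent because $t_r$ is a diffeomorphism. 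Combining these gives the desired $\norm{x(t_\nextt) - x_\nextt} = O((t_\nextt - t)^{k+1})$.

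The only delicate point is that the order-$k$ guarantee of a classical base solver such as RK$k$ presupposes that the driving VF is sufficiently smooth (typically $C^k$), whereas $\gF$ only stipulates $C^1$ regularity for $\varphi_r$ and $t_r$; in practice this is not an obstacle since the parameterizations used in Section~\ref{sec:use_cases} are smooth, and one can read the theorem as assuming enough regularity on $\varphi_r,t_r$ for $\bar{u}_r$ (given by \eqref{e:bar_u_t}) to inherit the regularity that the base solver's order analysis requires. Granted this, the proof is essentially a two-line chaining of Lipschitz constants around the change-of-variables diagram.
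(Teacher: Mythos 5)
Your proposal is correct and follows essentially the same route as the paper's own proof: apply the base solver's order-$k$ local truncation bound to the exact transformed trajectory $\bar{x}(r)$ (via Proposition~\ref{prop:bar_u_t}), then transport the bound back through the uniform Lipschitz constant of $\varphi_r^{-1}$ and convert $(r_\nextt-r)$ to $(t_\nextt-t)$ via the Lipschitz continuity of $r_t$. Your closing remark on the extra smoothness implicitly needed for the base solver's order analysis is a fair observation, but it does not change the argument, which matches the paper's.
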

The proof is provided in Appendix \ref{a:consistency}.
%
Therefore, as long as $t_r,\varphi_r(x)$ are in $\gF$, decreasing the base solver's step size $h\too 0$ will result in our approximated sample $x_n^\theta$ converging to the exact sample $x(1)$ of the trained model in the limit, \ie, $x^\theta_n \too x(1)$ as $n\too \infty$. 



\vspace{-5pt}
\subsection{Two use cases}\label{ss:two_use_cases}\vspace{-5pt}\label{sec:use_cases}
We instantiate the Bespoke solver framework for two cases of interest  (a full derivation is in Appendix \ref{a:parametric_solver_step}), and later prove that our choice of transformations in fact covers all ``noise scheduler'' configurations used in the standard diffusion model literature.
In our use cases, we consider a time-dependent scaling as our invertible transformation $\varphi_r$,
\begin{equation}\label{e:scale}
    \varphi_r(x)=s_r x, \text{ and its inverse }\varphi_r^{-1}(x) =  x/s_r,
\end{equation}
where $s:[0,1]\too \Real_{>0}$ is a strictly positive $C^1$ scaling function such that $s_0=1$ (\ie, satisfying the boundary condition of $\varphi$). The transformation of trajectories, \ie, equations \ref{e:bar_x_r_from_x_t} and \ref{e:x_t_from_bar_x_r}, take the form
\begin{equation}\label{e:scale_time}
    \bar{x}(r)=s_r x({t_r}),\text{ and } x(t) = \bar{x}(r_t)/{s_{t_r}}, 
\end{equation}
and we name this transformation: \emph{scale-time}. The transformed VF $\bar{u}_r$ (\eqref{e:bar_u_t}) is thus \vspace{-5pt}
\begin{equation}\label{e:bar_u_t_scale}
    \bar{u}_r(x) = \frac{\dot{s}_r}{s_r}x + \dot{t}_r s_r u_{t_r}\parr{\frac{x}{s_r}}.\vspace{-5pt}
\end{equation}

\textbf{Use case I: RK1-Bespoke.} We consider RK1 (Euler) method (\eqref{e:euler}) as the base solver $\odestep$ and denote $r_i=ih$, $i\in [n]$, where $[n]=\set{0,1,\ldots,n}$ and $h=n^{-1}$. Substituting \eqref{e:euler} in  \eqref{e:step_parametric_2}, we get from \eqref{e:step_parametric} that
\begin{equation}\label{e:step_theta_euler}
    \odestep^\theta(t_i,x_i;u_t) \vcentcolon= \parr{t_{i+1} , \frac{s_i+h\dot{s}_i}{s_{i+1}}x_i + h \dot{t}_i\frac{s_i}{s_{i+1}}u_{t_i}(x_i)  },
\end{equation}
where we denote $t_i=t_{r_i}$, $\dot{t}_i=\frac{d}{dr}\vert_{r=r_i}t_r$, $s_i=s_{r_i}$, $\dot{s}_i=\frac{d}{dr}\vert_{r=r_i}s_r$, and $i\in [n-1]$. The learnable parameters $\theta\in\Real^p$ and their constraints are derived from the fact that the functions $t_r,\varphi_r$ are members of $\gF$. There are $p=4n-1$ parameters in total: $\theta = (\theta^t,\theta^s)$, where  
\begin{equation}    \label{e:theta_euler}
\theta^t:\begin{cases} 0=t_0 < t_1 < \cdots < t_{n-1} < t_n=1 \\
 \dot{t}_0, \ldots, \dot{t}_{n-1} > 0
\end{cases}, \quad
\theta^s:\begin{cases}
    s_1, \ldots, s_n > 0 \ \ , \ \ s_0=1\\
    \dot{s}_0,\ldots,\dot{s}_{n-1} 
\end{cases}.
\end{equation}
Note that we ignore the Lipschitz constant constraints in $\gF$ when deriving the constraints for $\theta$. 

\textbf{Use case II: RK2-Bespoke.} 
Here we choose the RK2 (Midpoint) method (\eqref{e:midpoint}) as the base solver $\odestep$. Similarly to the above, substituting \eqref{e:midpoint} in \eqref{e:step_parametric_2}, we get
\begin{equation}\label{e:step_theta_midpoint}
    \odestep^\theta(t_i,x_i;u_t) \vcentcolon= \parr{t_{i+1}, \frac{s_i}{s_{i+1}}x_i + \frac{h}{s_{i+1}}\set{\frac{\dot{s}_{i+\frac{1}{2}}}{s_{i+\frac{1}{2}}}z_i + \dot{t}_{i+\frac{1}{2}}s_{i+\frac{1}{2}}u_{t_{i+\frac{1}{2}}}\parr{\frac{z_i}{s_{i+\frac{1}{2}}}}  } },
\end{equation}
where we set $r_{i+\frac{1}{2}} = r_i+\frac{h}{2}$, and accordingly $t_{i+\frac{1}{2}}$, $\dot{t}_{i+\frac{1}{2}}$, $s_{i+\frac{1}{2}}$, and $\dot{s}_{i+\frac{1}{2}}$ are defined, and
\begin{equation}\label{e:z_i}
    z_i = \parr{s_i + \frac{h}{2}\dot{s}_i}x_i + \frac{h}{2}s_i \dot{t}_i u_{t_i}(x_i).
\end{equation}
In this case there are $p=8n-1$ learnable parameters, $\theta=(\theta^t,\theta^s)\in\Real^{p}$, where 
\begin{equation}\label{e:theta_midpoint}    
\theta^t:\begin{cases} 0=t_0 < t_{\frac{1}{2}} < \cdots < t_n=1 \\
 \dot{t}_0, \dot{t}_{\frac{1}{2}}, \ldots, \dot{t}_{n-1}, \dot{t}_{n-\frac{1}{2}} > 0 
\end{cases}, \quad
\theta^s:\begin{cases}
 s_{\frac{1}{2}}, s_1, \ldots, s_n > 0 \ \ , \ \ s_0=1\\
\dot{s}_0,\dot{s}_{\frac{1}{2}},\ldots,\dot{s}_{n-\frac{1}{2}} 
\end{cases}.
\end{equation}

\textbf{Equivalence of scale-time transformations and Gaussian Paths.}\label{ss:equivalence}
We note that our \textit{scale-time} transformation covers \textit{all} possible trajectories used by diffusion and flow models trained with Gaussian distributions.
Denote by $p_t(x)$ the probability density function  of the random variable $x(t)$, where $x(t)$ is defined by a random initial sampling $x(0)=x_0\sim p(x_0)$ and solving the ODE in \eqref{e:ode}. 

When training a Diffusion or Flow Matching models, $p_t$ has the form
$p_t(x) = \int p_t(x|x_1)q(x_1)dx_1$, where $p_t(x|x_1)=\gN(x|\alpha_t x_1, \sigma_t^2 I)$. A pair of functions $\alpha,\sigma:[0,1]\too [0,1]$ satisfying  
\begin{equation}\label{e:def_scheduler}
    \alpha_0=0=\sigma_1,\ \  \alpha_1=1=\sigma_0, \ \text{ and strictly monotonic } \mathrm{snr}(t)=\alpha_t/\sigma_t
\end{equation}
is called a \emph{scheduler}\footnote{We use the convention of noise at time $t=0$ and data at time $t=1$.}. We use the term \emph{Gaussian Paths} for the collection of probability paths $p_t(x)$ achieved by different schedulers. The velocity vector field that generates $p_t(x)$ and results from zero Diffusion/Flow Matching training loss is
\begin{equation}\label{e:u_t}
    u_t(x) = \int u_t(x|x_1) \frac{p_t(x|x_1)q(x_1)}{p_t(x)}dx_1,
\end{equation}
where $u_t(x|x_1) = \frac{\dot{\sigma}_t}{\sigma_t}x + \brac{\dot{\alpha}_t - \dot{\sigma}_t\frac{\alpha_t}{\sigma_t}}x_1$, as derived in \cite{lipman2022flow}. Next, we generalize a result by \cite{kingma2021variational} and \cite{karras2022elucidating} to consider marginal sampling paths $x(t)$ defined by $u_t(x)$, and show that any two such paths are related by a scale-time transformation: 
\begin{restatable}{theorem}{equivalence}(Equivalence of Gaussian Paths and scale-time transformation)\label{thm:equivalence}
Consider a Gaussian Path defined by a scheduler $(\alpha_t,\sigma_t)$, and let $x(t)$ denote the solution of \eqref{e:ode} with $u_t$ defined in \eqref{e:u_t} and initial condition $x(0)=x_0$. Then, \vspace{-3pt}
\begin{enumerate}[(i)]
    \item For every other Gaussian Path defined by a scheduler $(\bar{\alpha}_r,\bar{\sigma}_r)$ with trajectories $\bar{x}(r)$ there exists a scale-time transformation with $s_1=1$ such that $\bar{x}(r)=s_r x(t_r)$.
    \item For every scale-time transformation with $s_1=1$ there exists a Gaussian Path defined by a scheduler $(\bar{\alpha}_r,\bar{\sigma}_r)$ with trajectories $\bar{x}(r)$ such that $s_r x(t_r)=\bar{x}(r)$.\vspace{-3pt}
\end{enumerate}
\end{restatable}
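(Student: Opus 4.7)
The plan is to use Proposition \ref{prop:bar_u_t} to reduce both parts to the statement that $y(r):=s_r x(t_r)$ satisfies the marginal ODE associated with a Gaussian Path with scheduler $(\bar\alpha_r,\bar\sigma_r)$, and then to conclude by uniqueness of ODE solutions from matching initial conditions.

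For (i), given a target scheduler $(\bar\alpha_r,\bar\sigma_r)$, I define $t_r$ as the unique solution of $\mathrm{snr}(t_r)=\bar\alpha_r/\bar\sigma_r$, which is well-posed since $\mathrm{snr}$ is a smooth strictly monotonic bijection of $[0,1]$ onto $[0,\infty]$ by \eqref{e:def_scheduler}; I then set $s_r:=\bar\sigma_r/\sigma_{t_r}=\bar\alpha_r/\alpha_{t_r}$. The SNR boundary conditions force $t_0=0$ and $t_1=1$, from which $s_0=1$ and $s_1=1$ follow automatically, giving a valid scale-time transformation. For (ii), going the other way, I define $\bar\alpha_r:=s_r\alpha_{t_r}$ and $\bar\sigma_r:=s_r\sigma_{t_r}$; the boundary conditions $s_0=s_1=1$, $t_0=0$, $t_1=1$ yield $\bar\alpha_0=0=\bar\sigma_1$ and $\bar\alpha_1=1=\bar\sigma_0$, while $\bar\alpha_r/\bar\sigma_r=\mathrm{snr}(t_r)$ is strictly monotonic as a composition of strictly monotonic maps, so $(\bar\alpha_r,\bar\sigma_r)$ qualifies as a scheduler.

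The heart of the argument, common to both directions, is to verify that the velocity field produced by the scale-time transformation of $u_t$ (given by \eqref{e:bar_u_t_scale}) coincides with the marginal velocity field \eqref{e:u_t} of the scheduler $(\bar\alpha_r,\bar\sigma_r)$. I would proceed in two steps. First, substitute $\bar\alpha_r=s_r\alpha_{t_r}$, $\bar\sigma_r=s_r\sigma_{t_r}$ into the conditional formula $\bar u_r(x\,|\,x_1)=\tfrac{\dot{\bar\sigma}_r}{\bar\sigma_r}x+[\dot{\bar\alpha}_r-\dot{\bar\sigma}_r\bar\alpha_r/\bar\sigma_r]x_1$; expanding with the chain rule and observing that the $\dot s_r\alpha_{t_r}$ contribution cancels against $\dot s_r\sigma_{t_r}\cdot\alpha_{t_r}/\sigma_{t_r}$, this rearranges to $\bar u_r(x\,|\,x_1)=\tfrac{\dot s_r}{s_r}x+\dot t_r s_r\, u_{t_r}(x/s_r\,|\,x_1)$. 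Second, the conditional Gaussian satisfies $\bar p_r(x\,|\,x_1)=s_r^{-d}p_{t_r}(x/s_r\,|\,x_1)$ by a linear change of variables, and marginalizing over $x_1$ gives $\bar p_r(x)=s_r^{-d}p_{t_r}(x/s_r)$. The $s_r^{-d}$ factors cancel in the posterior weight $\bar p_r(x\,|\,x_1)q(x_1)/\bar p_r(x)$, so plugging both identities into \eqref{e:u_t} and using linearity of the integral in $x_1$ yields $\bar u_r(x)=\tfrac{\dot s_r}{s_r}x+\dot t_r s_r\, u_{t_r}(x/s_r)$, matching \eqref{e:bar_u_t_scale} exactly.

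With the two velocity fields identified, Proposition \ref{prop:bar_u_t} says that $y(r)=s_r x(t_r)$ satisfies the ODE associated with $(\bar\alpha_r,\bar\sigma_r)$ with $y(0)=x_0=\bar x(0)$; Picard--Lindel\"of (using the Lipschitz regularity built into $\gF$) then forces $\bar x(r)=s_r x(t_r)$ for all $r\in[0,1]$, yielding both (i) and (ii). The main obstacle I anticipate is the chain-rule bookkeeping in the first step of the core calculation --- tracking $\dot{\bar\sigma}_r=\dot s_r\sigma_{t_r}+s_r\dot\sigma_{t_r}\dot t_r$ (and its analogue for $\bar\alpha_r$) and spotting the right cancellations. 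The conceptual reason the argument works is that the Jacobian factor $s_r^{-d}$ is independent of $x_1$, which is precisely what makes the posterior weight invariant under the scale-time transformation and allows the marginal velocity to factor cleanly through $u_{t_r}$.
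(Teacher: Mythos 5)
Your proposal is correct and follows essentially the same route as the paper's proof: match schedulers via $\bar\alpha_r=s_r\alpha_{t_r}$, $\bar\sigma_r=s_r\sigma_{t_r}$ (with $t_r=\mathrm{snr}^{-1}(\overline{\mathrm{snr}}(r))$), show the conditional velocity transforms as $\bar u_r(x|x_1)=\tfrac{\dot s_r}{s_r}x+\dot t_r s_r u_{t_r}(x/s_r|x_1)$, use the posterior-weight invariance $\bar p_r(x_1|x)=p_{t_r}(x_1|x/s_r)$ to pass to the marginal field, and conclude by uniqueness of ODE solutions. The only difference is cosmetic: you carry out the chain-rule cancellation directly, whereas the paper organizes the same computation with a determinant/bilinearity notation.
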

(Proof in Appendix \ref{a:equivalence}.) Assuming an ideal velocity field (\eqref{e:u_t}), \ie, the pre-trained model is optimal, this theorem implies that searching over the scale-time transformations is equivalent to searching over all possible Gaussian Paths. 
Note, that in practice we allow $s_1\ne 1$, expanding beyond the standard space of Gaussian Paths. 
Another interesting consequence of Theorem \ref{thm:equivalence} (simply plug in $t=1$) is that all ideal velocity fields in \eqref{e:u_t} define the \emph{same} coupling, \ie, joint distribution, of noise $x_0$ and data $x_1$.\vspace{-0pt}

\begin{wrapfigure}[24]{R}{0.47\textwidth}
\vspace{-24pt}
    \begin{minipage}{0.45\textwidth}
      \begin{algorithm}[H]
        \caption{Bespoke training.}\label{alg:bes_training}
        \begin{algorithmic}
            \Require pre-trained $u_t$, number of steps $n$
            \State initialize $\theta\in \Real^p$         
            \While{not converged}
            \State $x_0 \sim p(x_0)$ \Comment{{\color{cyan}sample noise}}  
            \State $x(t) \gets$ solve ODE \ref{e:ode} \Comment{{\color{cyan}GT path}}  
            \State $\gL \gets 0$ \Comment{{\color{cyan}init loss}}
            \For{$i=0,...,n-1$} 
            \State $x^\theta_{i+1}\gets\odestep_x^{\theta}\parr{x^{\aux}_i(t_{i}),t_{i};u_t}$
            \State $\gL \hspace{-1pt} \mathrel{{+}{=}} \hspace{-1pt} M^\theta_{i+1}\norm{x_{i+1}^{\aux}(t_{i+1}) - x^\theta_{i+1}}$
            \EndFor
            \State $\theta \gets \theta -\gamma \nabla_\theta\gL$  \Comment{{\color{cyan}optimization step}}
            \EndWhile
        \State \Return $\theta$
        \end{algorithmic}
      \end{algorithm}
    \end{minipage}
    \begin{minipage}{0.45\textwidth}      
      \begin{algorithm}[H]
        \caption{Bespoke sampling.}\label{alg:bes_sampling}
        \begin{algorithmic}
            \Require pre-trained $u_t$, trained $\theta$ 
            \State $x_0 \sim p(x_0)$ \Comment{{\color{cyan}sample noise}} 
            \State $r_0\gets 0$,\ $\bar{x}_0\gets x_0$ \Comment{{\color{cyan}initial conditions}} 
            \For{$i=0,1,\ldots,n-1$}
            \State $(r_{i+1},\bar{x}_{i+1}) \gets \odestep(r_i,\bar{x}_i;\bar{u}^\theta_r)$  
        \EndFor
        \State \Return $\varphi^{-1}_1(\bar{x}_n)$
        \end{algorithmic}
      \end{algorithm}
    \end{minipage}
  \end{wrapfigure}\vspace{-3pt}
\subsection{RMSE upper bound loss}\label{ss:rmse_bound}
The second component of our framework is a tractable loss that bounds the RMSE loss in \eqref{e:rmse} while enabling parallel computation of the loss over each step of the Bespoke solver. 
To construct the bound, let us fix an initial condition $x_0\sim p(x_0)$ and denote as before $x(1)$ to be the exact solution of the sample path (\eqref{e:ode}). Furthermore, consider a candidate solver $\odestep^\theta$, and denote its $t$ and $x$ coordinate updates by $\odestep^\theta = (\odestep_t^\theta, \odestep_x^\theta)$. Applying Algorithm \ref{alg:odesolve} with $\odestep^\theta$ and initial condition $x_0$ produces a series of approximations $x^\theta_i$, each corresponds to a  time step $t_i$, $i\in [n]$. Lastly, we denote by 
\begin{equation}\label{e:ei_di}
    e^\theta_i = \norm{x(t_i)-x^\theta_i}, \quad d^\theta_i=\norm{x(t_i) - \odestep_x^\theta(t_{i-1},x(t_{i-1}); u_t)}
\end{equation}
the \emph{global} and \emph{local} truncation errors at time $t_i$, respectively. Our goal is to bound the global error at the final time  step $t_n=1$, \ie, $e^\theta_n$. Using the update step definition (\eqref{e:step}) and triangle inequality we can bound 
\begin{align*}
     e^\theta_{i+1}\hspace{-1pt}\leq\hspace{-1pt} \norm{x(t_{i+1})-\odestep^\theta_x(t_{i},x(t_{i});u_t)} \hspace{-2pt}+\hspace{-2pt} \norm{\odestep_x^\theta(t_{i},x(t_{i});u_t) - \odestep^\theta_x(t_{i},x^\theta_{i};u_t)}  \leq d^\theta_{i+1} + L^\theta_{i} e^\theta_{i},
\end{align*}
where $L^\theta_{i}$ is defined to be the Lipschitz constant of the function $\odestep_x^\theta(t_i,\cdot\,;u_t)$. To simplify notation we set by definition $L^\theta_n=1$ (this is possible since $L^\theta_n$ does not actually participate in the bound). Using the above bound $n$ times and noting that $e^\theta_0=0$ we get 
\begin{equation}\label{e:bound}
    e^\theta_n \leq \sum_{i=1}^n M^\theta_i d^\theta_i, \text{ where } M^\theta_i = \prod_{j=i}^n L^\theta_j.
\end{equation}
Motivated by this bound we define our RMSE-Bespoke loss: 
\begin{center}			
    \colorbox{mygray} {		
      \begin{minipage}{0.977\linewidth} 	
       \centering
       \vspace{-8pt}
        \begin{equation}\label{e:loss_bes}
\gL_{\text{bes}}(\theta) = \E_{x_0\sim p(x_0)}  \sum_{i=1}^n M^\theta_i d^\theta_i,   \end{equation}
      \end{minipage}}			
\end{center}
where $d^\theta_i$ is defined in \eqref{e:ei_di} and $M^\theta_i$ defined in \eqref{e:bound}. The constants $L^\theta_i$ depend both on the parameters $\theta$ and the Lipschitz constant $L_u$ of the network $u_t$. As $L_u$ is difficult to estimate, we treat $L_u$ as a hyper-parameter, denoted $L_\tau$ (in all experiments we use $L_\tau=1$), and compute $L^\theta_i$ in terms of $\theta$ and $L_\tau$ for our two Bespoke solvers, RK1 and RK2, in Appendix \ref{a:lipschitz_of_step}. 
Assuming that $L_\tau\geq L_u$, an immediate consequence of the bound in \eqref{e:bound} is that the RMSE-bespoke loss bounds the RMSE loss, \ie, the global truncation error defined in \eqref{e:rmse},
\begin{equation}
    \gL_{\text{RMSE}}(\theta) \leq \gL_{\text{bes}}(\theta).
\end{equation}

%
%
\textbf{Implementation of the RMSE-Bespoke loss.}\label{ss:implementaiton_of_the_rmse_bespoke_loss}
We provide pseudocode for Bespoke training and sampling in Algorithms \ref{alg:bes_training} and \ref{alg:bes_sampling}, respectively. During training, we need to have access to the GT path $x(t)$ at times $t_i$, $i\in [n]$, which we compute with a generic solver. The Bespoke loss is constructed by plugging $\odestep^\theta$ (equations \ref{e:step_theta_euler} or \ref{e:step_theta_midpoint}) into $d_i$ (\eqref{e:ei_di}). The gradient $\nabla_\theta \gL_{\text{bes}}(\theta)$ requires the derivatives $\partial x(t_i) / \partial t_i$. Computing the derivatives of $x(t_i)$ can be done using the ODE it obeys, \ie, $\dot{x}(t_i)=u_{t_i}(x_i)$. Therefore, a simple way to write the loss ensuring correct gradients w.r.t.~$t_i$ is replace $x(t_i)$ with $x^{\aux}_i(t_i)$ where 
\begin{equation}\label{e:x_aux}
   \hspace{-2pt} x^{\aux}_i(t)\hspace{-1pt} = \hspace{-1pt} x(\llbracket t_i \rrbracket) + u_{\llbracket t_{i}\rrbracket}\hspace{-1pt}\left(x(\llbracket t_i \rrbracket)\right)(t - \llbracket t_i \rrbracket),
\end{equation}
where $\llbracket\cdot\rrbracket$ denotes the stop gradient operator; \ie, $x_i^{\aux}(t)$ is linear in $t$ and its value and derivative w.r.t.~$t$ coincide with that of $x(t_i)$ at time $t=t_i$. Full details are provided in Appendix \ref{a:impl_details}. 



\section{Previous Work}

Diffusion models \citep{sohl2015deep,ho2020denoising} are a powerful paradigm for generative models that for sampling require solving a Stochastic Differential Equation~(SDE), or its associated ODE, describing a (deterministic) flow process \citep{song2020denoising}. Diffusion models have been generalized to paradigms directly aiming to learn a deterministic flow \citep{lipman2022flow,albergo2022si,liu2022flow}. Flow-based models are efficient to train but costly to sample. Previous works had tackled the sample complexity of flow models by building \emph{dedicated solver schemes} and \emph{distillation}.  


\textbf{Dedicated Solvers.}
This line of works introduced specialized ODE solvers exploiting the structure of the sampling ODE. \citet{lu2022dpm,zhang2022fast} utilize the semi-linear structure of the score/$\eps$-based sampling ODE to adopt a method of exponential integrators. 
\citep{zhang2023improved} further introduced refined error conditions to fulfill desired order conditions and achieve better sampling, while \citet{lu2022dpm-pp} adapted the method to guided sampling. 
\citet{karras2022elucidating} suggested transforming the ODE to sample a different Gaussian Path for more efficient sampling, while also suggesting non-uniform time steps. 

In principle, all of these methods effectively proposed---based on intuition and heuristics---to apply a particular scale-time transformation to the sampling trajectories of the pre-trained model for more efficient sampling, while our Bespoke solvers search over the entire space of scale-time transformation for the \textit{optimal} transformation of a particular trained model.

Other works also aimed at learning the solver: \citet{dockhorn2022genie}~(GENIE) introduced a higher-order solver, and distilled the necessary JVP for their method; \citet{watson2021learning} (DDSS) optimized a perceptual loss considering a family of generalized Gaussian diffusion models; \citet{lam2021bilateral} improved the denoising process using bilateral filters, thereby indirectly affecting the efficiency of the ODE solver; \citet{duan2023optimal} suggested to learn a solver for diffusion models by replacing every other function evaluation by a linear subspace projection. 
Our Bespoke Solvers belong to this family of learnt solvers, however, they are consistent by construction~(Theorem ~\ref{thm:consistency}) and minimize a bound on the solution error~(for the appropriate Lipschitz constant parameter).

\textbf{Distillation.} 
Distillation techniques aim to simplify sampling from a trained model by fine-tuning or training a new model to produce samples with fewer function evaluations. \citet{luhman2021knowledge} directly regressed the trained model's samples, while \begin{wrapfigure}[10]{r}{0.26\textwidth}
  \begin{center}\vspace{-10pt}
    \includegraphics[width=0.25\textwidth]{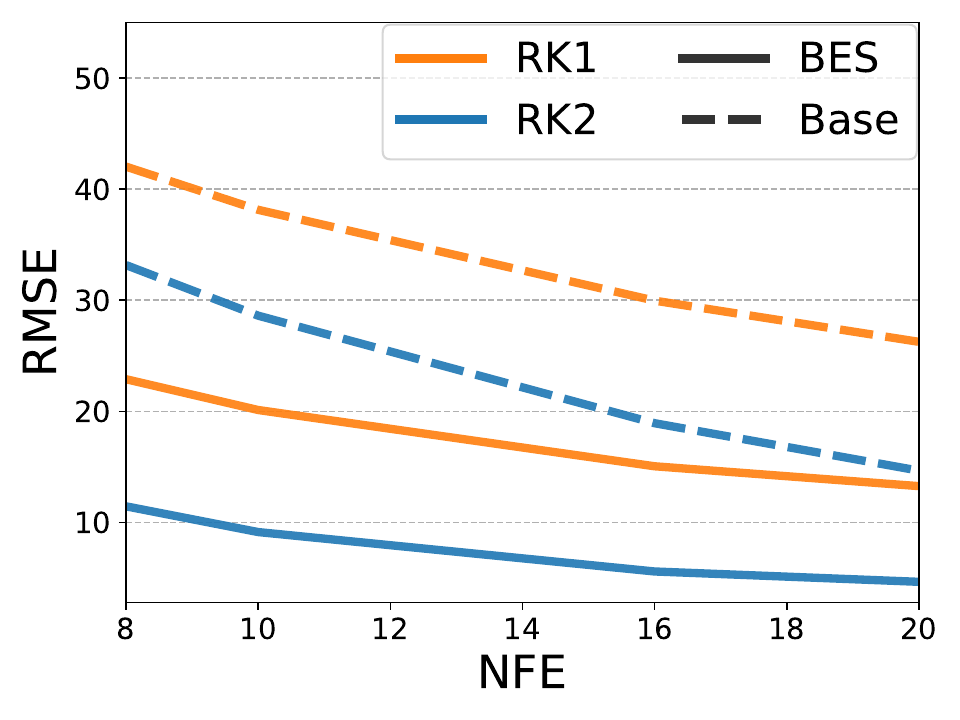}\vspace{-15pt}
  \end{center}
  \caption{Bespoke RK1/2 solvers on ImageNet-64 FM-OT.}
  \label{fig:euler_vs_midpoint}
\end{wrapfigure}\citet{salimans2022progressive,meng2023distillation} built a sequence of models each reducing the sampling complexity by a factor of 2. \citet{song2023consistency} distilled a consistency map that enables large time steps in the probability flow; \citet{liu2022flow} retrained a flow-based method based on samples from a previously trained flow. \citet{yang2023diffusion} used distillation to reduce model size while maintaining the quality of the generated images. The main drawbacks of distillation methods is their long training time \citep{salimans2022progressive}, and lack of consistency, \ie, they do not sample from the distribution of the pre-trained model. \vspace{-0pt}

\section{Experiments}\vspace{-0pt}


\begin{wrapfigure}[12]{r}{0.26\textwidth}
\vspace{-15pt}
  \begin{center}
    \includegraphics[width=0.25\textwidth]{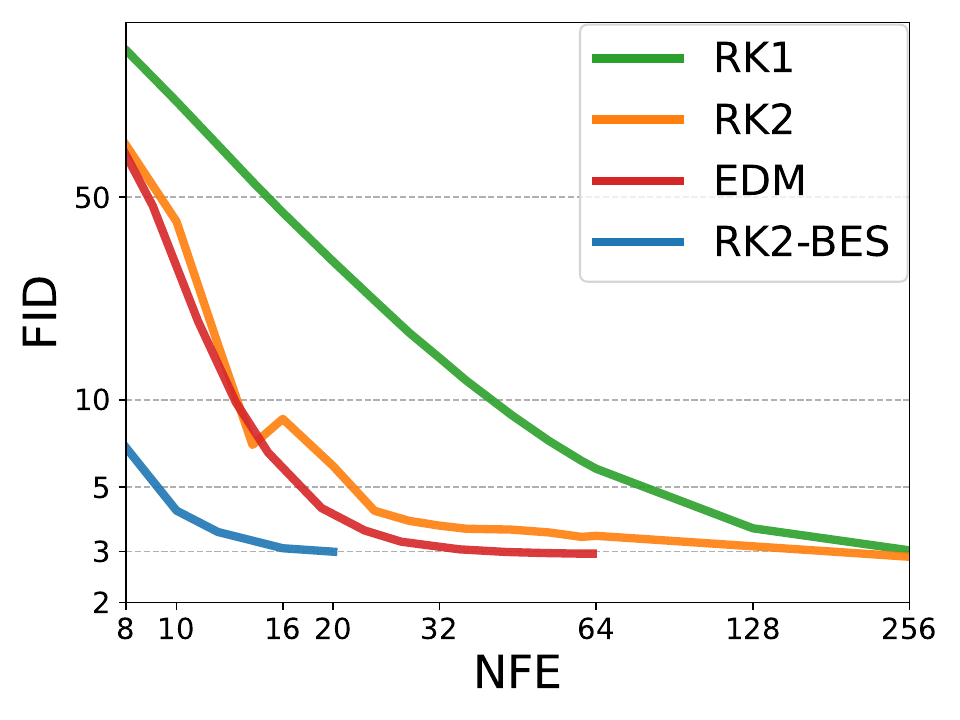}\vspace{-15pt}
  \end{center}
  \caption{Bespoke solver applied to EDM's \citep{karras2022elucidating} CIFAR10 published model. }
  \label{fig:bes_vs_edm}
\end{wrapfigure}
\textbf{Models and datasets.} Our method works with pre-trained models: we use the pre-trained CIFAR10~\citep{krizhevsky2009learning} model of \citep{song2020score} with published weights from EDM \citep{karras2022elucidating}. Additionally, we trained diffusion/flow models on the datasets: CIFAR10, AFHQ-256~\citep{choi2020stargan} and ImageNet-64/128~\citep{deng2009imagenet}. Specifically, for ImageNet, as recommended by the authors \citep{imagenet_website} we used the official \emph{face-blurred} data (64$\times$64 downsampled using the open source preprocessing scripts from \citet{chrabaszcz2017downsampled}). For diffusion models, we used an $\eps$-Variance Preserving ($\eps$-VP) parameterization and schedule \citep{ho2020denoising,song2020score}. For flow models, we used Flow Matching~\citep{lipman2022flow} with Conditional Optimal Transport (FM-OT), and Flow Matching/$v$-prediction with Cosine Scheduling (FM/$v$-CS) \citep{salimans2022progressive,albergo2022si}. Note that Flow Matching  methods directly provide the velocity vector field $u_t(x)$, and we converted $\eps$-VP to a velocity field using the identity in \citet{song2020score}. For conditional sampling we apply classifier free guidance~\citep{ho2022classifier}, so each evaluation uses two forward passes.  

\begin{wraptable}[19]{r}{0.37\textwidth}\vspace{-15pt}
\setlength{\tabcolsep}{2.0pt}
\centering
\resizebox{0.35\textwidth}{!}{
\begin{tabular}{llcc}
\toprule
& Method                    &  NFE & FID   \\ \midrule 
\multirow{3}{*}{\raisebox{-0.9\height}{\rotatebox[origin=c]{90}{\textbf{{Distillation}}}}} &
\cite{zheng2023fast} & 1   & 3.78              \\[4pt] 
& \cite{luhman2021knowledge}      & 1   & 9.36              \\ [4pt]
& 
\begin{tabular}[t]{c}
    \cite{salimans2022progressive} \\  \\  
\end{tabular} 
&
\begin{tabular}[t]{c}
    1\\ 2\\ 8 
\end{tabular}     
& 
\begin{tabular}[t]{c}
    9.12\\ 4.51\\ 2.57
\end{tabular}    \\ \midrule 
\multirow{8}{*}{\raisebox{-2.1\height}{\rotatebox[origin=c]{90}{\textbf{Dedicated solvers}}}} & 
DDIM\citep{song2020denoising} & 
\begin{tabular}[t]{c}
    10 \\ 20  
\end{tabular}     
& 
\begin{tabular}[t]{c}
    13.36 \\ 6.84 
\end{tabular}    \\[13pt] 
& DPM~\citep{lu2022dpm} &
\begin{tabular}[t]{c}
    10  \\ 20  
\end{tabular}     
& 
\begin{tabular}[t]{c}
    4.7\\ 3.99
\end{tabular}  \\[13pt] 
& DEIS~\citep{zhang2022fast} &
\begin{tabular}[t]{c}
    10\\ 20
\end{tabular}     
& 
\begin{tabular}[t]{c}
    4.17\\ 2.86
\end{tabular}  \\[13pt] 
& GENIE~\citep{dockhorn2022genie} &
\begin{tabular}[t]{c}
    10\\  20
\end{tabular}     
& 
\begin{tabular}[t]{c}
    5.28\\ 3.94
\end{tabular}  \\[13pt]  
& DDSS~\citep{watson2021learning} &
\begin{tabular}[t]{c}
    10\\ 20
\end{tabular}     
& 
\begin{tabular}[t]{c}
    7.86\\ 4.72
\end{tabular}  \\[13pt] 

& \textbf{\textit{RK2-BES}} \qquad 
\begin{tabular}[t]{r}
     $\eps$-VP\\ $\eps$-VP
\end{tabular}
& 
\begin{tabular}[t]{c}
     10\\ 20
\end{tabular}
&
\begin{tabular}[t]{c}
     3.31\\ 2.75
\end{tabular}  \\[13pt] 
& \textbf{\textit{RK2-BES}} \qquad 
\begin{tabular}[t]{r}
     FM/$v$-CS\\FM/$v$-CS
\end{tabular}
&  
\begin{tabular}[t]{c}
     10\\ 20
\end{tabular}
&
\begin{tabular}[t]{c}
     2.89\\ 2.64
\end{tabular} \\[13pt] 
& \textbf{\textit{RK2-BES}} \qquad 
\begin{tabular}[t]{r}     
    FM-OT \\ FM-OT
\end{tabular}
& 
\begin{tabular}[t]{c}
     10\\ 20
\end{tabular}
&
\begin{tabular}[t]{c}
     \textbf{2.73}\\ \textbf{2.59}
\end{tabular} \\                      
\bottomrule
\end{tabular}
}
\vspace{-0.2cm}
\caption{CIFAR10 sampling.} 
\label{tab:cifar10_comprison}
\end{wraptable}  
\textbf{Bespoke hyper-parameters and optimization.} 
As our base ODE solvers, we tested RK1 (Euler) and RK2 (Midpoint). Furthermore, we have two hyper-parameters $n$ -- the number of steps, and $L_\tau$ -- the Lipschitz constant from lemmas \ref{lem:bound_L_euler}, \ref{lem:bound_L_midpoint}. We train our models with $n \in \set{4,5,8,10}$ steps and fix $L_\tau=1$. Ground Truth (GT) sample trajectories, $x(t_i)$, are computed with an adaptive RK45 solver~\citep{shampine1986some}. We compute FID~\citep{heusel2017GANs} and validation RMSE (\eqref{e:rmse}) is computed on a set of 10K fresh noise samples $x_0\sim p(x_0)$; Figure \ref{fig:rmse_vs_iteration_imagenet} depicts an example of RMSE vs.~training iterations for different $n$ values. Unless otherwise stated, below we report results on best FID iteration and show samples on best RMSE validation iteration. Figures \ref{fig_a:scheme_imagenet128}, \ref{fig_a:scheme_imagenet64}, \ref{fig_a:scheme_cifar10} depict the learned Bespoke solvers' parameters $\theta$ for the experiments presented below; note the differences across the learned schemes for different models and datasets.  

\textbf{Bespoke RK1 vs.~RK2.} We compared RK1 and RK2 and their Bespoke versions on CIFAR10 and ImageNet-64 models (FM-OT and FM/$v$-CS). Figure \ref{fig:euler_vs_midpoint} and Figures \ref{fig_a:rk1_vs_rk2_cifar10}, \ref{fig_a:rk1_vs_rk2_imagenet64} show best validation RMSE (and corresponding PSNR). Using the same budget of function evaluations RK2/RK2-Bespoke produce considerably lower RMSE validation compared to RK1/RK1-Bespoke, respectively. We therefore opted for RK2/RK2-Bespoke for the rest of the experiments below. 

\textbf{CIFAR10.} We tested our method on the pre-trained CIFAR10 $\eps$-VP model \citep{song2020score} that was released by EDM~\citep{karras2022elucidating}. In Figure \ref{fig:bes_vs_edm}, we compare our RK2-Bespoke solver to the EDM method, which corresponds to a particular choice of scaling, $s_i$, and time step discretization, $t_i$. Euler and EDM curves computed as originally implemented in EDM, where the latter achieves FID=$3.05$ at 35 NFE, comparable to the result reported by EDM. Using our RK2-Bespoke Solver, we achieved an FID of $2.99$ with 20 NFE, providing a $42\%$ reduction in NFE. Additionally, we tested our method on three models we trained ourselves on CIFAR10, namely $\eps$-VP, FM/$v$-CS, and FM-OT. Table \ref{tab:cifar10_comprison} compares our best FID for each model with different baselines demonstrating superior generation quality for low NFE among all dedicated solvers; \eg, for NFE=10 we improve the FID of the runner-up by over 34\%  (from 4.17 to 2.73) using RK2-Bespoke FM-OT model. Table \ref{tab:cifar10_a} lists best FID values for different NFE, along with the ground truth FID for the model and the fraction of time Bespoke training took compared to the original model's training time; with 20 NFE, our RK2-Bespoke solvers achieved FID within 8\%, 1\%, 1\% (resp.)~of the GT solvers' FID. Although close, our Bespoke solver does not match distillation's performance, however our approach is much faster to train, requiring $\sim$1\% of the original GPU training time with our naive implementation that re-samples the model at each iteration. Figure \ref{fig:nfe_vs_fid_cifar10} shows FID/RMSE/PSNR vs.~NFE, where PSNR is computed w.r.t.~the GT solver's samples.

\begin{figure}[t]
    \centering
    \begin{tabular}{@{\hspace{0pt}}c@{\hspace{0pt}}c@{\hspace{0pt}}c@{\hspace{0pt}}c@{\hspace{0pt}}}
    {\quad \ \scriptsize ImageNet-64: $\eps$-pred} & {\quad \ \scriptsize ImageNet-64: FM/$v$-CS} & {\quad \ \scriptsize ImageNet-64: FM-OT}  & {\quad \ \scriptsize ImageNet-128: FM-OT} \\
    \includegraphics[width=0.25\textwidth]{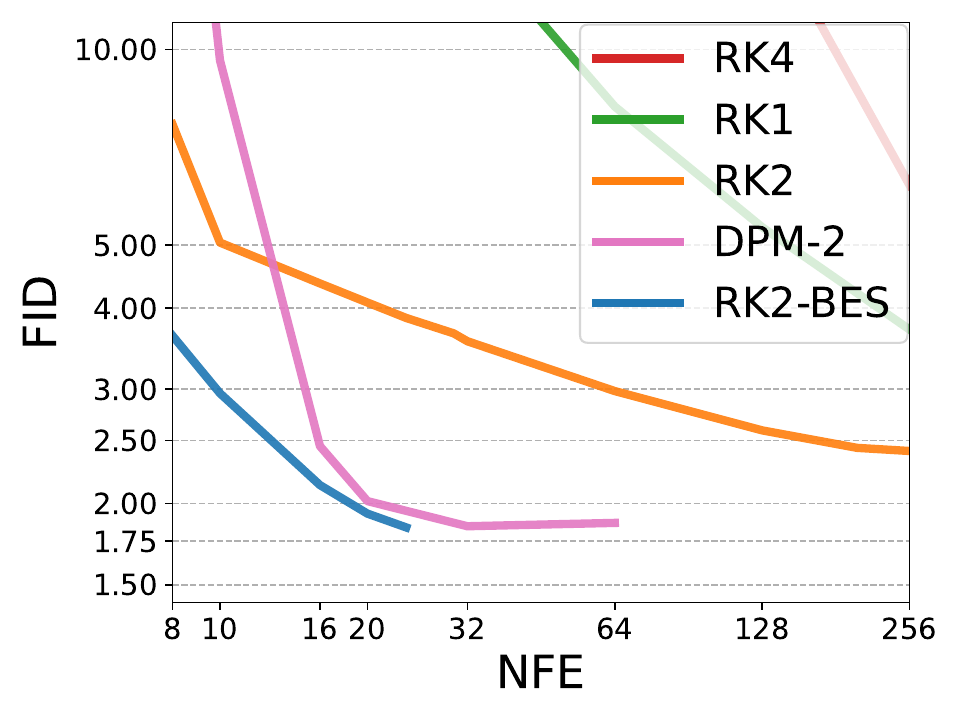} & \includegraphics[width=0.25\textwidth]{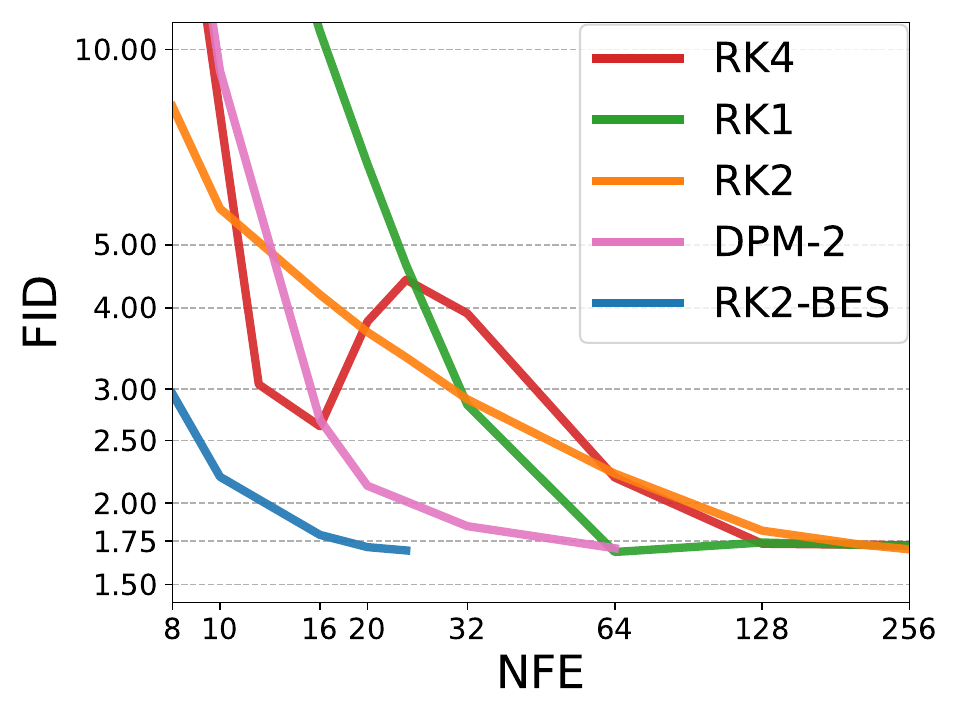} & \includegraphics[width=0.25\textwidth]{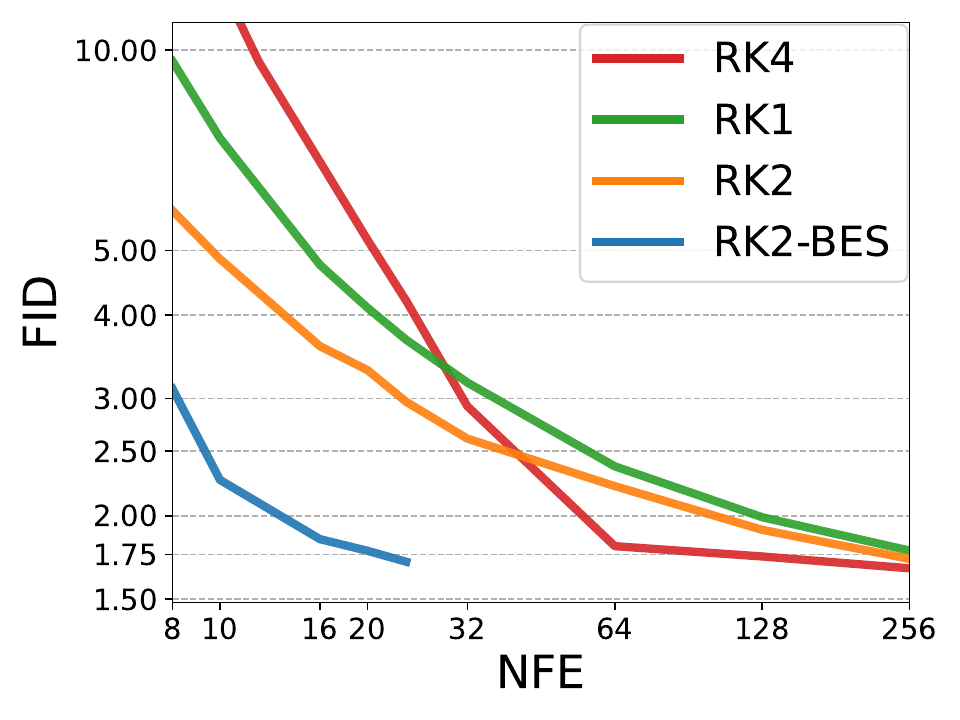} &
    \includegraphics[width=0.25\textwidth]{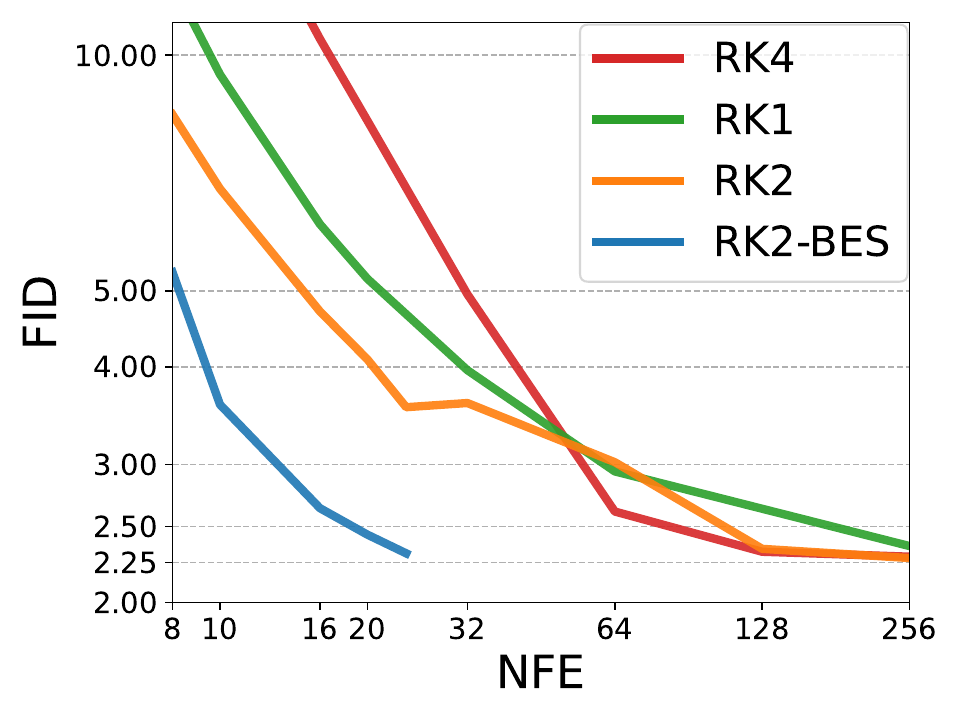}\\
     \includegraphics[width=0.25\textwidth]{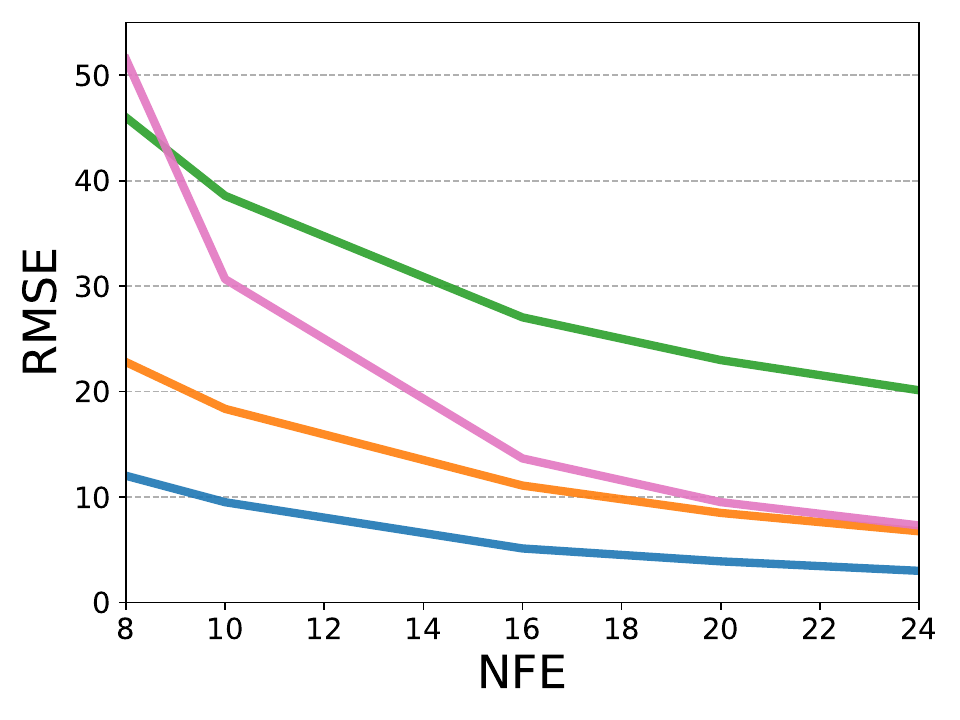}  & \includegraphics[width=0.25\textwidth]{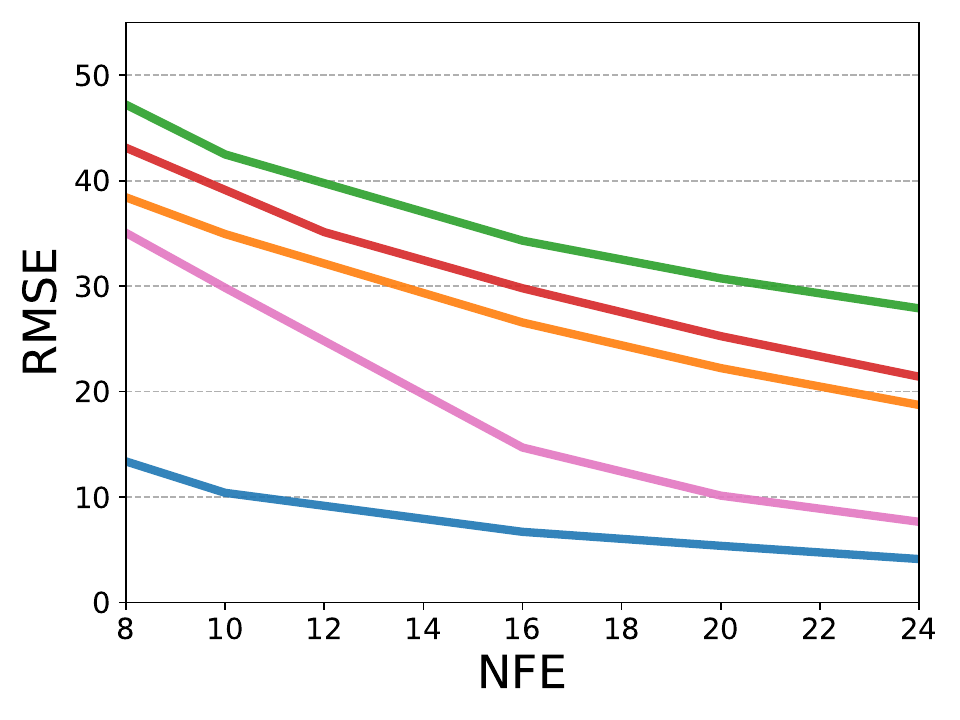} & \includegraphics[width=0.25\textwidth]{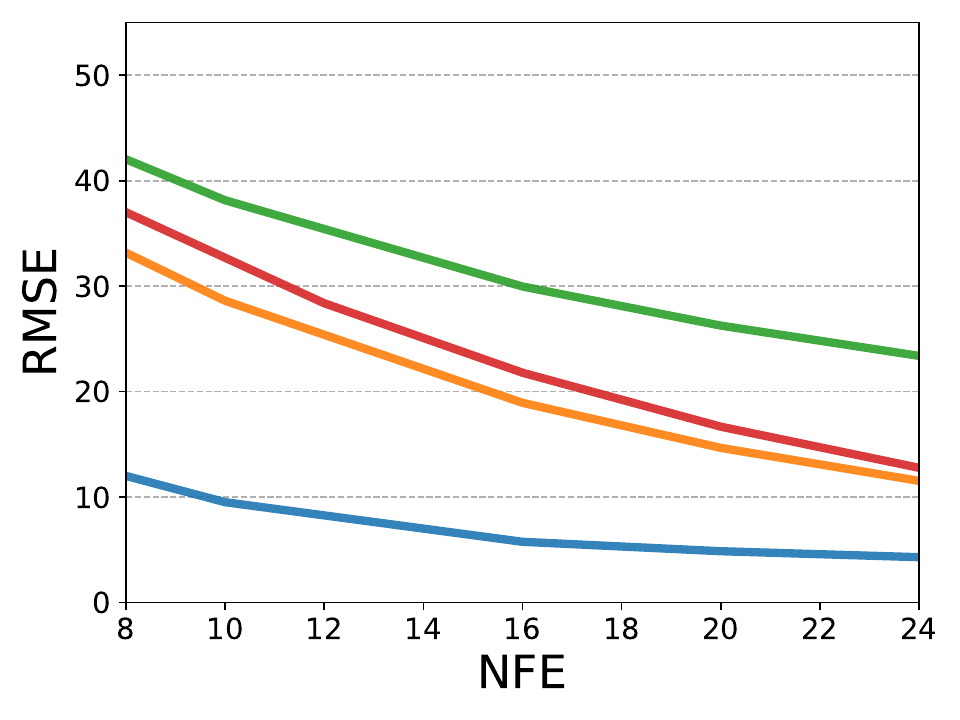} &
     \includegraphics[width=0.25\textwidth]{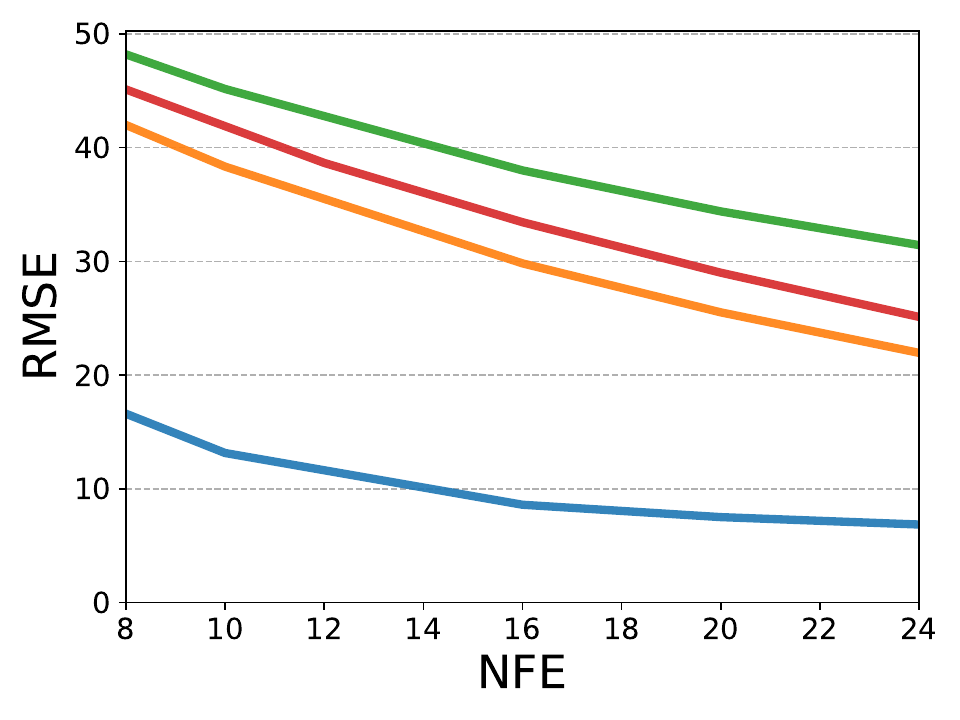}
     \vspace{-10pt}
\end{tabular}    
     \caption{Bespoke RK2 solvers vs.~RK1/2/4 solvers on CIFAR-10 ImageNet-64, and Image-Net128: FID vs.~NFE (top row), and RMSE vs.~NFE (bottom row). PSNR vs.~NFE is shown in Figure  \ref{fig:nfe_vs_psnr_imagenet}. \vspace{-10pt}} 
    \label{fig:nfe_vs_fid_main}
\end{figure}

\begin{wraptable}[20]{r}{0.44\textwidth}\vspace{-10pt}
\setlength{\tabcolsep}{2.0pt}
\centering
\resizebox{0.43\textwidth}{!}{
\begin{tabular}{lcccc} 
 \toprule
  ImageNet-64 &  NFE & FID & GT-FID/\% & \%Time   \\ \midrule 
 \textbf{\textit{RK2-BES}} \ 
 \begin{tabular}[t]{r}
       $\eps$-VP \\$\eps$-VP \\ $\eps$-VP\\ $\eps$-VP\\ $\eps$-VP
 \end{tabular}
 & 
 \begin{tabular}[t]{c}
      8\\ 10\\ 16\\ 20\\ 24
 \end{tabular}
 &
 \begin{tabular}[t]{c}
      3.63\\ 2.96\\ 2.14\\ 1.93\\ 1.84
 \end{tabular}  
 &
 \begin{tabular}[t]{c}
      1.83 
 \end{tabular} / 
 \begin{tabular}[t]{c}
      229\\ 163\\ 120\\ 109\\ 101
 \end{tabular}  
 &
 \begin{tabular}[t]{c}
      3.5\\ 3.6\\ 3.6\\ 3.5\\ 3.6
 \end{tabular}  \\[35pt] 
  \textbf{\textit{RK2-BES}} \ 
 \begin{tabular}[t]{r}
       FM/$v$-CS \\ FM/$v$-CS \\ FM/$v$-CS \\ FM/$v$-CS \\ FM/$v$-CS 
 \end{tabular}
 & 
 \begin{tabular}[t]{c}
      8\\ 10\\ 16\\ 20\\ 24
 \end{tabular}
 &
 \begin{tabular}[t]{c}
      2.95\\ 2.20\\ 1.79\\ 1.71\\ 1.69
 \end{tabular}  
 &
 \begin{tabular}[t]{c}
      1.68
 \end{tabular} /
 \begin{tabular}[t]{c}
      176\\ 131\\ 107\\ 102\\ 101
 \end{tabular}  
 &
 \begin{tabular}[t]{c}
      1.4\\ 1.6\\ 1.8\\ 1.5\\ 2.0
 \end{tabular}  \\[35pt] 
 \textbf{\textit{RK2-BES}} \ 
 \begin{tabular}[t]{r}
       FM-OT \\ FM-OT \\ FM-OT \\ FM-OT \\ FM-OT 
 \end{tabular}
 & 
 \begin{tabular}[t]{c}
      8\\ 10\\ 16\\ 20\\ 24
 \end{tabular}
 &
 \begin{tabular}[t]{c}
     3.10\\ 2.26\\ 1.84\\ 1.77\\ 1.71
 \end{tabular}
 &
 \begin{tabular}[t]{c}
      1.68
 \end{tabular} / 
 \begin{tabular}[t]{c}
      185\\ 135\\ 110\\ 105\\ 102
 \end{tabular}  
 &
 \begin{tabular}[t]{c}
      1.6\\ 1.6\\ 1.7\\ 1.7\\ 1.8
 \end{tabular}  \\ 
  \toprule
  ImageNet-128 &  NFE & FID & GT-FID/\%  & \%Time  \\ \midrule 
\textbf{\textit{RK2-BES}} \ 
 \begin{tabular}[t]{r}
       FM-OT \\ FM-OT \\ FM-OT \\ FM-OT \\ FM-OT 
 \end{tabular}
 & 
 \begin{tabular}[t]{c}
      8\\ 10\\ 16\\ 20 \\ 24\\
 \end{tabular}
 &
 \begin{tabular}[t]{c}
      5.28\\ 3.58\\ 2.64\\ 2.45\\ 2.31\\
 \end{tabular}
 &
 \begin{tabular}[t]{c}
      2.30 
 \end{tabular}
 /
 \begin{tabular}[t]{c}
      230\\ 156\\ 115\\ 107\\ 101
 \end{tabular}  
 &
 \begin{tabular}[t]{c}
      1.1\\ 1.1\\ 1.2\\ 1.2\\ 1.2\\ 
 \end{tabular} \\ 
 \bottomrule  
\end{tabular}} \vspace{-5pt}
\caption{ImageNet Bespoke solvers. 
}\label{tab:imagenet_64_and_128}
\end{wraptable} 
\textbf{ImageNet 64/128.}
We further experimented with the more challenging ImageNet-64$\times$64 / 128$\times$128 datasets. For ImageNet-64 we also trained 3 models as described above. For ImageNet-128, due to computational budget constraints, we only trained FM-OT (training requires nearly 2000 GPU days). Figure \ref{fig:nfe_vs_fid_main} compares RK2-Bespoke to various baselines including DPM $2^{\text{nd}}$ order~\citep{lu2022dpm}. As can be seen in the graphs, the Bespoke solvers improve both FID and RMSE. Interestingly, the Bespoke sampling takes all methods to similar RMSE levels, a fact that can be partially explained by Theorem \ref{thm:equivalence}. In Table \ref{tab:imagenet_64_and_128}, similar to Table \ref{tab:cifar10_a}, we report best FID per NFE for the Bespoke solvers we trained, the GT FID of the model, the \% from GT achieved by the Bespoke solver, and the fraction of GPU time (in \%) it took to train this Bespoke solver compared to training the original pre-trained model. Lastly, Figures \ref{fig:imagenet-64}, \ref{fig:imagenet-128}, \ref{fig:imagenet64_fm_ot_a}, \ref{fig:imagenet64_fm_cs_a}, \ref{fig:imagenet64_eps_pred_a}, \ref{fig:imagenet_128_a_1}, \ref{fig:imagenet_128_a_2} depict qualitative sampling examples for RK2-Bespoke and RK2 solvers. Note the significant improvement of fidelity in the Bespoke samples to the ground truth. 

\textbf{AFHQ-256.} We tested our method on the AFHQ dataset~\citep{choi2020starganv2} resized to 256$\times$256 where as pre-trained model we used a FM-OT model we trained as described above. Figure \ref{fig:nfe_vs_metrics_afhq} depicts PSNR/RMSE curves for the RK2-Bespoke solvers and baselines, and Figures \ref{fig:imagenet-128} and \ref{fig:afhq_a} show qualitative sampling examples for RK2-Bespoke and RK2 solvers. Notice the high fidelity of the Bespoke generation samples. 


\textbf{Ablations.} We conducted two ablation experiments. First, Figure \ref{fig_a:scale_time_ablation} shows the effect of training only time transform (keeping $s_r\equiv 1$) and scale transformation (keeping $t_r=r$). Note that although the time transform is more significant than scale transform, we find that incorporating scale improves RMSE for low NFE (which aligns with Theorem \ref{thm:consistency}), and improve FID. Second, Figure \ref{fig_a:transferred_ablation} shows application of RK2-Bespoke solver trained on ImageNet-64 applied to ImageNet-128. The transferred solver, while sub-optimal compared to the Bespoke solver, still considerably improves the RK2 baseline in RMSE and PSNR, and improves FID for higher NFE (16,20). Reusing Bespoke solvers can potentially be a cheap option to improve solvers. \vspace{-5pt}

    

\begin{figure}[t]
\centering
\begin{tabular}{@{\hspace{2pt}}c@{\hspace{1pt}}c@{\hspace{1pt}}c@{\hspace{1pt}}c@{\hspace{2pt}}|@{\hspace{2pt}}c@{\hspace{1pt}}c@{\hspace{1pt}}c@{\hspace{1pt}}c@{\hspace{1pt}}c@{\hspace{2pt}}|@{\hspace{2pt}}c@{\hspace{1pt}}c@{\hspace{1pt}}c@{\hspace{1pt}}c@{\hspace{1pt}}c@{\hspace{0pt}}} 
{\scriptsize GT} & {\scriptsize NFE=20} & {\scriptsize NFE=10} & {\scriptsize NFE=8} & {\scriptsize GT} & {\scriptsize NFE=20} & {\scriptsize NFE=10} & {\scriptsize NFE=8} & {\scriptsize GT} & {\scriptsize NFE=20} & {\scriptsize NFE=10} & {\scriptsize NFE=8} \\ 
\rotatebox[origin=c]{90}{\scriptsize FM-OT}
\cincludegraphics[width=0.077 \textwidth]{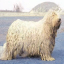} & 
    \begin{tabular}{@{\hspace{0pt}}c@{\hspace{0pt}}}    
        \includegraphics[width=0.077 \textwidth]{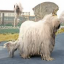} \\
        \includegraphics[width=0.077 \textwidth]{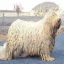}
    \end{tabular}
    &
    \begin{tabular}{@{\hspace{0pt}}c@{\hspace{0pt}}}   
        \includegraphics[width=0.077 \textwidth]{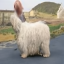} \\
        \includegraphics[width=0.077 \textwidth]{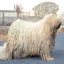}
    \end{tabular}
    &\begin{tabular}{@{\hspace{0pt}}c@{\hspace{0pt}}}  
        \includegraphics[width=0.077 \textwidth]{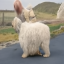} \\
        \includegraphics[width=0.077 \textwidth]{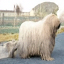}
    \end{tabular}
    &
\cincludegraphics[width=0.077 \textwidth]{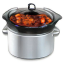} & 
    \begin{tabular}{@{\hspace{0pt}}c@{\hspace{0pt}}}   
        \includegraphics[width=0.077 \textwidth]{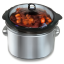} \\
        \includegraphics[width=0.077 \textwidth]{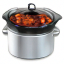}
    \end{tabular}
    &
    \begin{tabular}{@{\hspace{0pt}}c@{\hspace{0pt}}}
        \includegraphics[width=0.077 \textwidth]{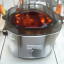} \\
        \includegraphics[width=0.077 \textwidth]{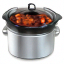}
    \end{tabular}
    &\begin{tabular}{@{\hspace{0pt}}c@{\hspace{0pt}}}   
        \includegraphics[width=0.077 \textwidth]{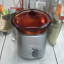} \\
        \includegraphics[width=0.077 \textwidth]{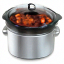}
    \end{tabular}
    &
\cincludegraphics[width=0.077 \textwidth]{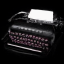} & 
    \begin{tabular}{@{\hspace{0pt}}c@{\hspace{0pt}}}    
        \includegraphics[width=0.077 \textwidth]{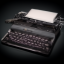} \\
        \includegraphics[width=0.077 \textwidth]{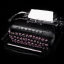}
    \end{tabular}
    &
    \begin{tabular}{@{\hspace{0pt}}c@{\hspace{0pt}}}   
        \includegraphics[width=0.077 \textwidth]{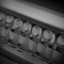} \\
        \includegraphics[width=0.077 \textwidth]{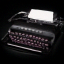}
    \end{tabular}
    &\begin{tabular}{@{\hspace{0pt}}c@{\hspace{0pt}}}  
        \includegraphics[width=0.077 \textwidth]{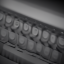} \\
        \includegraphics[width=0.077 \textwidth]{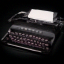}
    \end{tabular}
    &\begin{tabular}{@{\hspace{0pt}}c@{\hspace{0pt}}}   
        \raisebox{-0.5\height}{\rotatebox[origin=c]{90}{\scriptsize RK2}} \\
        \raisebox{-1.35\height}{\rotatebox[origin=c]{90}{\scriptsize RK2-BES}}
    \end{tabular}
\\ \midrule
\rotatebox[origin=c]{90}{\scriptsize FM/$v$-CS} \cincludegraphics[width=0.077 \textwidth]{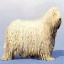} & 
    \begin{tabular}{@{\hspace{0pt}}c@{\hspace{0pt}}}    
        \includegraphics[width=0.077 \textwidth]{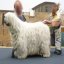} \\
        \includegraphics[width=0.077 \textwidth]{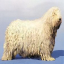}
    \end{tabular}
    &
    \begin{tabular}{@{\hspace{0pt}}c@{\hspace{0pt}}}   
        \includegraphics[width=0.077 \textwidth]{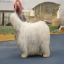} \\
        \includegraphics[width=0.077 \textwidth]{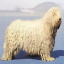}
    \end{tabular}
    &\begin{tabular}{@{\hspace{0pt}}c@{\hspace{0pt}}}  
        \includegraphics[width=0.077 \textwidth]{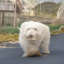} \\
        \includegraphics[width=0.077 \textwidth]{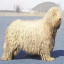}
    \end{tabular}
    &
\cincludegraphics[width=0.077 \textwidth]{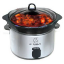} & 
    \begin{tabular}{@{\hspace{0pt}}c@{\hspace{0pt}}}   
        \includegraphics[width=0.077 \textwidth]{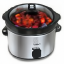} \\
        \includegraphics[width=0.077 \textwidth]{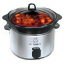}
    \end{tabular}
    &
    \begin{tabular}{@{\hspace{0pt}}c@{\hspace{0pt}}}
        \includegraphics[width=0.077 \textwidth]{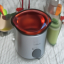} \\
        \includegraphics[width=0.077 \textwidth]{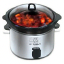}
    \end{tabular}
    &\begin{tabular}{@{\hspace{0pt}}c@{\hspace{0pt}}}   
        \includegraphics[width=0.077 \textwidth]{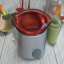} \\
        \includegraphics[width=0.077 \textwidth]{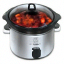}
    \end{tabular}
    &
\cincludegraphics[width=0.077 \textwidth]{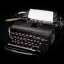} & 
    \begin{tabular}{@{\hspace{0pt}}c@{\hspace{0pt}}}    
        \includegraphics[width=0.077 \textwidth]{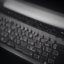} \\
        \includegraphics[width=0.077 \textwidth]{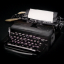}
    \end{tabular}
    &
    \begin{tabular}{@{\hspace{0pt}}c@{\hspace{0pt}}}   
        \includegraphics[width=0.077 \textwidth]{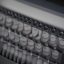} \\
        \includegraphics[width=0.077 \textwidth]{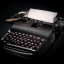}
    \end{tabular}
    &\begin{tabular}{@{\hspace{0pt}}c@{\hspace{0pt}}}  
        \includegraphics[width=0.077 \textwidth]{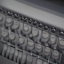} \\
        \includegraphics[width=0.077 \textwidth]{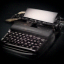}
    \end{tabular}
    &\begin{tabular}{@{\hspace{0pt}}c@{\hspace{0pt}}}   
        \raisebox{-0.5\height}{\rotatebox[origin=c]{90}{\scriptsize RK2}} \\
        \raisebox{-1.35\height}{\rotatebox[origin=c]{90}{\scriptsize RK2-BES}}
    \end{tabular}\vspace{-10pt}
\end{tabular}
\caption{Comparison of FM-OT and FM/$v$-CS ImageNet-64 samples with RK2 and bespoke-RK2 solvers. Comparison to DPM-2 samples are in Figure \ref{fig:imagenet-64-dpm2}. More examples are in Figures \ref{fig:imagenet64_fm_ot_a}, \ref{fig:imagenet64_fm_cs_a}, and \ref{fig:imagenet64_eps_pred_a}. The similarity of generated images across models can be explained by their identical noise-to-data coupling (Theorem \ref{thm:equivalence}). \vspace{-5pt} }\label{fig:imagenet-64}
\end{figure}

\begin{figure}[t]
\centering
\begin{tabular}{@{\hspace{1pt}}c@{\hspace{1pt}}c@{\hspace{1pt}}c@{\hspace{1pt}}c@{\hspace{1pt}}c@{\hspace{2pt}}|@{\hspace{2pt}}c@{\hspace{1pt}}c@{\hspace{1pt}}c@{\hspace{1pt}}c@{\hspace{1pt}}c@{\hspace{2pt}}|@{\hspace{2pt}}c@{\hspace{1pt}}c@{\hspace{1pt}}c@{\hspace{1pt}}c@{\hspace{1pt}}c@{\hspace{0pt}}} 
& {\scriptsize GT} & {\scriptsize NFE=20} & {\scriptsize NFE=10} & {\scriptsize NFE=8} & {\scriptsize GT} & {\scriptsize NFE=20} & {\scriptsize NFE=10} & {\scriptsize NFE=8} & {\scriptsize GT} & {\scriptsize NFE=20} & {\scriptsize NFE=10} & {\scriptsize NFE=8} \\ 
\multirow{2}{*}{ \raisebox{-1.5\height}{\rotatebox[origin=c]{90}{\scriptsize ImageNet-128}}} & 
\cincludegraphics[width=0.077 \textwidth]{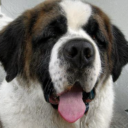} & 
    \begin{tabular}{@{\hspace{0pt}}c@{\hspace{0pt}}}    
        \includegraphics[width=0.077 \textwidth]{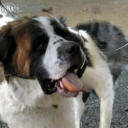} \\
        \includegraphics[width=0.077 \textwidth]{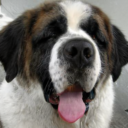}
    \end{tabular}
    &
    \begin{tabular}{@{\hspace{0pt}}c@{\hspace{0pt}}}   
        \includegraphics[width=0.077 \textwidth]{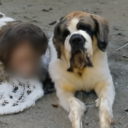} \\
        \includegraphics[width=0.077 \textwidth]{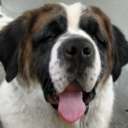}
    \end{tabular}
    &\begin{tabular}{@{\hspace{0pt}}c@{\hspace{0pt}}}  
        \includegraphics[width=0.077 \textwidth]{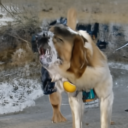} \\
        \includegraphics[width=0.077 \textwidth]{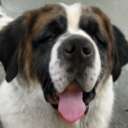}
    \end{tabular}
    &
\cincludegraphics[width=0.077 \textwidth]{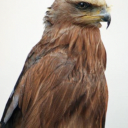} & 
    \begin{tabular}{@{\hspace{0pt}}c@{\hspace{0pt}}}   
        \includegraphics[width=0.077 \textwidth]{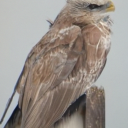} \\
        \includegraphics[width=0.077 \textwidth]{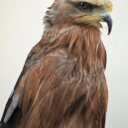}
    \end{tabular}
    &
    \begin{tabular}{@{\hspace{0pt}}c@{\hspace{0pt}}}
        \includegraphics[width=0.077 \textwidth]{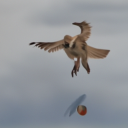} \\
        \includegraphics[width=0.077 \textwidth]{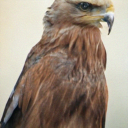}
    \end{tabular}
    &\begin{tabular}{@{\hspace{0pt}}c@{\hspace{0pt}}}   
        \includegraphics[width=0.077 \textwidth]{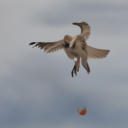} \\
        \includegraphics[width=0.077 \textwidth]{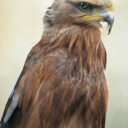}
    \end{tabular}
    &
\cincludegraphics[width=0.077 \textwidth]{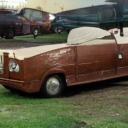} & 
    \begin{tabular}{@{\hspace{0pt}}c@{\hspace{0pt}}}    
        \includegraphics[width=0.077 \textwidth]{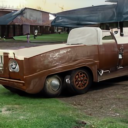} \\
        \includegraphics[width=0.077 \textwidth]{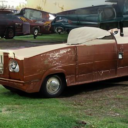}
    \end{tabular}
    &
    \begin{tabular}{@{\hspace{0pt}}c@{\hspace{0pt}}}   
        \includegraphics[width=0.077 \textwidth]{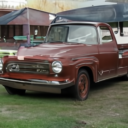} \\
        \includegraphics[width=0.077 \textwidth]{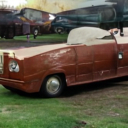}
    \end{tabular}
    &\begin{tabular}{@{\hspace{0pt}}c@{\hspace{0pt}}}  
        \includegraphics[width=0.077 \textwidth]{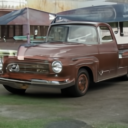} \\
        \includegraphics[width=0.077 \textwidth]{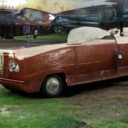}
    \end{tabular}
    &\begin{tabular}{@{\hspace{0pt}}c@{\hspace{0pt}}}   
        \raisebox{-0.5\height}{\rotatebox[origin=c]{90}{\scriptsize RK2}} \\
        \raisebox{-1.35\height}{\rotatebox[origin=c]{90}{\scriptsize RK2-BES}}
    \end{tabular}
\\ 
& \cincludegraphics[width=0.077 \textwidth]{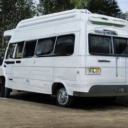} & 
    \begin{tabular}{@{\hspace{0pt}}c@{\hspace{0pt}}}    
        \includegraphics[width=0.077 \textwidth]{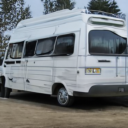} \\
        \includegraphics[width=0.077 \textwidth]{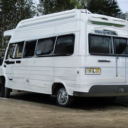}
    \end{tabular}
    &
    \begin{tabular}{@{\hspace{0pt}}c@{\hspace{0pt}}}   
        \includegraphics[width=0.077 \textwidth]{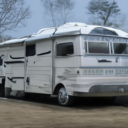} \\
        \includegraphics[width=0.077 \textwidth]{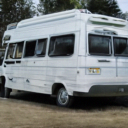}
    \end{tabular}
    &\begin{tabular}{@{\hspace{0pt}}c@{\hspace{0pt}}}  
        \includegraphics[width=0.077 \textwidth]{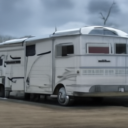} \\
        \includegraphics[width=0.077 \textwidth]{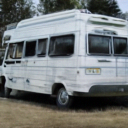}
    \end{tabular}
    &
\cincludegraphics[width=0.077 \textwidth]{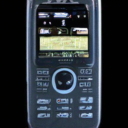} & 
    \begin{tabular}{@{\hspace{0pt}}c@{\hspace{0pt}}}   
        \includegraphics[width=0.077 \textwidth]{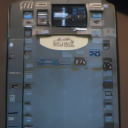} \\
        \includegraphics[width=0.077 \textwidth]{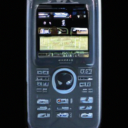}
    \end{tabular}
    &
    \begin{tabular}{@{\hspace{0pt}}c@{\hspace{0pt}}}
        \includegraphics[width=0.077 \textwidth]{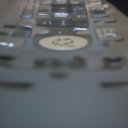} \\
        \includegraphics[width=0.077 \textwidth]{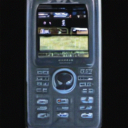}
    \end{tabular}
    &\begin{tabular}{@{\hspace{0pt}}c@{\hspace{0pt}}}   
        \includegraphics[width=0.077 \textwidth]{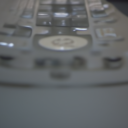} \\
        \includegraphics[width=0.077 \textwidth]{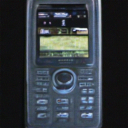}
    \end{tabular}
    &
\cincludegraphics[width=0.077 \textwidth]{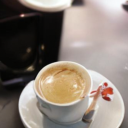} & 
    \begin{tabular}{@{\hspace{0pt}}c@{\hspace{0pt}}}    
        \includegraphics[width=0.077 \textwidth]{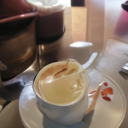} \\
        \includegraphics[width=0.077 \textwidth]{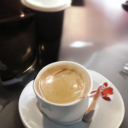}
    \end{tabular}
    &
    \begin{tabular}{@{\hspace{0pt}}c@{\hspace{0pt}}}   
        \includegraphics[width=0.077 \textwidth]{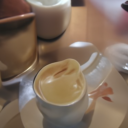} \\
        \includegraphics[width=0.077 \textwidth]{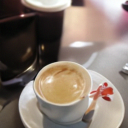}
    \end{tabular}
    &\begin{tabular}{@{\hspace{0pt}}c@{\hspace{0pt}}}  
        \includegraphics[width=0.077 \textwidth]{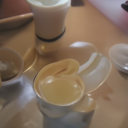} \\
        \includegraphics[width=0.077 \textwidth]{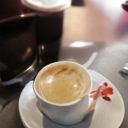}
    \end{tabular}
    &\begin{tabular}{@{\hspace{0pt}}c@{\hspace{0pt}}}   
        \raisebox{-0.5\height}{\rotatebox[origin=c]{90}{\scriptsize RK2}} \\
        \raisebox{-1.35\height}{\rotatebox[origin=c]{90}{\scriptsize RK2-BES}}
    \end{tabular}
\\ \midrule
\rotatebox[origin=c]{90}{\scriptsize AFHQ-256} & 
\cincludegraphics[width=0.077 \textwidth]{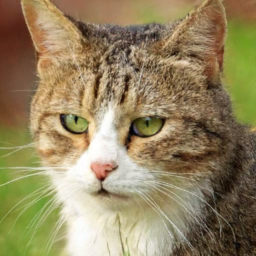} & 
    \begin{tabular}{@{\hspace{0pt}}c@{\hspace{0pt}}}    
        \includegraphics[width=0.077 \textwidth]{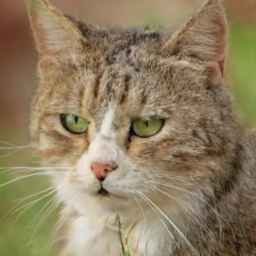} \\
        \includegraphics[width=0.077 \textwidth]{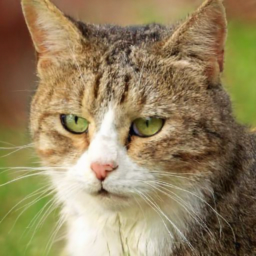}
    \end{tabular}
    &
    \begin{tabular}{@{\hspace{0pt}}c@{\hspace{0pt}}}   
        \includegraphics[width=0.077 \textwidth]{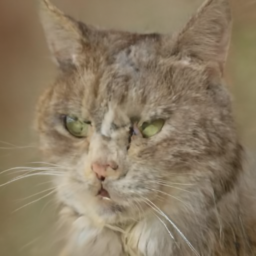} \\
        \includegraphics[width=0.077 \textwidth]{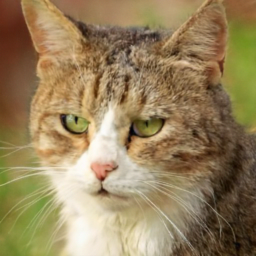}
    \end{tabular}
    &\begin{tabular}{@{\hspace{0pt}}c@{\hspace{0pt}}}  
        \includegraphics[width=0.077 \textwidth]{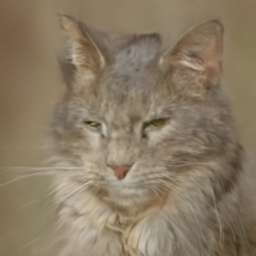} \\
        \includegraphics[width=0.077 \textwidth]{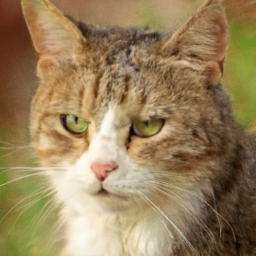}
    \end{tabular}
    &
\cincludegraphics[width=0.077 \textwidth]{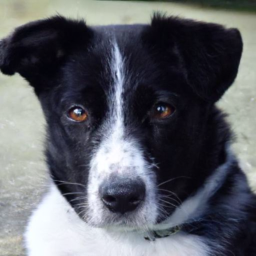} & 
    \begin{tabular}{@{\hspace{0pt}}c@{\hspace{0pt}}}   
        \includegraphics[width=0.077 \textwidth]{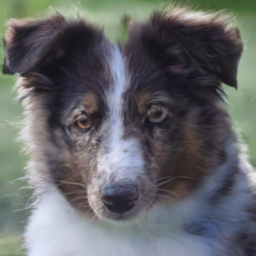} \\
        \includegraphics[width=0.077 \textwidth]{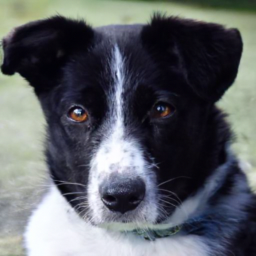}
    \end{tabular}
    &
    \begin{tabular}{@{\hspace{0pt}}c@{\hspace{0pt}}}
        \includegraphics[width=0.077 \textwidth]{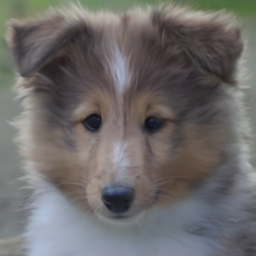} \\
        \includegraphics[width=0.077 \textwidth]{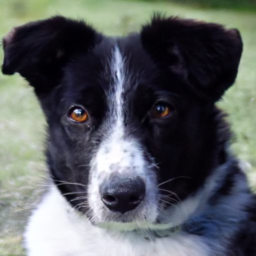}
    \end{tabular}
    &\begin{tabular}{@{\hspace{0pt}}c@{\hspace{0pt}}}   
        \includegraphics[width=0.077 \textwidth]{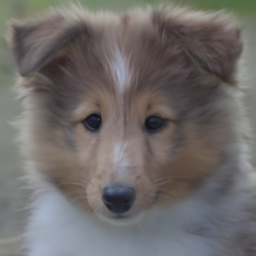} \\
        \includegraphics[width=0.077 \textwidth]{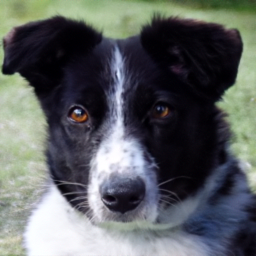}
    \end{tabular}
    &
\cincludegraphics[width=0.077 \textwidth]{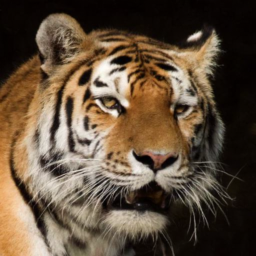} & 
    \begin{tabular}{@{\hspace{0pt}}c@{\hspace{0pt}}}    
        \includegraphics[width=0.077 \textwidth]{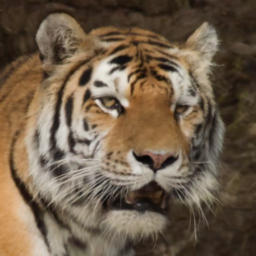} \\
        \includegraphics[width=0.077 \textwidth]{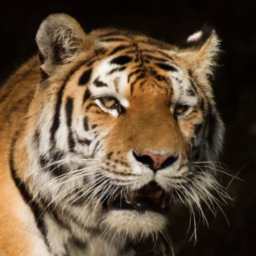}
    \end{tabular}
    &
    \begin{tabular}{@{\hspace{0pt}}c@{\hspace{0pt}}}   
        \includegraphics[width=0.077 \textwidth]{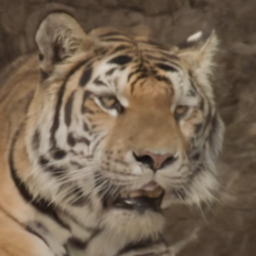} \\
        \includegraphics[width=0.077 \textwidth]{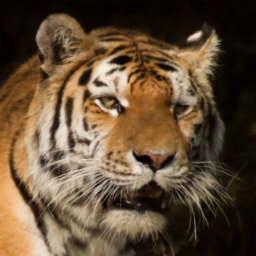}
    \end{tabular}
    &\begin{tabular}{@{\hspace{0pt}}c@{\hspace{0pt}}}  
        \includegraphics[width=0.077 \textwidth]{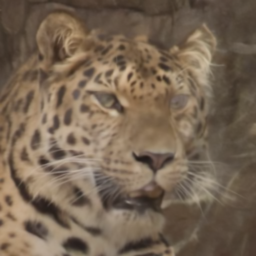} \\
        \includegraphics[width=0.077 \textwidth]{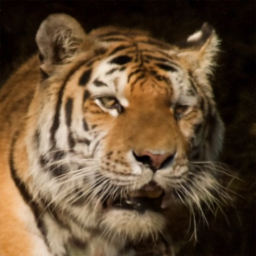}
    \end{tabular}
    &\begin{tabular}{@{\hspace{0pt}}c@{\hspace{0pt}}}   
        \raisebox{-0.5\height}{\rotatebox[origin=c]{90}{\scriptsize RK2}} \\
        \raisebox{-1.35\height}{\rotatebox[origin=c]{90}{\scriptsize RK2-BES}}
    \end{tabular}\vspace{-10pt}
    
\end{tabular}
\caption{FM-OT ImageNet-128 (top) and AFHQ-256 (bottom) samples with RK2 and bespoke-RK2 solvers. More examples are in Figures \ref{fig:imagenet_128_a_1}, \ref{fig:imagenet_128_a_2} and \ref{fig:afhq_a}.\vspace{-10pt}}
\label{fig:imagenet-128}
\end{figure}

\section{Conclusions}\vspace{-5pt}
This paper develops an algorithm for finding low-NFE ODE solvers custom-tailored to general pre-trained flow models. Through extensive experiments we found that different models can benefit greatly from their own optimized solvers in terms of global truncation error (RMSE) and generation quality (FID). Currently, training a Bespoke solver requires roughly 1\% of the original model's training time, which can probably be still be made more efficient (\eg, by using training data samples and/or pre-processing sampling paths). Lastly, considering more elaborate models of $\varphi_r(\cdot),t_r$ could provide further benefits in fast sampling of pre-trained models. 




\pagebreak \newpage 
\bibliography{main}

\begin{thebibliography}{40}
\providecommand{\natexlab}[1]{#1}
\providecommand{\url}[1]{\texttt{#1}}
\expandafter\ifx\csname urlstyle\endcsname\relax
  \providecommand{\doi}[1]{doi: #1}\else
  \providecommand{\doi}{doi: \begingroup \urlstyle{rm}\Url}\fi

\bibitem[ima()]{imagenet_website}
Imagenet website.
\newblock \url{https://www.image-net.org/}.

\bibitem[Albergo \& Vanden-Eijnden(2022)Albergo and Vanden-Eijnden]{albergo2022si}
Michael~S. Albergo and Eric Vanden-Eijnden.
\newblock Building normalizing flows with stochastic interpolants, 2022.

\bibitem[Chen(2018)]{torchdiffeq}
Ricky T.~Q. Chen.
\newblock torchdiffeq, 2018.
\newblock URL \url{https://github.com/rtqichen/torchdiffeq}.

\bibitem[Chen et~al.(2018)Chen, Rubanova, Bettencourt, and Duvenaud]{chen2018neural}
Ricky~TQ Chen, Yulia Rubanova, Jesse Bettencourt, and David~K Duvenaud.
\newblock Neural ordinary differential equations.
\newblock \emph{Advances in neural information processing systems}, 31, 2018.

\bibitem[Choi et~al.(2020{\natexlab{a}})Choi, Uh, Yoo, and Ha]{choi2020stargan}
Yunjey Choi, Youngjung Uh, Jaejun Yoo, and Jung-Woo Ha.
\newblock Stargan v2: Diverse image synthesis for multiple domains, 2020{\natexlab{a}}.

\bibitem[Choi et~al.(2020{\natexlab{b}})Choi, Uh, Yoo, and Ha]{choi2020starganv2}
Yunjey Choi, Youngjung Uh, Jaejun Yoo, and Jung-Woo Ha.
\newblock Stargan v2: Diverse image synthesis for multiple domains.
\newblock In \emph{Proceedings of the IEEE Conference on Computer Vision and Pattern Recognition}, 2020{\natexlab{b}}.

\bibitem[Chrabaszcz et~al.(2017)Chrabaszcz, Loshchilov, and Hutter]{chrabaszcz2017downsampled}
Patryk Chrabaszcz, Ilya Loshchilov, and Frank Hutter.
\newblock A downsampled variant of imagenet as an alternative to the cifar datasets.
\newblock \emph{arXiv preprint arXiv:1707.08819}, 2017.

\bibitem[Deng et~al.(2009)Deng, Dong, Socher, Li, Li, and Li]{deng2009imagenet}
Jia Deng, Wei Dong, Richard Socher, Li-Jia Li, Kai Li, and Fei-Fei Li.
\newblock Imagenet: A large-scale hierarchical image database.
\newblock In \emph{The IEEE Conference on Computer Vision and Pattern Recognition (CVPR)}, 2009.

\bibitem[Dhariwal \& Nichol(2021)Dhariwal and Nichol]{dhariwal2021diffusion}
Prafulla Dhariwal and Alexander Nichol.
\newblock Diffusion models beat gans on image synthesis.
\newblock \emph{Advances in neural information processing systems}, 34:\penalty0 8780--8794, 2021.

\bibitem[Dockhorn et~al.(2022)Dockhorn, Vahdat, and Kreis]{dockhorn2022genie}
Tim Dockhorn, Arash Vahdat, and Karsten Kreis.
\newblock Genie: Higher-order denoising diffusion solvers.
\newblock \emph{Advances in Neural Information Processing Systems}, 35:\penalty0 30150--30166, 2022.

\bibitem[Duan et~al.(2023)Duan, Wang, Chen, Huang, and Qian]{duan2023optimal}
Zhongjie Duan, Chengyu Wang, Cen Chen, Jun Huang, and Weining Qian.
\newblock Optimal linear subspace search: Learning to construct fast and high-quality schedulers for diffusion models.
\newblock \emph{arXiv preprint arXiv:2305.14677}, 2023.

\bibitem[Heusel et~al.(2017)Heusel, Ramsauer, Unterthiner, Nessler, and Hochreiter]{heusel2017GANs}
Martin Heusel, Hubert Ramsauer, Thomas Unterthiner, Bernhard Nessler, and Sepp Hochreiter.
\newblock Gans trained by a two time-scale update rule converge to a local nash equilibrium.
\newblock In \emph{Advances in Neural Information Processing Systems (NeurIPS)}, 2017.

\bibitem[Ho \& Salimans(2022)Ho and Salimans]{ho2022classifier}
Jonathan Ho and Tim Salimans.
\newblock Classifier-free diffusion guidance.
\newblock \emph{arXiv preprint arXiv:2207.12598}, 2022.

\bibitem[Ho et~al.(2020)Ho, Jain, and Abbeel]{ho2020denoising}
Jonathan Ho, Ajay Jain, and Pieter Abbeel.
\newblock Denoising diffusion probabilistic models.
\newblock \emph{Advances in neural information processing systems}, 33:\penalty0 6840--6851, 2020.

\bibitem[Iserles(2009)]{iserles2009first}
Arieh Iserles.
\newblock \emph{A first course in the numerical analysis of differential equations}.
\newblock Number~44. Cambridge university press, 2009.

\bibitem[Karras et~al.(2022)Karras, Aittala, Aila, and Laine]{karras2022elucidating}
Tero Karras, Miika Aittala, Timo Aila, and Samuli Laine.
\newblock Elucidating the design space of diffusion-based generative models.
\newblock \emph{Advances in Neural Information Processing Systems}, 35:\penalty0 26565--26577, 2022.

\bibitem[Kingma et~al.(2021)Kingma, Salimans, Poole, and Ho]{kingma2021variational}
Diederik Kingma, Tim Salimans, Ben Poole, and Jonathan Ho.
\newblock Variational diffusion models.
\newblock \emph{Advances in neural information processing systems}, 34:\penalty0 21696--21707, 2021.

\bibitem[Kingma \& Ba(2017)Kingma and Ba]{kingma2017adam}
Diederik~P. Kingma and Jimmy Ba.
\newblock Adam: A method for stochastic optimization, 2017.

\bibitem[Kong et~al.(2020)Kong, Ping, Huang, Zhao, and Catanzaro]{kong2020diffwave}
Zhifeng Kong, Wei Ping, Jiaji Huang, Kexin Zhao, and Bryan Catanzaro.
\newblock Diffwave: A versatile diffusion model for audio synthesis.
\newblock \emph{arXiv preprint arXiv:2009.09761}, 2020.

\bibitem[Krizhevsky \& Hinton(2009)Krizhevsky and Hinton]{krizhevsky2009learning}
Alex Krizhevsky and Geoffrey Hinton.
\newblock Learning multiple layers of features from tiny images.
\newblock In \emph{University of Toronto, Canada}, 2009.

\bibitem[Lam et~al.(2021)Lam, Wang, Huang, Su, and Yu]{lam2021bilateral}
Max~WY Lam, Jun Wang, Rongjie Huang, Dan Su, and Dong Yu.
\newblock Bilateral denoising diffusion models.
\newblock \emph{arXiv preprint arXiv:2108.11514}, 2021.

\bibitem[Le et~al.(2023)Le, Vyas, Shi, Karrer, Sari, Moritz, Williamson, Manohar, Adi, Mahadeokar, et~al.]{le2023voicebox}
Matthew Le, Apoorv Vyas, Bowen Shi, Brian Karrer, Leda Sari, Rashel Moritz, Mary Williamson, Vimal Manohar, Yossi Adi, Jay Mahadeokar, et~al.
\newblock Voicebox: Text-guided multilingual universal speech generation at scale.
\newblock \emph{arXiv preprint arXiv:2306.15687}, 2023.

\bibitem[Lipman et~al.(2022)Lipman, Chen, Ben-Hamu, Nickel, and Le]{lipman2022flow}
Yaron Lipman, Ricky T.~Q. Chen, Heli Ben-Hamu, Maximilian Nickel, and Matt Le.
\newblock Flow matching for generative modeling.
\newblock \emph{arXiv preprint arXiv:2210.02747}, 2022.

\bibitem[Liu et~al.(2022)Liu, Gong, and Liu]{liu2022flow}
Xingchao Liu, Chengyue Gong, and Qiang Liu.
\newblock Flow straight and fast: Learning to generate and transfer data with rectified flow.
\newblock \emph{arXiv preprint arXiv:2209.03003}, 2022.

\bibitem[Lu et~al.(2022{\natexlab{a}})Lu, Zhou, Bao, Chen, Li, and Zhu]{lu2022dpm}
Cheng Lu, Yuhao Zhou, Fan Bao, Jianfei Chen, Chongxuan Li, and Jun Zhu.
\newblock Dpm-solver: A fast ode solver for diffusion probabilistic model sampling in around 10 steps.
\newblock \emph{Advances in Neural Information Processing Systems}, 35:\penalty0 5775--5787, 2022{\natexlab{a}}.

\bibitem[Lu et~al.(2022{\natexlab{b}})Lu, Zhou, Bao, Chen, Li, and Zhu]{lu2022dpm-pp}
Cheng Lu, Yuhao Zhou, Fan Bao, Jianfei Chen, Chongxuan Li, and Jun Zhu.
\newblock Dpm-solver++: Fast solver for guided sampling of diffusion probabilistic models.
\newblock \emph{arXiv preprint arXiv:2211.01095}, 2022{\natexlab{b}}.

\bibitem[Luhman \& Luhman(2021)Luhman and Luhman]{luhman2021knowledge}
Eric Luhman and Troy Luhman.
\newblock Knowledge distillation in iterative generative models for improved sampling speed.
\newblock \emph{arXiv preprint arXiv:2101.02388}, 2021.

\bibitem[Meng et~al.(2023)Meng, Rombach, Gao, Kingma, Ermon, Ho, and Salimans]{meng2023distillation}
Chenlin Meng, Robin Rombach, Ruiqi Gao, Diederik Kingma, Stefano Ermon, Jonathan Ho, and Tim Salimans.
\newblock On distillation of guided diffusion models.
\newblock In \emph{Proceedings of the IEEE/CVF Conference on Computer Vision and Pattern Recognition}, pp.\  14297--14306, 2023.

\bibitem[Rombach et~al.(2021)Rombach, Blattmann, Lorenz, Esser, and Ommer]{rombach2021highresolution}
Robin Rombach, Andreas Blattmann, Dominik Lorenz, Patrick Esser, and Björn Ommer.
\newblock High-resolution image synthesis with latent diffusion models, 2021.

\bibitem[Salimans \& Ho(2022)Salimans and Ho]{salimans2022progressive}
Tim Salimans and Jonathan Ho.
\newblock Progressive distillation for fast sampling of diffusion models.
\newblock \emph{arXiv preprint arXiv:2202.00512}, 2022.

\bibitem[Shampine(1986)]{shampine1986some}
Lawrence~F Shampine.
\newblock Some practical runge-kutta formulas.
\newblock \emph{Mathematics of computation}, 46\penalty0 (173):\penalty0 135--150, 1986.

\bibitem[Sohl-Dickstein et~al.(2015)Sohl-Dickstein, Weiss, Maheswaranathan, and Ganguli]{sohl2015deep}
Jascha Sohl-Dickstein, Eric Weiss, Niru Maheswaranathan, and Surya Ganguli.
\newblock Deep unsupervised learning using nonequilibrium thermodynamics.
\newblock In \emph{International conference on machine learning}, pp.\  2256--2265. PMLR, 2015.

\bibitem[Song et~al.(2020{\natexlab{a}})Song, Meng, and Ermon]{song2020denoising}
Jiaming Song, Chenlin Meng, and Stefano Ermon.
\newblock Denoising diffusion implicit models.
\newblock \emph{arXiv preprint arXiv:2010.02502}, 2020{\natexlab{a}}.

\bibitem[Song et~al.(2020{\natexlab{b}})Song, Sohl-Dickstein, Kingma, Kumar, Ermon, and Poole]{song2020score}
Yang Song, Jascha Sohl-Dickstein, Diederik~P Kingma, Abhishek Kumar, Stefano Ermon, and Ben Poole.
\newblock Score-based generative modeling through stochastic differential equations.
\newblock \emph{arXiv preprint arXiv:2011.13456}, 2020{\natexlab{b}}.

\bibitem[Song et~al.(2023)Song, Dhariwal, Chen, and Sutskever]{song2023consistency}
Yang Song, Prafulla Dhariwal, Mark Chen, and Ilya Sutskever.
\newblock Consistency models.
\newblock 2023.

\bibitem[Watson et~al.(2021)Watson, Chan, Ho, and Norouzi]{watson2021learning}
Daniel Watson, William Chan, Jonathan Ho, and Mohammad Norouzi.
\newblock Learning fast samplers for diffusion models by differentiating through sample quality.
\newblock In \emph{International Conference on Learning Representations}, 2021.

\bibitem[Yang et~al.(2023)Yang, Zhou, Feng, and Wang]{yang2023diffusion}
Xingyi Yang, Daquan Zhou, Jiashi Feng, and Xinchao Wang.
\newblock Diffusion probabilistic model made slim.
\newblock In \emph{Proceedings of the IEEE/CVF Conference on Computer Vision and Pattern Recognition}, pp.\  22552--22562, 2023.

\bibitem[Zhang \& Chen(2022)Zhang and Chen]{zhang2022fast}
Qinsheng Zhang and Yongxin Chen.
\newblock Fast sampling of diffusion models with exponential integrator.
\newblock \emph{arXiv preprint arXiv:2204.13902}, 2022.

\bibitem[Zhang et~al.(2023)Zhang, Song, and Chen]{zhang2023improved}
Qinsheng Zhang, Jiaming Song, and Yongxin Chen.
\newblock Improved order analysis and design of exponential integrator for diffusion models sampling.
\newblock \emph{arXiv preprint arXiv:2308.02157}, 2023.

\bibitem[Zheng et~al.(2023)Zheng, Nie, Vahdat, Azizzadenesheli, and Anandkumar]{zheng2023fast}
Hongkai Zheng, Weili Nie, Arash Vahdat, Kamyar Azizzadenesheli, and Anima Anandkumar.
\newblock Fast sampling of diffusion models via operator learning.
\newblock In \emph{International Conference on Machine Learning}, pp.\  42390--42402. PMLR, 2023.

\end{thebibliography}
\bibliographystyle{iclr2024_conference}


\newpage

\appendix

\section{Transformed paths}
\label{a:transformed_paths}
(Appendix to Section \ref{ss:transformed_paths}.)\\

\barut*

\begin{proof}
     Differentiating $\bar{x}(r)$ in \eqref{e:bar_x_r_from_x_t}, \ie, $\bar{x}(r)=\varphi_r(x(t_r))$ w.r.t.~$r$ and using the chain rule gives
\begin{align*}
    \dot{\bar{x}}(r) &= \frac{d}{dr}(\varphi_r(x(t_r)))\\
    &= \dot{\varphi}_r(x(t_r)) + \partial_x \varphi_r(x(t_r))\dot{x}(t_r) \dot{t}_r\\
    &= \dot{\varphi}_r(x(t_r)) + \partial_x \varphi_r(x(t_r))u_{t_r}(x(t_r)) \dot{t}_r\\
    &= \dot{\varphi}_r(\varphi^{-1}_r(\bar{x}(r))) + \partial_x \varphi_r(\varphi^{-1}_r(\bar{x}(r)))u_{t_r}(\varphi^{-1}_r(\bar{x}(r)))\dot{t}_r
\end{align*}
where in the third equality we used the fact that $x(t)$ solves the ODE in \eqref{e:ode} and therefore $\dot{x}(t)=u_t(x(t))$; and in the last equality we applied $\varphi^{-1}_r$ to both sides of \eqref{e:bar_x_r_from_x_t}, \ie,  $x(t_r)=\varphi_r^{-1}(\bar{x}(r))$. The above equation shows that 
\begin{equation}
    \dot{\bar{x}}(r) = u_r(\bar{x}(r)),
\end{equation}
where $\bar{u}_r(x)$ is defined in \eqref{e:bar_u_t}, as required.
\end{proof}

\section{Consistency of solvers}
\label{a:consistency}
(Appendix to Section \ref{ss:consistency}.) \\

\begin{figure}[ht]
\centering
\vspace{-10pt}
\includegraphics[width=0.4\textwidth]{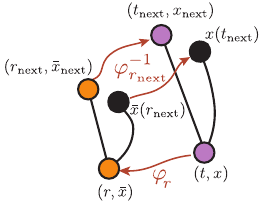}
\caption{Proof notations and setup.}
\label{fig:notations}
\end{figure}

\consistency*

\begin{proof} 
    Here $(t,x)$ is our input sample $x\in\Real^d$ at time $t\in[0,1]$. By definition $r=r_t$, $r_\nextt = r + h$, and $t_\nextt = t_{r_\nextt}$. Furthermore, by definition $\bar{x}=\varphi_r(x)$ is a sample at time $r$; $\bar{x}(r_\nextt)$ is the solution to the ODE defined by $\bar{u}_r$ starting from $(r,\bar{x})$; $\bar{x}_\nextt$ is an approximation to $\bar{x}(r_\nextt)$ as generated from the base ODE solver $\odestep$. Lastly, $x_\nextt = \varphi_{r_\nextt}^{-1}(\bar{x}_{r_\nextt})$ and $x(t_\nextt)= \varphi_{r_\nextt}^{-1}(\bar{x}(r_\nextt))$. See Figure \ref{fig:notations} for an illustration visualizing this setup.
    
    Now, since $\odestep$ is of order $k$ we have that 
    \begin{align} \nonumber
        \bar{x}(r_\nextt) - \bar{x}_\nextt &= 
        \bar{x}(r_\nextt) - \odestep(\bar{x},r;\bar{u}_r) \\ \label{ar:consistency_1} &= O((r_\nextt-r)^{k+1}). 
    \end{align}
    Now,
    \begin{align*}
    x(t_\nextt) - x_\nextt &=
    x(t_\nextt) - \varphi^{-1}_{r_\nextt}(\bar{x}_\nextt) \\
    &= x(t_\nextt) - \varphi^{-1}_{r_\nextt}(\bar{x}(r_\nextt) + O((r_\nextt-r)^{k+1}) ) \\
    &= x(t_\nextt) - \varphi^{-1}_{r_\nextt}(\bar{x}(r_\nextt)) + O((r_\nextt-r)^{k+1})\\
    &= O((r_\nextt-r)^{k+1}) \\
    &= O((t_\nextt-t)^{k+1}),
    \end{align*}
    where in the first equality we used the definition of $x_\nextt$; in the second equality we used \eqref{ar:consistency_1}; in the third equality we used the fact that $\varphi_r^{-1}$ is Lipschitz with constant $L$ (for all $r$); in the fourth equality we used the definition of the path transform, $x(t_\nextt)= \varphi_{r_\nextt}^{-1}(\bar{x}(r_\nextt))$ as mentioned above; and in the last equality we used the fact that $r_t$ is also Lipschitz with a constant $L$ and therefore $r_\nextt - r = r_{t_\nextt} - r_t = O(t_\nextt - t)$.    
\end{proof}

\section{Equivalence of Gaussian Paths and scale-time transformations}\label{a:equivalence}
(Appendix to Section \ref{ss:equivalence}.) \\ 
\equivalence*

\begin{proof}[Proof of theorem \ref{thm:equivalence}.]

Consider two arbitrary schedulers $(\alpha_t,\sigma_t)$ and $(\bar{\alpha}_r,\bar{\sigma}_r)$. 
We can find $s_r,t_r$ such that 
\begin{align}\label{e:scheduler_relation}
    \bar{\alpha}_r = s_r \alpha_{t_r}, \qquad \bar{\sigma}_r = s_r \sigma_{t_r}.
\end{align}
Indeed, one can check the following are such $s_r,t_r$:
\begin{align}\label{e:scale-time_relation}
t_r &= \mathrm{snr}^{-1}(\overline{\mathrm{snr}}(r)), \qquad s_r = \frac{\bar{\sigma}_r}{\sigma_{t_r}},
\end{align}
where we remember $\mathrm{snr}$ is strictly monotonic as defined in \eqref{e:def_scheduler}, hence invertible. On the other hand, given an arbitrary scheduler $(\alpha_t,\sigma_t)$ and an arbitrary scale-time transformation $(t_r,s_r)$ with $s_1=1$, we can define a scheduler $(\bar{\alpha}_r,\bar{\sigma}_r)$ via \eqref{e:scheduler_relation}. 

For case (i), we are given another scheduler $\bar{\alpha}_r, \bar{\sigma}_r$ and define a scale-time transformation $s_r,t_r$ with \eqref{e:scale-time_relation}. For case (ii), we are given a scale-time transformation $s_r,t_r$ and define a scheduler $\bar{\alpha}_r, \bar{\sigma}_r$ by \eqref{e:scheduler_relation}. 

Now, the scheduler $\bar{\alpha}_r, \bar{\sigma}_r$ defines sampling paths $\bar{x}(r)$ given by the solution of the ODE in \eqref{e:ode} with the marginal VF $\bar{u}_r^{(1)}(x)$ defined in \eqref{e:u_t}, \ie,
\begin{equation}
    \bar{u}^{(1)}_r(x) = \int \bar{u}_r(x|x_1) \frac{\bar{p}_r(x|x_1)q(x_1)}{\bar{p}_r(x)}dx_1,
\end{equation} 
where $\bar{u}_r(x|x_1) = \frac{\dot{\bar{\sigma}}_r}{\bar{\sigma}_r}x + \brac{\dot{\bar{\alpha}}_r - \dot{\bar{\sigma}}_r\frac{\bar{\alpha}_r}{\bar{\sigma}_r}}x_1$. 

The scale-time transformation $s_r,t_r$ gives rise to a second VF $\bar{u}^{(2)}_r(x)$ as in \eqref{e:bar_u_t_scale},
\begin{equation}
    \bar{u}^{(2)}_r(x) = \frac{\dot{s}_r}{s_r}x + \dot{t}_r s_r u_{t_r}\parr{\frac{x}{s_r}},
\end{equation}
where $u_t$ is the VF defined by the scheduler $(\alpha_t, \sigma_t)$ and \eqref{e:u_t}. 

By uniqueness of ODE solutions, the theorem will be proved if we show that 
\begin{equation}\label{ea:need_to_prove}
    \bar{u}^{(1)}_r(x) = \bar{u}^{(2)}_r(x), \quad \forall x\in\Real^d, r\in[0,1].
\end{equation}

For that end, we use the notation of determinants to express  
\begin{equation}
    \bar{u}_r(x|x_1) = \frac{1}{\bar{\sigma}_{r}}\abs{
    \begin{matrix} 0 & x & x_1 \\
    \bar{\sigma}_{r}& \bar{\alpha}_{r} & 1 \\
    \dot{\bar{\sigma}}_{r} & \dot{\bar{\alpha}}_{r}  & 0
    \end{matrix}
    },
\end{equation}
where $x,\ x_1 \in \R^d$ and  $\bar{\alpha}_{r},\ \bar{\sigma}_{r},\ \dot{\bar{\alpha}}_{r},\ \dot{\bar{\sigma}}_{r} \in \R$ as in vector cross product.
Differentiating $\bar{\alpha}_r,\bar{\sigma}_r$ w.r.t.~$r$ gives
\begin{align}\label{e:dot_scheduler_relation}
    \dot{\bar{\alpha}}_r=\dot{s}_r \alpha_{t_r} + s_r\dot{\alpha}_{t_r}\dot{t}_r, \qquad \dot{\bar{\sigma}}_r=\dot{s}_r \sigma_{t_r} + s_r\dot{\sigma}_{t_r}\dot{t}_r.
\end{align}
Using the bi-linearity of determinants shows that:
\begin{align*}
    \bar{u}_r(x|x_1) &= \frac{1}{\bar{\sigma}_{r}}\abs{
    \begin{matrix} 0 & x & x_1 \\
    \bar{\sigma}_{r}& \bar{\alpha}_{r} & 1 \\
    \dot{\bar{\sigma}}_{r} & \dot{\bar{\alpha}}_{r}  & 0
    \end{matrix}
    }\\
    &= \frac{1}{s_r\sigma_{t_r}}\abs{
    \begin{matrix} 0 & x & x_1 \\
    s_r\sigma_{t_r}& s_r\alpha_{t_r} & 1 \\
    \dot{s}_r \sigma_{t_r} + s_r\dot{\sigma}_{t_r}\dot{t}_r & \dot{s}_r \alpha_{t_r} + s_r\dot{\alpha}_{t_r}\dot{t}_r  & 0
    \end{matrix}
    }\\
    &= \frac{1}{s_r\sigma_{t_r}}\abs{
    \begin{matrix} 0 & x & x_1 \\
    s_r\sigma_{t_r}& s_r\alpha_{t_r} & 1 \\
    \dot{s}_r \sigma_{t_r} & \dot{s}_r \alpha_{t_r} & 0
    \end{matrix}
    } +
    \frac{1}{s_r\sigma_{t_r}}\abs{
    \begin{matrix} 0 & x & x_1 \\
    s_r\sigma_{t_r}& s_r\alpha_{t_r} & 1 \\
     s_r\dot{\sigma}_{t_r}\dot{t}_r &  s_r\dot{\alpha}_{t_r}\dot{t}_r  & 0
    \end{matrix}
    }\\
    &= \frac{\dot{s}_r}{s_r}x + \frac{s_r\dot{t}_r}{\sigma_{t_r}}\abs{
    \begin{matrix} 0 & \frac{x}{s_r} & x_1 \\
    \sigma_{t_r} & \alpha_{t_r} & 1 \\
     \dot{\sigma}_{t_r} &  \dot{\alpha}_{t_r}  & 0
    \end{matrix}
    }\\
    &= \frac{\dot{s}_r}{s_r}x + s_r\dot{t}_r u_{t_r}\parr{\frac{x}{s_r}\Big\vert x_1},
\end{align*}
where in the second equality we substitute $\dot{\bar{\sigma}}_{r},\ \dot{\bar{\alpha}}_{r}$ as in \eqref{e:dot_scheduler_relation}, in the third and fourth equality we used the bi-linearity of determinants, and in the last equality we used the definition of $u_t(x|x_1) = \frac{\dot{\sigma}_t}{\sigma_t}x + \brac{\dot{\alpha}_t - \dot{\sigma}_t\frac{\alpha_t}{\sigma_t}}x_1$ expressed in determinants notation.
Furthermore, since 
\begin{equation}
    \bar{p}_r(x|x_1)=\gN(x\vert s_r \alpha_{t_r}x_1, s^2_r \sigma^2_{t_r}I) \propto \gN\parr{\frac{x}{s_r} \Big \vert \alpha_{t_r} x_1, \sigma^2_{t_r}I}=p_{t_r}\parr{\frac{x}{s_r}\Big\vert x_1}
\end{equation}
we have that 
\begin{equation}
    \bar{p}_r(x_1|x) = p_{t_r}\parr{x_1 \Big| \frac{x}{s_r}}.
\end{equation}
Therefore,
\begin{align*}
    \int \bar{u}_r(x|x_1) \frac{\bar{p}_r(x|x_1)q(x_1)}{\bar{p}_r(x)}dx_1 &= \E_{\bar{p}_r(x_1|x)}\bar{u}_r(x|x_1)\\
    & = \E_{p_{t_r}\parr{x_1 |\frac{x}{s_r}}}\brac{\frac{\dot{s}_r}{s_r}x + s_r\dot{t}_r u_{t_r}\parr{\frac{x}{s_r}\Big\vert x_1}}\\
    &= \frac{\dot{s}_r}{s_r}x + s_r\dot{t}_r\E_{p_{t_r}\parr{x_1 | \frac{x}{s_r}}}u_{t_r}\parr{\frac{x}{s_r}\Big\vert x_1}\\
    & = \frac{\dot{s}_r}{s_r}x + s_r\dot{t}_ru_{t_r}\parr{\frac{x}{s_r}},
\end{align*}
where in the first equality we used Bayes rule, in the second equality we substitute $\bar{u}_r(x|x_1)$ and $\bar{p}_r(x_1|x)$ as above, and in the last equality we used the definition of $u_t$ as in \eqref{e:u_t}. We have proved \eqref{ea:need_to_prove} and that concludes the proof. 
\end{proof}

\newpage
\section{Lipschitz constants of $\odestep^\theta$.}
\label{a:lipschitz_of_step}
    (Appendix to Section \ref{ss:rmse_bound}.) \\    
We are interested in computing $L^\theta_i$, a Lipschitz constant of the bespoke solver step function $\odestep_x^\theta(t_i,\cdot\,;u_t)$. Namely, $L^\theta_i$ should satisfy 
\begin{equation}
    \norm{\odestep_x^\theta(t_i,x\,;u_t) - \odestep_x^\theta(t_i,y\,;u_t)} \leq L^\theta_i \norm{x-y}, \qquad \forall x,y\in \Real^d.
\end{equation}
We remember that $\odestep_x^\theta(t_i,\cdot\,;u_t)$ is defined using a base solver and the VF $\bar{u}_{r_i}(\cdot)$; hence, we begin by computing a Lipschitz constant for $\bar{u}_{r_i}$ denoted $L_{\Bar{u}}(r_i)$ in an auxiliary lemma:
\begin{lemma}\label{lem:lipshitz_u_bar}
    Assume that the original velocity field $u_t$ has a Lipschitz constant $L_u>0$. Then for every $r_i\in[0,1]$, $L_{\tau}\geq L_u$,   and $x,y \in \R^d$
    \begin{equation}
        \norm{\bar{u}_{r_i}(x) - \bar{u}_{r_i}(y)} \le L_{\Bar{u}}(r_i)\norm{x-y},
    \end{equation}
    where
    \begin{equation}
        L_{\Bar{u}}(r_i) = \frac{\abs{\dot{s}_i}}{s_i} + \dot{t}_iL_{\tau}
    \end{equation}
\end{lemma}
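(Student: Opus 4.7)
The plan is a direct triangle inequality on the explicit form of $\bar{u}_r$ given in equation \ref{e:bar_u_t_scale}, namely $\bar{u}_r(x) = \frac{\dot{s}_r}{s_r}x + \dot{t}_r s_r u_{t_r}\!\left(\frac{x}{s_r}\right)$. Evaluated at $r = r_i$, this VF is the sum of a linear term in $x$ and a rescaled composition of $u_{t_i}$ with the affine map $x \mapsto x/s_i$, so its Lipschitz constant should split accordingly.

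First, for any $x,y\in\Real^d$, apply the triangle inequality to $\bar u_{r_i}(x)-\bar u_{r_i}(y)$ to separate the linear part from the nonlinear part. The linear part contributes $\norm{\tfrac{\dot s_i}{s_i}(x-y)} = \tfrac{|\dot s_i|}{s_i}\norm{x-y}$, using that $s_i>0$ by the parameter constraints in \eqref{e:theta_euler}/\eqref{e:theta_midpoint}. For the nonlinear part, write
\begin{equation*}
\norm{\dot t_i s_i\bigl[u_{t_i}(x/s_i)-u_{t_i}(y/s_i)\bigr]} \leq \dot t_i\, s_i\, L_u\,\norm{x/s_i - y/s_i} = \dot t_i\, L_u\,\norm{x-y},
\end{equation*}
using $\dot t_i>0$, $s_i>0$, and the Lipschitz assumption on $u_t$.

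Summing the two bounds and using $L_\tau \geq L_u$ yields
\begin{equation*}
\norm{\bar u_{r_i}(x)-\bar u_{r_i}(y)} \leq \left(\frac{|\dot s_i|}{s_i} + \dot t_i\, L_\tau\right)\norm{x-y},
\end{equation*}
which is exactly the claim. There is no real obstacle here; the only points requiring care are the sign/positivity of $s_i$ and $\dot t_i$ (which come from the definition of $\gF$ and the explicit parameter constraints in Section \ref{ss:two_use_cases}), and substituting $L_\tau$ for $L_u$ at the end to absorb the unknown true Lipschitz constant of the learned network.
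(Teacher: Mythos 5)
Your proposal is correct and follows essentially the same route as the paper's proof: write out $\bar u_{r_i}(x)-\bar u_{r_i}(y)$ from \eqref{e:bar_u_t_scale}, apply the triangle inequality to split the linear scaling term from the composed term, use the Lipschitz bound on $u_{t_i}$ so the $s_i$ factors cancel, and finally replace $L_u$ by $L_\tau$ via $L_\tau\geq L_u$. No gaps.
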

\begin{proof}[Proof of lemma \ref{lem:lipshitz_u_bar}]
    Since the original velocity field $u$ has a Lipshitz constant $L_{u}>0$, for every $t\in[0,1]$ and $x,y \in \R^d$
    \begin{equation}
        \norm{u_t(x) - u_t(y)} \le L_u\norm{x-y}.
    \end{equation}
    Hence
    \begin{align}
        \norm{\bar{u}_{r_i}(x) - \bar{u}_{r_i}(y)} 
        & = \norm{\frac{\dot{s}_i}{s_i}x + \dot{t}_is_iu_{t_i}\parr{\frac{x}{s_i}} - \parr{\frac{\dot{s}_i}{s_i}y + \dot{t}_is_iu_{t_i}\parr{\frac{y}{s_i}}}}\\
        &= \norm{\frac{\dot{s}_i}{s_i}(x-y) + \dot{t}_is_i \parr{u_{t_i}\parr{\frac{x}{s_i}}-u_{t_i}\parr{\frac{y}{s_i}}}}\\
        & \le \frac{\abs{\dot{s}_i}}{s_i}\norm{x-y} + \dot{t}_is_i\norm{ u_{t_i}\parr{\frac{x}{s_i}}-u_{t_i}\parr{\frac{y}{s_i}}}\\
        & \le \parr{\frac{\abs{\dot{s}_i}}{s_i} + \dot{t}_iL_u}\norm{x-y}\\
        & \le \parr{\frac{\abs{\dot{s}_i}}{s_i} + \dot{t}_iL_\tau}\norm{x-y}.
    \end{align}    
\end{proof}
We first apply the auxiliary lemma \ref{lem:lipshitz_u_bar} to compute a Lipschitz constant of $\odestep_x^\theta(t_i,\cdot\,;u_t)$ with RK1 (Euler method) as the base solver in lemma \ref{lem:bound_L_euler} and for RK2 (Midpoint method) as the base solver in lemma \ref{lem:bound_L_midpoint}.
\begin{restatable}{lemma}{lipeuler}\label{lem:bound_L_euler}
    (RK1 Lipschitz constant) Assume that the original velocity field $u_t$ has a Lipschitz constant $L_u>0$. Then, for every $L_\tau \geq L_u$,
    \begin{equation}
        L_i^{\theta} = \frac{s_i}{s_{i+1}}\parr{1+ h  L_{\Bar{u}}(r_i) },
    \end{equation}
    is a Lipschitz constant of RK1-Bespoke update step, where
    \begin{equation}
        L_{\Bar{u}}(r_i) = \frac{\abs{\dot{s}_i}}{s_i} + \dot{t}_iL_{\tau}.
    \end{equation}        
\end{restatable}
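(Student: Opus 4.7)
\textbf{Proof plan for Lemma \ref{lem:bound_L_euler}.} The strategy is to exploit the compositional structure built into \eqref{e:step_parametric}: for RK1-Bespoke the update in $x$-coordinates is nothing but an Euler step of the transformed VF $\bar{u}_r$ in the rescaled coordinates, sandwiched between $\varphi_{r_i}$ on the input side and $\varphi_{r_{i+1}}^{-1}$ on the output side. Since $\varphi_r(x)=s_r x$ and $\varphi_r^{-1}(x)=x/s_r$ are linear with operator norms $s_r$ and $1/s_r$ respectively, controlling the Lipschitz constant of the whole step reduces to controlling that of the Euler update of $\bar{u}_{r_i}$, which is in turn controlled by Lemma \ref{lem:lipshitz_u_bar}.

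Concretely, I would first rewrite the update rule \eqref{e:step_theta_euler} in the factored form
\begin{equation*}
\odestep_x^\theta(t_i,x;u_t) \;=\; \frac{1}{s_{i+1}}\Bigl(s_i\, x + h\,\bar{u}_{r_i}(s_i x)\Bigr),
\end{equation*}
which one verifies by plugging $\bar{u}_{r_i}(s_i x) = \dot{s}_i x + \dot{t}_i s_i u_{t_i}(x)$ from \eqref{e:bar_u_t_scale} and comparing with \eqref{e:step_theta_euler}. For $x,y\in\R^d$, I would then take the difference and pull out $1/s_{i+1}$, apply the triangle inequality, and invoke Lemma \ref{lem:lipshitz_u_bar} to bound
\begin{equation*}
\norm{\bar{u}_{r_i}(s_i x)-\bar{u}_{r_i}(s_i y)} \;\le\; L_{\bar{u}}(r_i)\,\norm{s_i x - s_i y} \;=\; s_i\,L_{\bar{u}}(r_i)\,\norm{x-y}.
\end{equation*}
Combining these bounds yields
\begin{equation*}
\norm{\odestep_x^\theta(t_i,x;u_t)-\odestep_x^\theta(t_i,y;u_t)} \;\le\; \frac{s_i}{s_{i+1}}\bigl(1 + h\,L_{\bar{u}}(r_i)\bigr)\,\norm{x-y},
\end{equation*}
which is the claimed Lipschitz constant $L_i^\theta$.

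There is essentially no serious obstacle, since Lemma \ref{lem:lipshitz_u_bar} has already absorbed the only non-trivial piece (the mixing of the scale derivative $\dot s_i/s_i$ with $\dot t_i L_u$ through the definition of $\bar u_r$). The only small bookkeeping point to make clear is that the bound $L_\tau \ge L_u$ is needed exactly at the step where one invokes Lemma \ref{lem:lipshitz_u_bar}, and that $s_i, s_{i+1} > 0$ (from the constraints in \eqref{e:theta_euler}) ensures the positivity of the scalar prefactors so no absolute values arise on the outside. I would finish by remarking that this bound is slightly looser than what one would obtain by working directly with \eqref{e:step_theta_euler} (where $|s_i + h\dot s_i|$ appears instead of $s_i + h|\dot s_i|$), which is a deliberate trade-off for the clean factorization through $\bar u_r$.
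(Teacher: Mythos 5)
Your proposal is correct and follows essentially the same route as the paper's proof: rewrite the RK1-Bespoke step in the factored form $\frac{1}{s_{i+1}}\parr{s_i x + h\bar{u}_{r_i}(s_i x)}$, then apply the triangle inequality together with Lemma \ref{lem:lipshitz_u_bar} to obtain $\frac{s_i}{s_{i+1}}\parr{1+hL_{\bar{u}}(r_i)}$. The bookkeeping points you flag (where $L_\tau\geq L_u$ enters, and positivity of $s_i,s_{i+1}$) are consistent with the paper's argument.
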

\begin{proof}[Proof of lemma \ref{lem:bound_L_euler}]
    We begin with writing an explicit expression of $\odestep_x^\theta(t_i,x,;u_t)$ for Euler solver in terms of the transformed velocity field $\bar{u}_r$. That is, 
    \begin{align}
        \odestep_x^\theta(t_i,x,;u_t) &= \frac{1}{s_{i+1}}\brac{s_i x + h\bar{u}_{r_i}(s_i x)}.
    \end{align}
    So that applying the triangle inequality and lemma \ref{lem:lipshitz_u_bar} gives
    \begin{align*}
        \norm{\odestep_x^\theta(t_i,x;u_t) - \odestep_x^\theta(t_i,y;u_t)}
        & = \frac{1}{s_{i+1}}\norm{s_i x + h\bar{u}_{r_i}(s_i x) -
            \brac{s_i y + h\bar{u}_{r_i}(s_i y)}}\\
        & \le  \frac{s_i}{s_{i+1}}\norm{x-y} 
        + \frac{h}{s_{i+1}} \norm{ 
        \bar{u}_{r_i}(s_i x) - \bar{u}_{r_i}(s_i y)
        }\\
        & \le \frac{s_i}{s_{i+1}}\norm{x - y}
        + \frac{h}{s_{i+1}} \parr{\frac{\abs{\dot{s}_i}}{s_i} + \dot{t}_iL_\tau}\norm{s_i x -s_i y}
        \\
        & = \frac{s_i}{s_{i+1}}\parr{1+ h\parr{\frac{\abs{\dot{s}_i}}{s_i} + \dot{t}_iL_\tau}}\norm{x - y}.
    \end{align*}
\end{proof}
\begin{lemma}\label{lem:bound_L_midpoint}(RK2 Lipschitz constant)
    Assume that the original velocity field $u_t$ has a Lipschitz constant $L_u>0$. Then for every $L_\tau \geq L_u$ 
    \begin{equation}
        L_i^{\theta} =  \frac{s_i}{s_{i+1}}\brac{
    1 + h L_{\bar{u}}(r_{i+\frac{1}{2}})\parr{ 1+ \frac{h}{2}L_{\bar{u}}(r_i) }
    }
    \end{equation}
    is a Lipschitz constant of RK2-Bespoke update step, where
    \begin{equation}
        L_{\Bar{u}}(r_i) = \frac{\abs{\dot{s}_i}}{s_i} + \dot{t}_iL_{\tau}.
    \end{equation}    
\end{lemma}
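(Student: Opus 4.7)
The plan is to mirror the structure of Lemma \ref{lem:bound_L_euler}: first rewrite the RK2-Bespoke update step in terms of the transformed velocity field $\bar{u}_r$ so that it becomes a plain midpoint step applied to $\bar{u}$, then peel off the two levels of composition with Lipschitz bounds supplied by Lemma \ref{lem:lipshitz_u_bar}. Concretely, starting from equations \ref{e:step_theta_midpoint} and \ref{e:z_i} and using the expression for $\bar{u}_r$ in \eqref{e:bar_u_t_scale}, one verifies by direct algebraic manipulation that
\begin{equation*}
\odestep_x^\theta(t_i,x;u_t) = \frac{1}{s_{i+1}}\Bigl[s_i x + h\,\bar{u}_{r_{i+\frac{1}{2}}}\!\Bigl(s_i x + \tfrac{h}{2}\bar{u}_{r_i}(s_i x)\Bigr)\Bigr],
\end{equation*}
which is exactly the RK2 (Midpoint) step \eqref{e:midpoint} applied to $\bar{u}_r$ at the input $(r_i,s_i x)$ followed by the scaling $1/s_{i+1}$. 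This is the step I would carry out first; it is a direct substitution but needs to be written out carefully using $z_i/s_{i+\frac{1}{2}}$ inside $u_{t_{i+\frac{1}{2}}}$ to match the definition of $\bar{u}_{r_{i+\frac{1}{2}}}$.

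Once this reformulation is in place, I would estimate the difference $\odestep_x^\theta(t_i,x;u_t) - \odestep_x^\theta(t_i,y;u_t)$ by applying the triangle inequality to the two explicit terms. The linear term contributes $\tfrac{s_i}{s_{i+1}}\norm{x-y}$. For the nonlinear term I apply Lemma \ref{lem:lipshitz_u_bar} to the outer evaluation of $\bar{u}_{r_{i+\frac{1}{2}}}$ (obtaining the factor $L_{\bar{u}}(r_{i+\frac{1}{2}})$), and then a second time to the inner midpoint prediction, which itself is an RK1 step for $\bar{u}_{r_i}$ applied to $s_i x$ and $s_i y$; by the argument used in Lemma \ref{lem:bound_L_euler}, this inner difference is bounded by $s_i\bigl(1+\tfrac{h}{2}L_{\bar{u}}(r_i)\bigr)\norm{x-y}$. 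Assembling the factors and dividing by $s_{i+1}$ produces exactly
\begin{equation*}
L_i^\theta = \frac{s_i}{s_{i+1}}\Bigl[1 + h\,L_{\bar{u}}(r_{i+\frac{1}{2}})\Bigl(1 + \tfrac{h}{2}L_{\bar{u}}(r_i)\Bigr)\Bigr],
\end{equation*}
as claimed.

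The only step that requires care is the algebraic verification of the reformulation above, since the update in \eqref{e:step_theta_midpoint} is written in terms of $u_{t_{i+\frac{1}{2}}}$ and $z_i$ rather than $\bar{u}$, so one must match the prefactors $\dot{s}_{i+\frac{1}{2}}/s_{i+\frac{1}{2}}$ and $\dot{t}_{i+\frac{1}{2}}s_{i+\frac{1}{2}}$ in \eqref{e:step_theta_midpoint} with the two terms defining $\bar{u}_{r_{i+\frac{1}{2}}}$ in \eqref{e:bar_u_t_scale}, and verify that $z_i = s_i x + \tfrac{h}{2}\bar{u}_{r_i}(s_i x)$. After that identification, the bounding argument is a mechanical two-fold application of Lemma \ref{lem:lipshitz_u_bar} together with the triangle inequality, completely analogous to the proof of Lemma \ref{lem:bound_L_euler}.
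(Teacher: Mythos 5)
Your proposal is correct and follows essentially the same route as the paper's proof: rewrite the RK2-Bespoke step as a midpoint step for $\bar{u}_r$ applied at $s_i x$ (with $z_i = s_i x + \tfrac{h}{2}\bar{u}_{r_i}(s_i x)$), then apply the triangle inequality together with two invocations of Lemma \ref{lem:lipshitz_u_bar}, once for the outer evaluation at $r_{i+\frac{1}{2}}$ and once for the inner half-step at $r_i$. The bookkeeping of the factors $\tfrac{s_i}{s_{i+1}}$, $L_{\bar{u}}(r_{i+\frac{1}{2}})$, and $s_i(1+\tfrac{h}{2}L_{\bar{u}}(r_i))$ matches the paper exactly, so no changes are needed.
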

\begin{proof}[Proof of lemma \ref{lem:bound_L_midpoint}]
    We begin by writing explicit expression of $\odestep_x^\theta(t_i,x;u_t)$ for RK2 (Midpoint) method in terms of the transformed velocity field $\bar{u}_r$. We set 
    \begin{align}
        z = s_i x + \frac{h}{2}\bar{u}_{r_i}(s_i x), \quad w = s_i y + \frac{h}{2}\bar{u}_{r_i}(s_i y).
    \end{align}
    then 
    \begin{equation}
        \odestep_x^\theta(t_i,x,;u_t) = \frac{1}{s_{i+1}}\brac{s_i x + h\bar{u}_{r_{i+\frac{1}{2}}}(z)}, 
    \end{equation}
    and
    \begin{equation}
        \odestep_x^\theta(t_i,y;u_t) = \frac{1}{s_{i+1}}\brac{s_i y + h\bar{u}_{r_{i+\frac{1}{2}}}(w)}.
    \end{equation}
    So that applying the triangle inequality and lemma \ref{lem:lipshitz_u_bar} gives
    \begin{align} \nonumber
        \norm{\odestep_x^\theta(t_i,x;u_t) - \odestep_x^\theta(t_i,y;u_t)}
        & \le  \frac{s_i}{s_{i+1}}\norm{x-y} 
        + \frac{h}{s_{i+1}} \norm{ 
        \bar{u}_{r_{i+\frac{1}{2}}}\parr{z} - \bar{u}_{r_{i+\frac{1}{2}}}\parr{w}}\\ \label{ea:lipschitz_bound_RK2}
        & \le \frac{s_i}{s_{i+1}}\norm{x-y} 
        + \frac{h}{s_{i+1}} L_{\bar{u}}(r_{i+\frac{1}{2}})\norm{z - w}.
    \end{align}
    We apply the triangle inequality and the lemma\ref{lem:lipshitz_u_bar} again to $\norm{z-w}$. That is,
    \begin{align*}
        \norm{z-w}
        & = \norm{ s_i x + \frac{h}{2}\bar{u}_{r_i}(s_i x) - \parr{s_i y + \frac{h}{2}\bar{u}_{r_i}\parr{s_i y}}}\\
        & \le s_i \norm{x-y} + \frac{h}{2}\norm{\bar{u}_{r_i}(s_i x) - \bar{u}_{r_i}\parr{s_i y}}\\
        & \le s_i \norm{x-y} + \frac{h}{2}L_{\bar{u}}(r_i)s_i\norm{x-y}\\
        & = s_i \parr{1+\frac{h}{2}L_{\bar{u}}(r_i)}\norm{x-y}.
    \end{align*}
Substitute back in \eqref{ea:lipschitz_bound_RK2} gives
\begin{align*}
    \frac{s_i}{s_{i+1}}\norm{x-y} 
        + \frac{h}{s_{i+1}} L_{\bar{u}}(r_{i+\frac{1}{2}})\norm{z-w}
    & \le \frac{s_i}{s_{i+1}}\brac{
    1 + h L_{\bar{u}}(r_{i+\frac{1}{2}})\parr{ 1+ \frac{h}{2}L_{\bar{u}}(r_i) }
    }\norm{x-y}.\\
    %
\end{align*}
\end{proof}



\section{Derivation of parametric solver $\odestep^\theta$}
\label{a:parametric_solver_step}
    (Appendix to Section \ref{ss:two_use_cases}.)\\    
This section presents a derivation of $n$-step parametric solver 
\begin{equation}
 \odestep^\theta(t,x\,;u_t) = \parr{\odestep_t^\theta(t,x\,;u_t),\odestep_x^\theta(t,x\,;u_t)} \end{equation}
for scale-time transformation (\eqref{e:scale_time}) with two options for a base solver: (i) RK1 method (Euler) as the base solver; and $(ii)$ RK2 method (Midpoint). We do so by following equation \ref{e:step_parametric_1}-\ref{e:step_parametric}. We begin with RK1 and derive \eqref{e:step_theta_euler}. Given $(t_i,x_i)$, \eqref{e:step_parametric_1} for the scale time transformation is,
\begin{equation}\label{e:drive_param_step_1}
    \bar{x}_i = s_ix_i.
\end{equation}
Then according to \eqref{e:step_parametric_2},
\begin{align}
    \bar{x}_{i+1} &= \odestep_x(r_i, \bar{x}_{i}, \bar{u}_{r_i})\\
    & = \bar{x}_{i} +h\bar{u}_{r_i}(\bar{x}_{i})\\
    & = \bar{x}_{i} +h \parr{\frac{\dot{s}_i}{s_i}\bar{x}_{i} + \dot{t}_i s_i u_{t_i}\parr{\frac{\bar{x}_{i}}{s_i}}}\\
    & = s_ix_i + h \parr{\dot{s}_ix_i + \dot{t}_i s_i u_{t_i}(x_i)},
\end{align}
where in the second equality we apply an RK1 step (\eqref{e:euler}), in the third equality we substitute $\bar{u}_{r_i}$ using \eqref{e:bar_u_t_scale}, and in the fourth equality we substitute $\bar{x}_i$ as in \eqref{e:drive_param_step_1}. According to RK1 step (\eqref{e:euler}) we also have $r_{i+1}=r_i+h$. Finally, \eqref{e:step_parametric} gives,
\begin{align}
    \odestep_t^\theta(t_i,x_i\,;u_t) &= t_{i+1} \\
    \odestep_x^\theta(t_i,x_i\,;u_t) &= \frac{s_i +h\dot{s_i}}{s_{i+1}}x_i + \frac{h}{s_{i+1}}\dot{t}_i s_i u_{t_i}(x_i),
\end{align}
as in \eqref{e:step_theta_euler}.\\ 

Regarding the second case, \eqref{e:step_parametric_2} for the RK2 method (\eqref{e:midpoint}) is,
\begin{align}
    \bar{x}_{i+1} &= \odestep_x(r_i, \bar{x}_{i}, \bar{u}_{r_i})\\
    & = \bar{x}_{i} +h\bar{u}_{r_{i+\frac{1}{2}}}(\bar{x}_{i+\frac{1}{2}}),
\end{align}
where 
\begin{equation}
    \bar{x}_{i+\frac{1}{2}} = \bar{x}_i + \frac{h}{2}\bar{u}_{r_i}(\bar{x}_i)
\end{equation}
is the RK1 step from $(r_i,\bar{x}_i)$ with step size $h/2$. Now substituting $\bar{x}_i$ as defined \eqref{e:drive_param_step_1} and $\bar{u}_r$ as defined in \eqref{e:bar_u_t_scale} we get
\begin{align}
    \bar{x}_{i+1} = s_i x_i  + h \parr{\frac{\dot{s}_{i+\frac{1}{2}}}{s_{i+\frac{1}{2}}}\bar{x}_{{i+\frac{1}{2}}} + \dot{t}_{i+\frac{1}{2}} s_{i+\frac{1}{2}} u_{t_{i+\frac{1}{2}}}\parr{\frac{\bar{x}_{{i+\frac{1}{2}}}}{s_{i+\frac{1}{2}}}}}. 
\end{align}
where 
\begin{equation}
    \bar{x}_{i+\frac{1}{2}} = \parr{s_i + \frac{h}{2}\dot{s}_i}x_i + \frac{h}{2}s_i \dot{t}_i u_{t_i}(x_i).
\end{equation}
Lastly, according to \eqref{e:step_parametric} we have
\begin{equation}
    \odestep_x^\theta(t_i,x\,;u_t) = \frac{s_i}{s_{i+1}}x_i +\frac{h}{s_{i+1}}\parr{\frac{\dot{s}_{i+\frac{1}{2}}}{s_{i+\frac{1}{2}}}\bar{x}_{{i+\frac{1}{2}}} + \dot{t}_{i+\frac{1}{2}} s_{i+\frac{1}{2}} u_{t_{i+\frac{1}{2}}}\parr{\frac{\bar{x}_{{i+\frac{1}{2}}}}{s_{i+\frac{1}{2}}}}},
\end{equation}
as in \eqref{e:step_theta_midpoint} where $z_i=\bar{x}_{i+\frac{1}{2}}$.


\section{Implementation Details}
\label{a:impl_details}
(Appendix to Section \ref{ss:implementaiton_of_the_rmse_bespoke_loss}.)\\
This section presents further implementation details, complementing the main text. Our parametric family of solvers $\odestep^\theta$ is defined via a base solver $\odestep$ and a transformation $(t_r,\ \varphi_r)$ as defined in \eqref{e:step_parametric}.  We consider the RK2 (Midpoint, \eqref{e:midpoint}) method as the base solver with $n$ steps and $(t_r,\ \varphi_r)$ the scale-time transformation (\eqref{e:scale_time}). That is, $\varphi_r(x)=s_rx$, where $s:[0,1]\rightarrow \R_{>0}$, as in \eqref{e:scale}, which is our primary use case.

\textbf{Parameterization of $t_i$.} Remember that $t_r$ is a strictly monotonic, differentiable, increasing function $t:[0,1]\rightarrow [0,1]$. Hence, $t_i$ must satisfy the  constraints as in \eqref{e:theta_midpoint}, \ie, 
\begin{align}
    & 0=t_0 < t_{\frac{1}{2}} < \cdots < t_n=1\\
    & \dot{t}_0, \dot{t}_{\frac{1}{2}}, \ldots, \dot{t}_{n-1}, \dot{t}_{n-\frac{1}{2}} > 0.
\end{align}
To satisfy these constrains, we model $t_i$ and $\dot{t}_i$ via 
\begin{equation}
    t_i = \frac{\sum_{j=0}^i|\theta^t_j|}{\sum_{k=0}^n|\theta^t_k|}, \quad \dot{t}_i = |\theta^{\dot{t}}_i|,
\end{equation}
where $\theta^t_i$ and $\theta^{\dot{t}}_i$, $i=0,\frac{1}{2},...,n$ are free learnable parameters. 

\textbf{Parameterization of $s_i$.} Since $s_r$ is a strictly positive, differentiable function satisfying a boundary condition at $r=0$, the sequence $s_i$ should satisfy the constraints as in \eqref{e:theta_midpoint}, \ie, 
\begin{align}
     s_{\frac{1}{2}}, s_1, \ldots, s_n > 0 \ \ , \ \ s_0=1,
\end{align}
and $\dot{s}_i$ are unconstrained. Similar to the above, we model $s_i$ and $\dot{s}_i$ by 
\begin{equation}
    s_i = \begin{cases}
        0  &  i=0\\
        \exp{\theta^s_i} &  \text{otherwise}
    \end{cases},\quad \dot{s}_i = \theta_i^{\dot{s}},
\end{equation}
where $\theta^s_i$ and $\theta^{\dot{s}}_i$, $i=0,\frac{1}{2},...,n$ are free learnable parameters.  

\textbf{Bespoke training.} The pseudo-code for training a Bespoke solver is provided in Algorithm \ref{alg:bes_training}. Here we add some more details on different steps of the training algorithm. We initialize the parameters $\theta$ such that the scale-transformation is the Identity transformation. That is, for every $i=0,\frac{1}{2},...,n$,
\begin{align}
    & t_i = \frac{i}{n},\quad \dot{t}_i = 1,\\
    & s_i  =1,\quad \dot{s}_i =0.
\end{align}
Explicitly, in terms of the learnable parameters, for every $i=0,\frac{1}{2},...,n$,
\begin{align}
    & \theta_i^t =1, \quad \theta_i^{\dot{t }}=1\\
    &\theta^s_i=0,\quad \theta_i^{\dot{s}}=0.
\end{align}
To compute the GT path $x(t)$, we solve the ODE in \eqref{e:ode} with the pre-trained model $u_t$ and DOPRI5 method, then use linear interpolation to extract $x(t_i)$, $i=0,1,...,n$~\citep{torchdiffeq}. Then, apply $x_i^{\text{aux}}(t)$ (\eqref{e:x_aux}) to correctly handle the gradients w.r.t.~to $\theta^t_i$. To compute the loss $\mathcal{L}_{\text{bes}}$ (\eqref{e:loss_bes}) we compute $x_{i+1}=\odestep_x^{\theta}\parr{x^{\aux}_i(t_{i}),t_{i};u_t}$ with equations \ref{e:step_theta_midpoint},\ref{e:z_i}, and compute $M_i$ via lemma \ref{lem:bound_L_midpoint} with $L_\tau=1$. Finally, we use Adam optimizer \cite{kingma2017adam} with a learning rate of $2e^{-3}$.

\newpage
\section{Bespoke RK1 versus RK2}
\label{a:rk1_vs_rk2}
\begin{figure}[h!]
    \centering
    \begin{tabular}{@{\hspace{0pt}}c@{\hspace{0pt}}c@{\hspace{0pt}}}  \quad ImageNet 64: FM/$v$-CS & \quad ImageNet 64: FM-OT \\
     \includegraphics[width=0.335\textwidth]{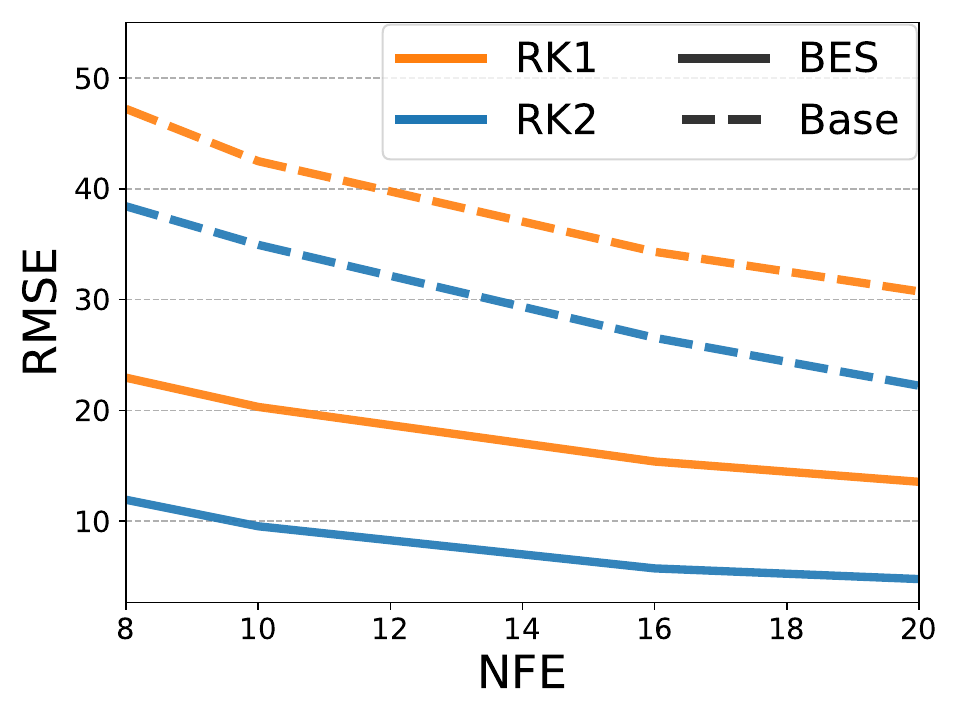} & \includegraphics[width=0.335\textwidth]{figures/rk1_vs_rk2/imagenet64_rk1_vs_rk2_fm_ot_rmse_vs_nfe_trans_vs_baseline.pdf} \\
      \includegraphics[width=0.335\textwidth]{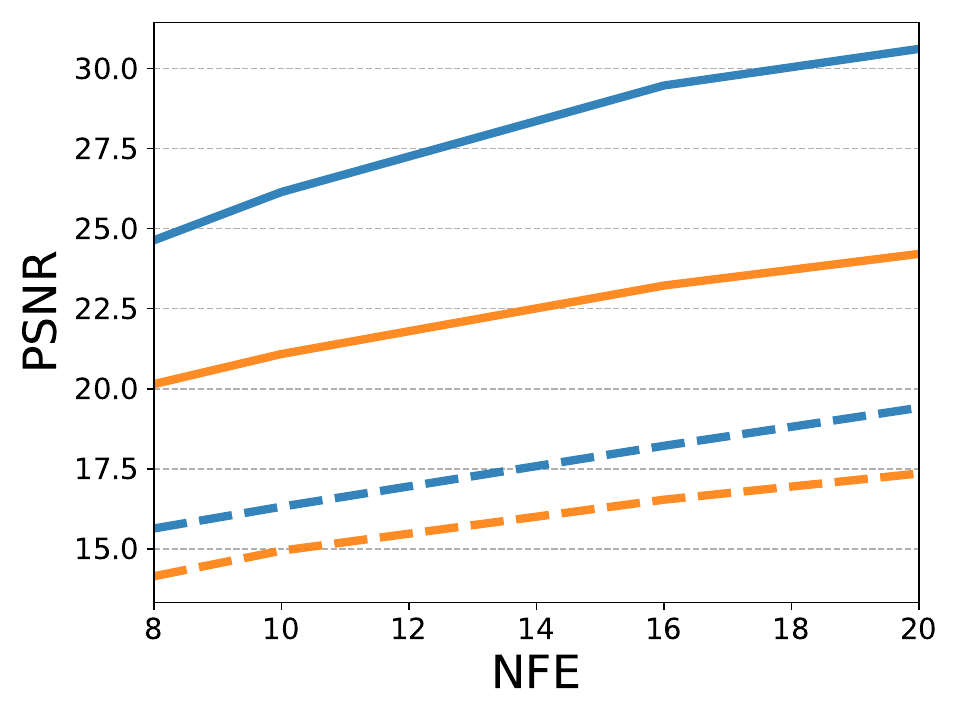} & \includegraphics[width=0.335\textwidth]{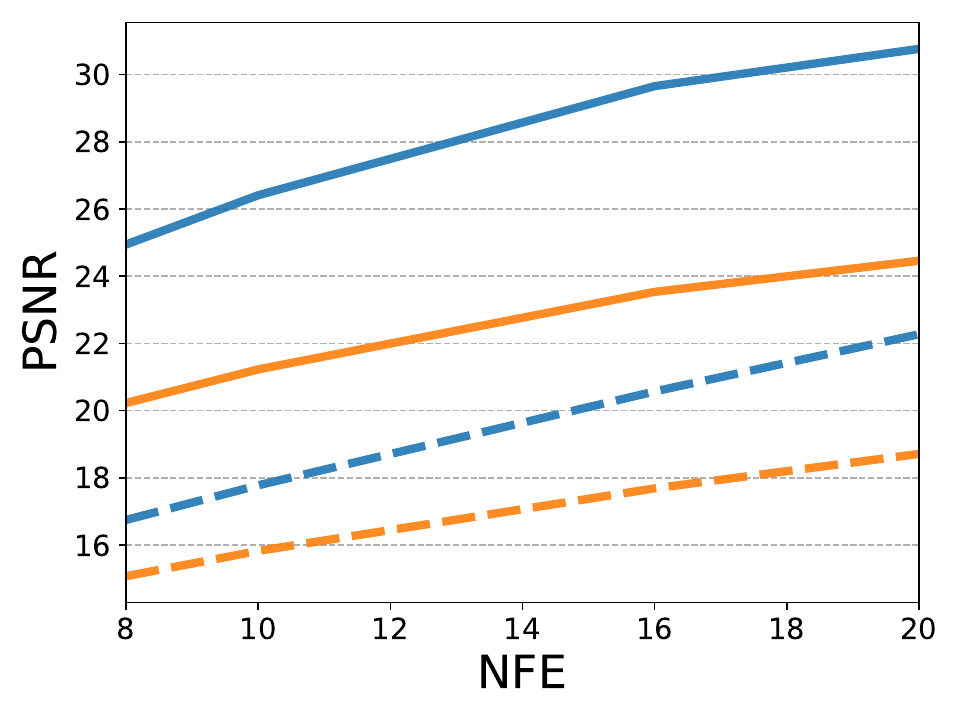} 
\end{tabular}    
    \caption{Bespoke RK1, Bespoke RK2, RK1, and RK2 solvers on ImageNet-64 models: RMSE vs.~NFE (top row), and PSNR vs.~NFE (bottom row).}
    \label{fig_a:rk1_vs_rk2_imagenet64}
\end{figure}

\begin{figure}[h!]
    \centering
    \begin{tabular}{@{\hspace{0pt}}c@{\hspace{0pt}}c@{\hspace{0pt}}c@{\hspace{0pt}}} \quad CIFAR10: $\eps$-VP & \quad CIFAR10: FM/$v$-CS & \quad CIFAR10: FM-OT \\
    \includegraphics[width=0.335\textwidth]{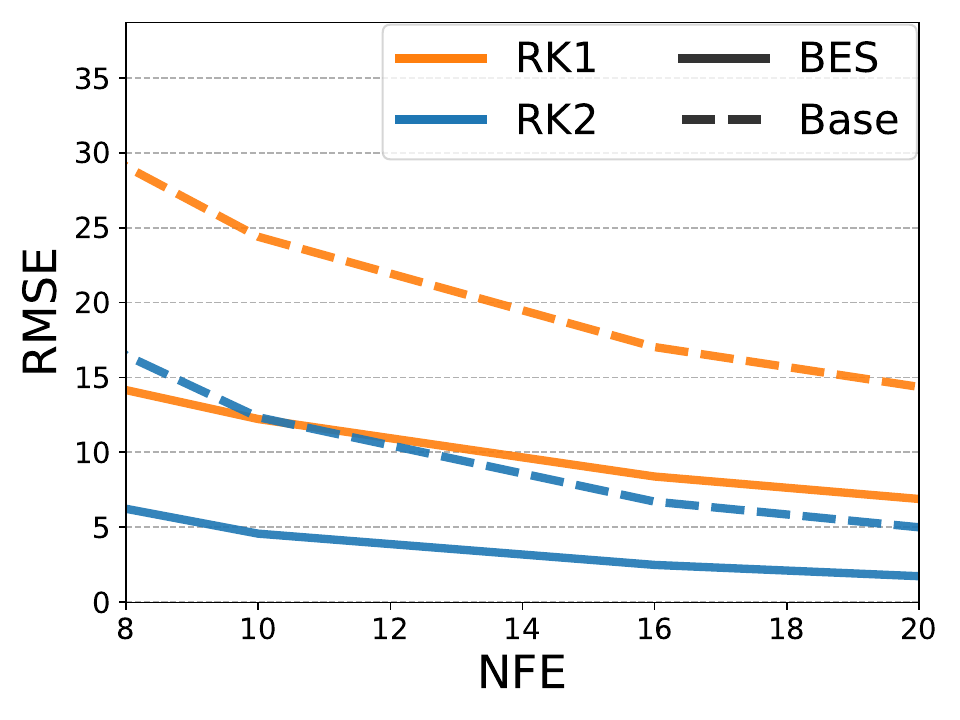} & \includegraphics[width=0.335\textwidth]{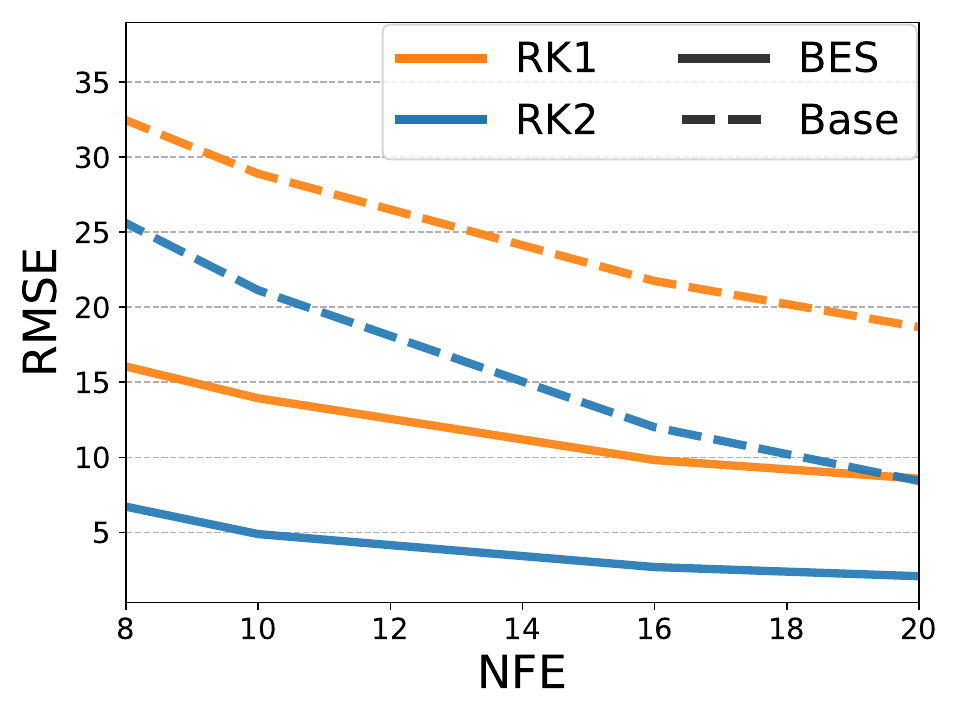} & \includegraphics[width=0.335\textwidth]{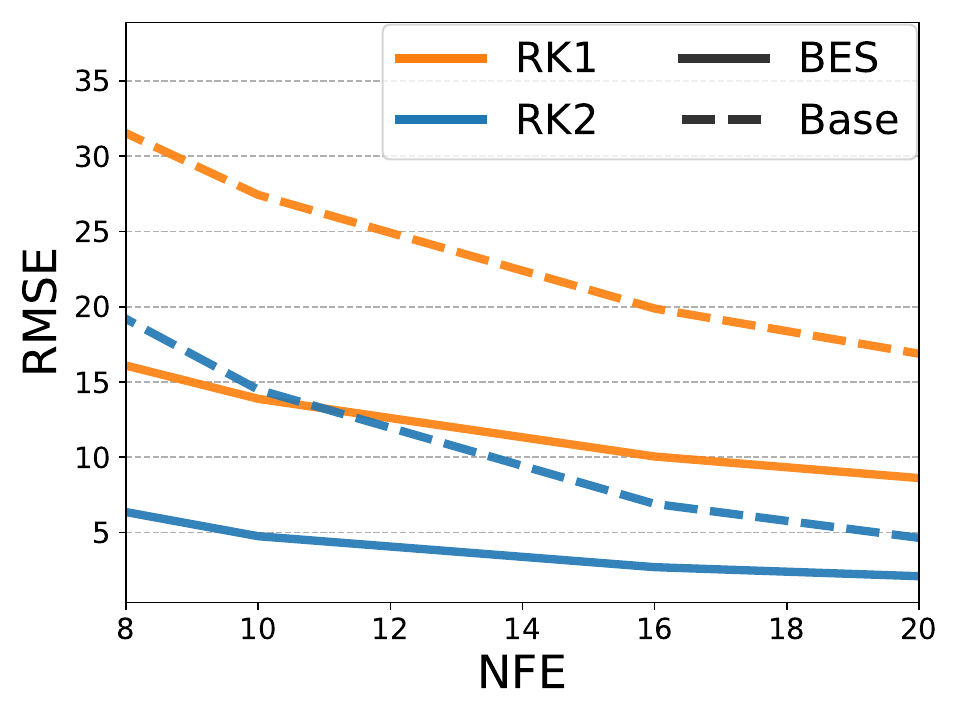} \\
     \includegraphics[width=0.335\textwidth]{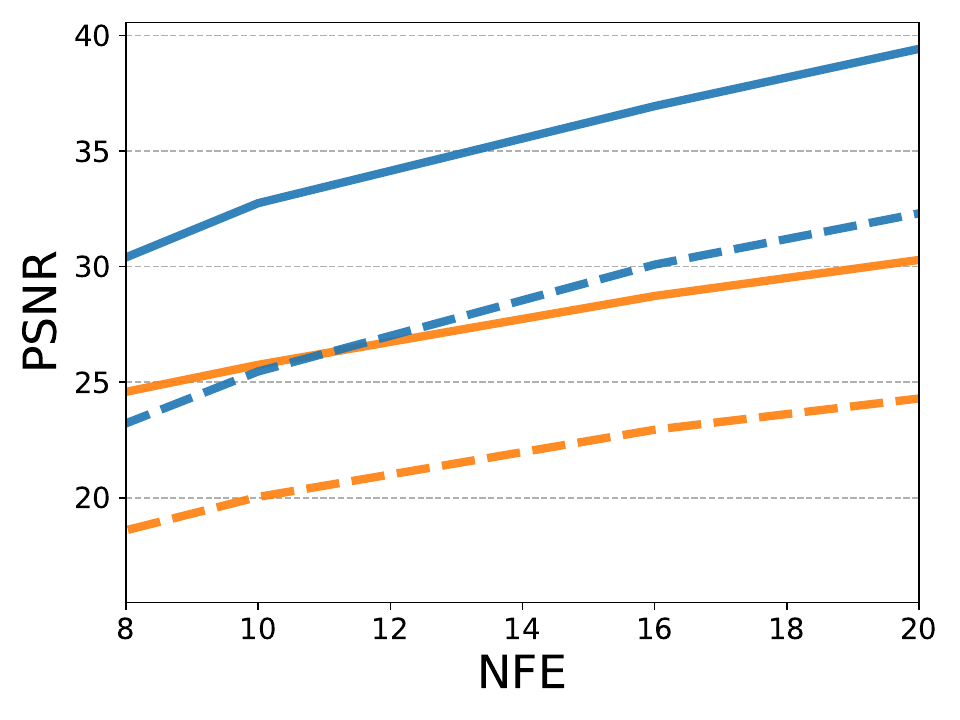}  & \includegraphics[width=0.335\textwidth]{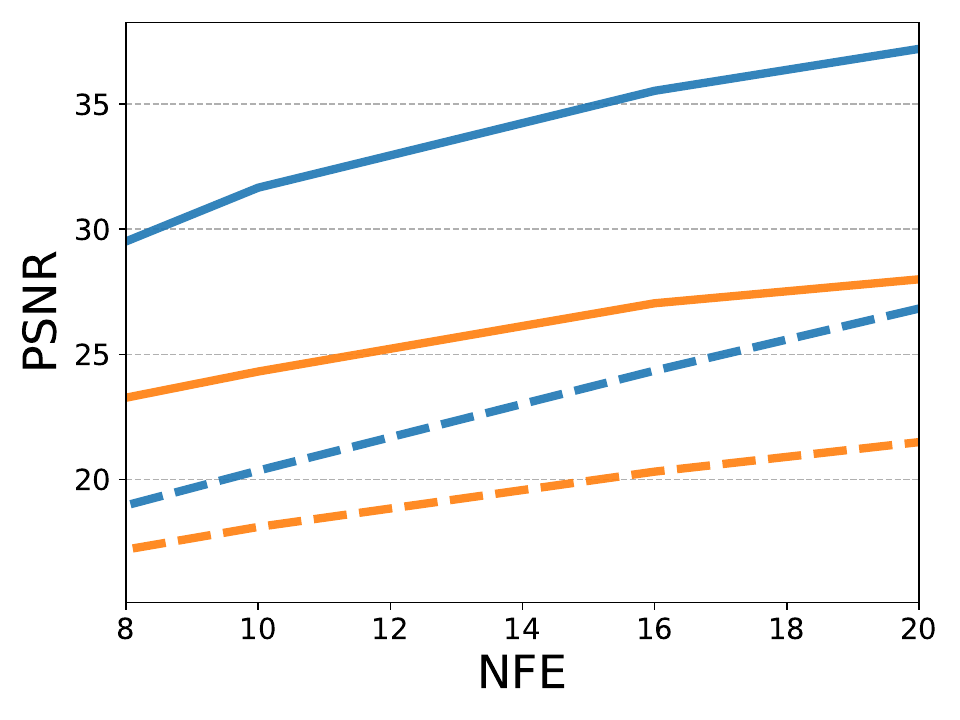} & \includegraphics[width=0.335\textwidth]{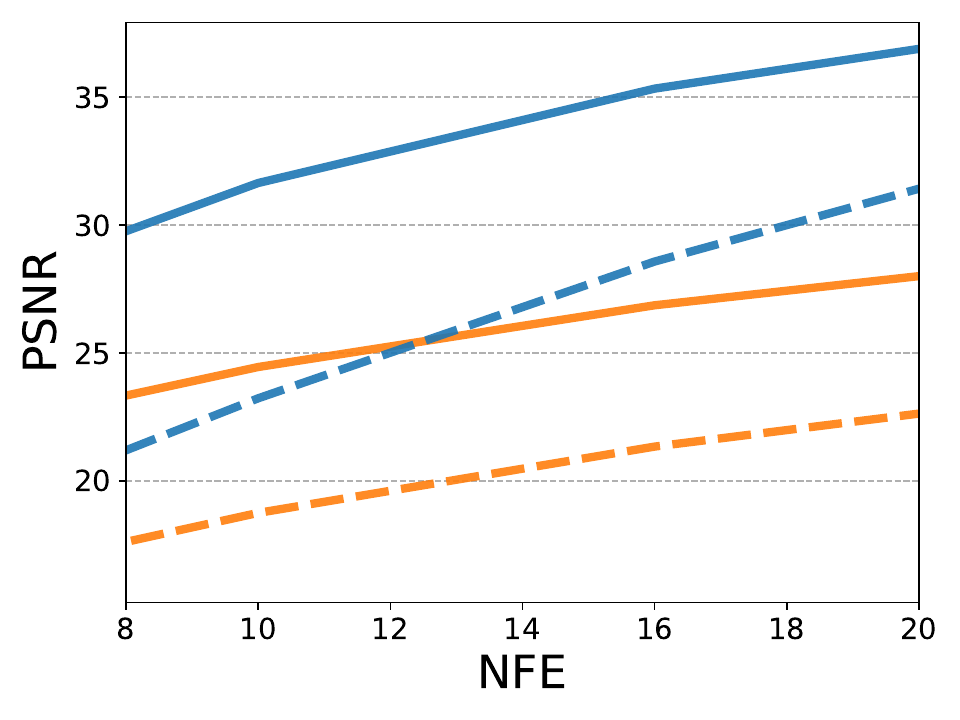} \\
\end{tabular}    
    \caption{Bespoke RK1, Bespoke RK2, RK1, and RK2 solvers on CIFAR10: RMSE vs.~NFE (top row), and PSNR vs.~NFE (bottom row).}
    \label{fig_a:rk1_vs_rk2_cifar10}
\end{figure}

\newpage
\section{CIFAR10}
\label{a:image_gen_cifar10}


\begin{table}[h!]
    \centering
    \resizebox{0.66\textwidth}{!}{
    \begin{tabular}{lcccc} 
     \toprule
      CIFAR10 &  NFE & FID & GT-FID/\% & \%Time   \\ \midrule 
     \textbf{\textit{RK2-BES}} \ 
     \begin{tabular}[t]{r}
           $\eps$-VP \\$\eps$-VP \\ $\eps$-VP\\ $\eps$-VP
     \end{tabular}
     & 
     \begin{tabular}[t]{c}
         8\\ 10\\ 16\\ 20
     \end{tabular}
     &
     \begin{tabular}[t]{c}
          4.26\\ 3.31\\ 2.84\\ 2.75
     \end{tabular}  
     &
     \begin{tabular}[t]{c}
          2.54 
     \end{tabular} / 
     \begin{tabular}[t]{c}
          168\\ 130\\ 112\\ 108
     \end{tabular}  
     &
     \begin{tabular}[t]{c}
          1.4\\ 1.5\\ 1.5\\ 1.4
     \end{tabular}  \\[38pt] 
      \textbf{\textit{RK2-BES}} \ 
     \begin{tabular}[t]{r}
           FM/$v$-CS \\ FM/$v$-CS \\ FM/$v$-CS \\ FM/$v$-CS 
     \end{tabular}
     & 
     \begin{tabular}[t]{c}
          8\\ 10\\ 16\\ 20
     \end{tabular}
     &
     \begin{tabular}[t]{c}
          3.50\\ 2.89\\ 2.68\\ 2.64
     \end{tabular}  
     &
     \begin{tabular}[t]{c}
          2.61
     \end{tabular} /
     \begin{tabular}[t]{c}
          134\\ 111\\ 103\\ 101
     \end{tabular}  
     &
     \begin{tabular}[t]{c}
         0.5\\ 0.6\\ 0.6\\ 0.6
     \end{tabular}  \\[38pt] 
     \textbf{\textit{RK2-BES}} \ 
     \begin{tabular}[t]{r}
           FM-OT \\ FM-OT \\ FM-OT \\ FM-OT 
     \end{tabular}
     & 
     \begin{tabular}[t]{c}
          8\\ 10\\ 16\\ 20
     \end{tabular}
     &
     \begin{tabular}[t]{c}
         3.13\\ 2.73\\ 2.60\\ 2.59
     \end{tabular}
     &
     \begin{tabular}[t]{c}
          2.57
     \end{tabular} / 
     \begin{tabular}[t]{c}
          122\\ 106\\ 101\\ 101
     \end{tabular}  
     &
     \begin{tabular}[t]{c}
          0.5\\ 0.6\\ 0.6\\ 0.6
     \end{tabular}  \\ 
     \bottomrule 
    \end{tabular} } 
    \caption{CIFAR10 Bespoke solvers. We report best FID vs.~NFE, the ground truth FID (GT-FID) for the model and FID/GT-FID in \%, and the fraction of GPU time (in \%) required to train the bespoke solver w.r.t.~training the original model. }\label{tab:cifar10_a}
\end{table}
\begin{figure}[h!]
    \centering
    \begin{tabular}{@{\hspace{0pt}}c@{\hspace{0pt}}c@{\hspace{0pt}}c@{\hspace{0pt}}} \quad CIFAR10: $\eps$-VP & \quad  CIFAR10: FM/$v$-CS & \quad CIFAR10: FM-OT \\
    \includegraphics[width=0.335\textwidth]{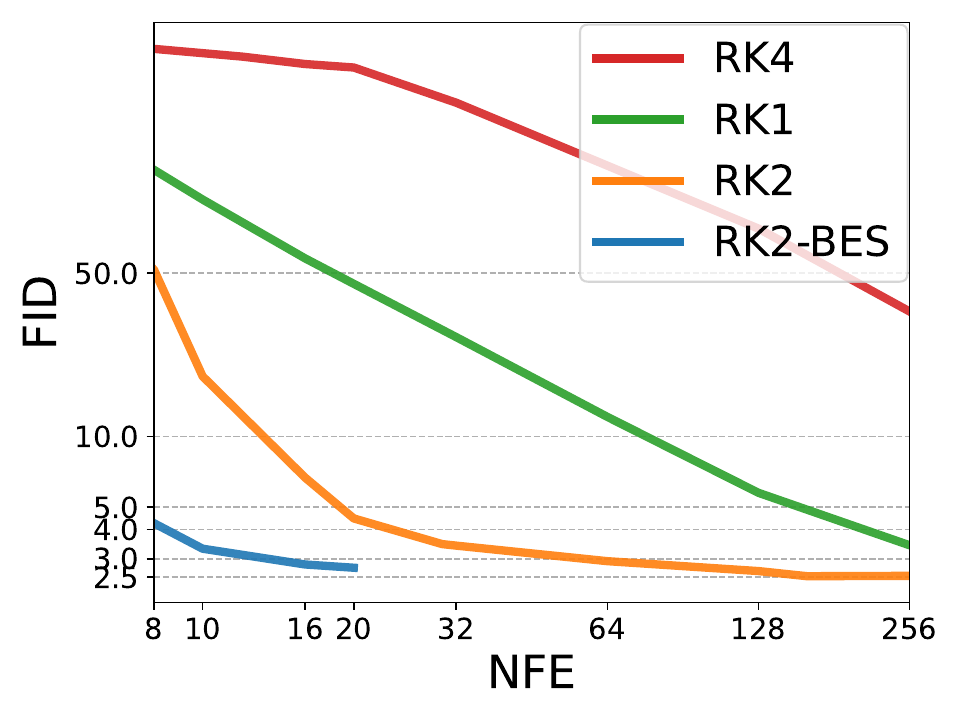} & \includegraphics[width=0.335\textwidth]{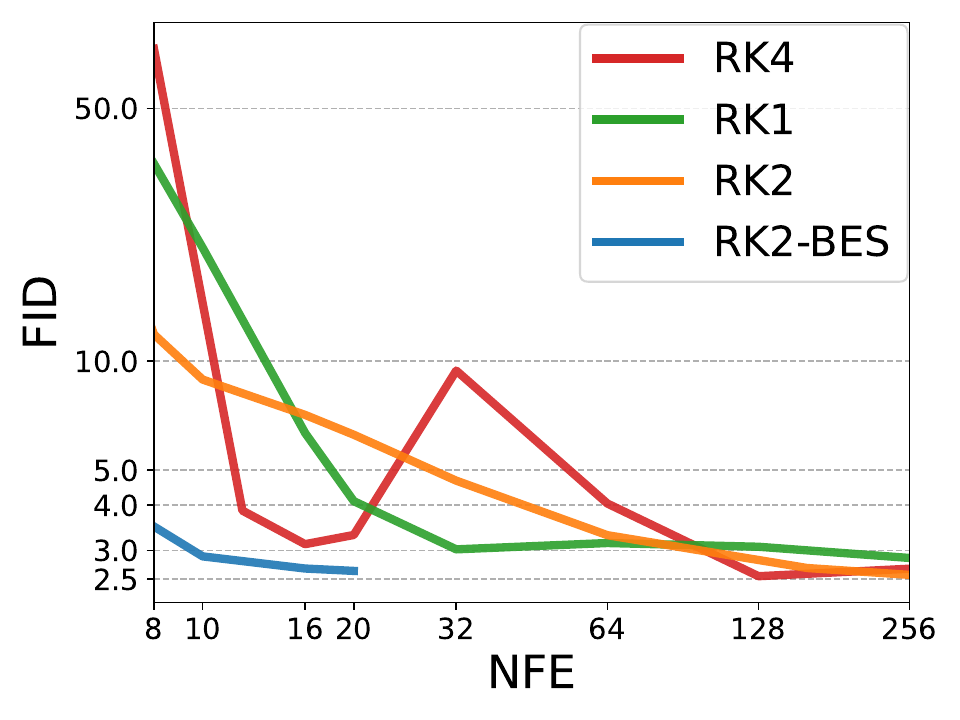} & \includegraphics[width=0.335\textwidth]{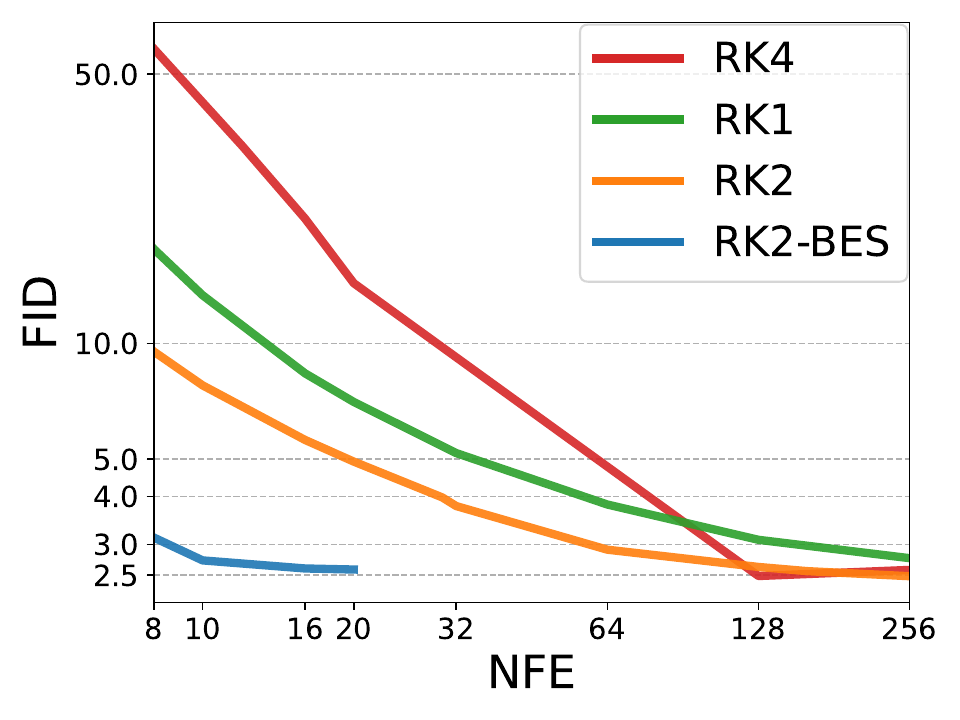} \\
     \includegraphics[width=0.335\textwidth]{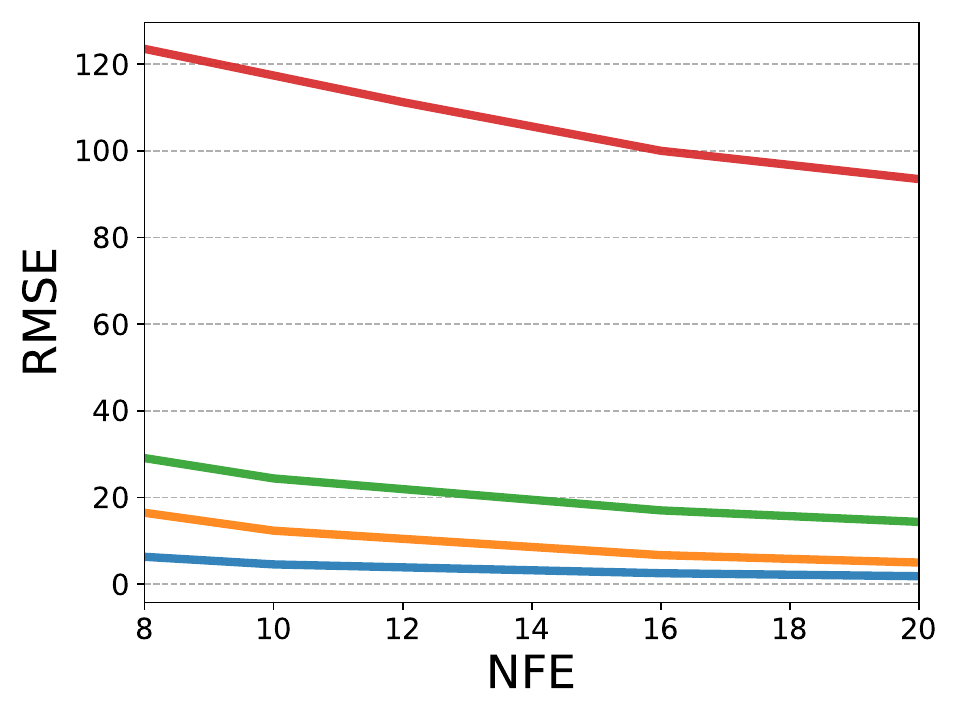}  & \includegraphics[width=0.335\textwidth]{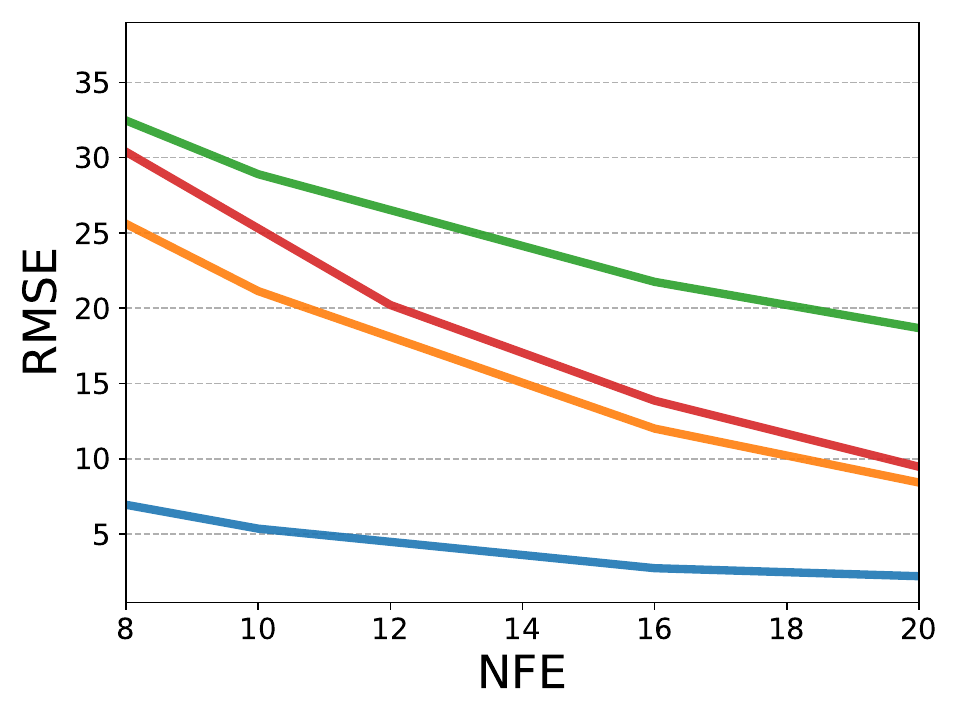} & \includegraphics[width=0.335\textwidth]{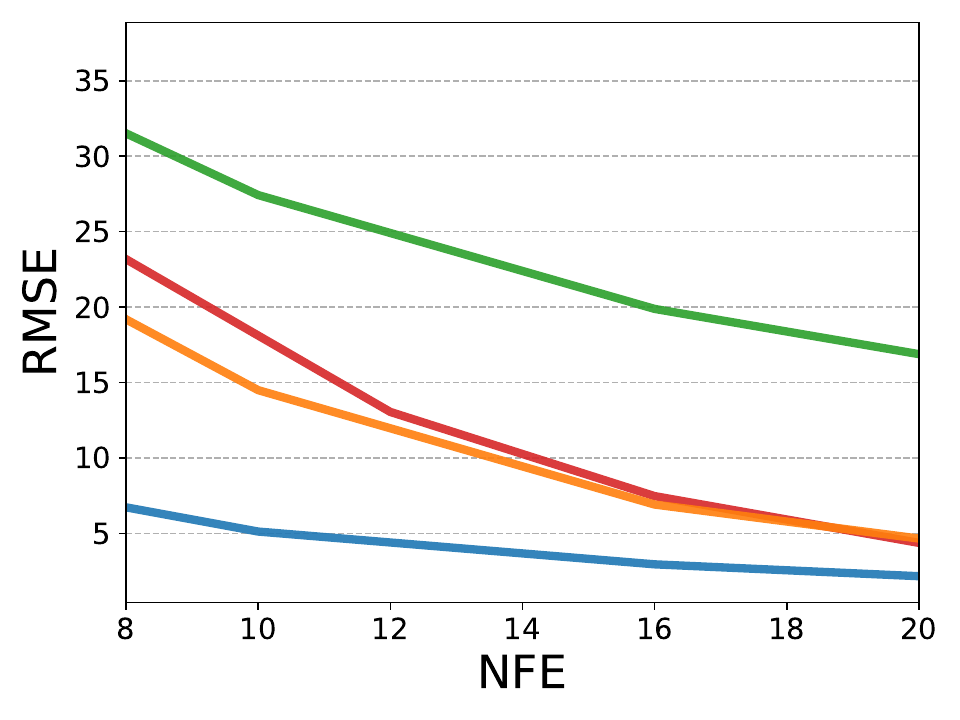} \\
     \includegraphics[width=0.335\textwidth]{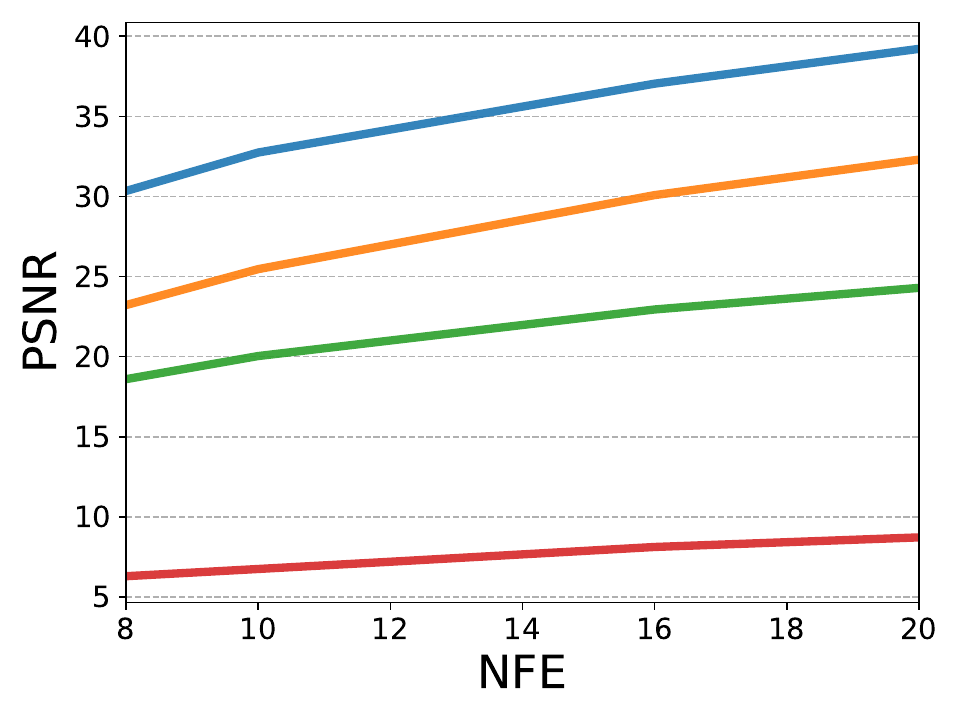}  & \includegraphics[width=0.335\textwidth]{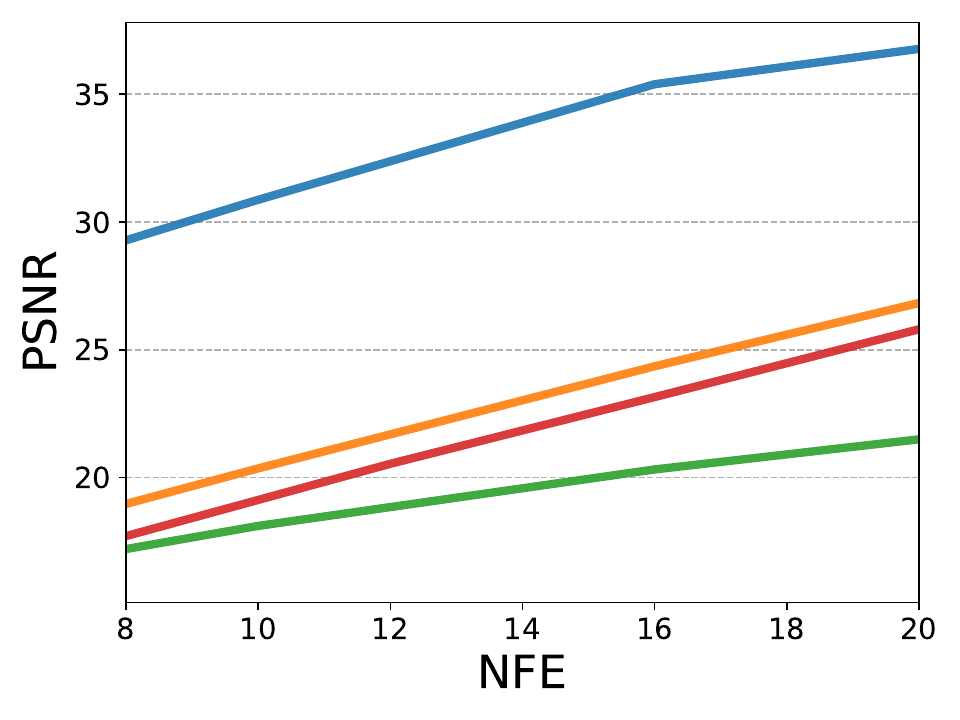} & \includegraphics[width=0.335\textwidth]{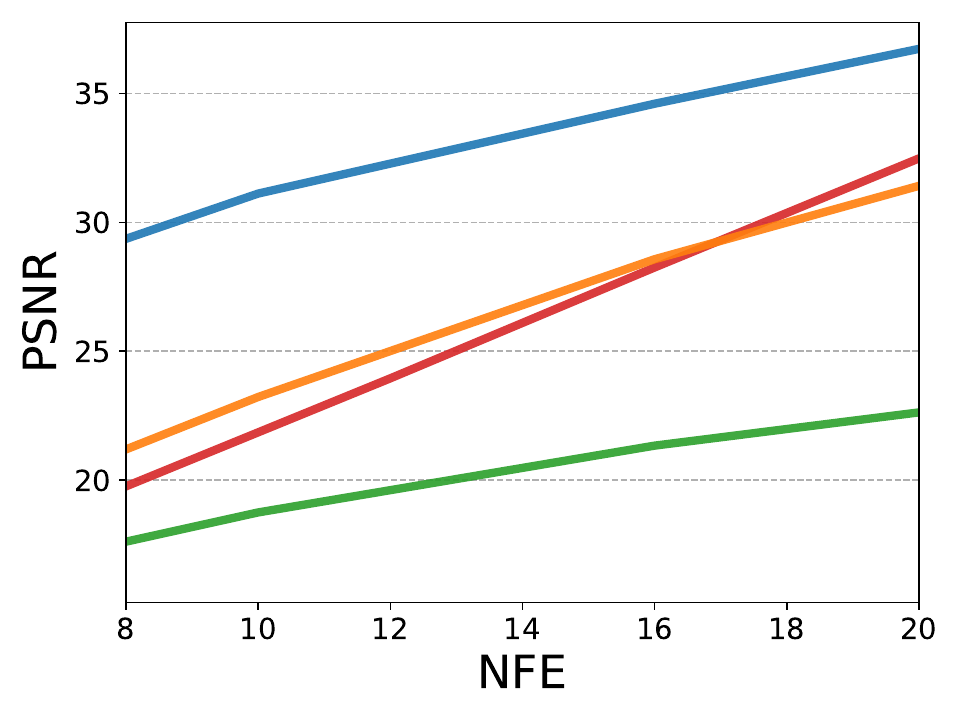} 
\end{tabular}    
    \caption{CIFAR10 sampling with Bespoke RK2 solvers vs.~RK1,RK2,RK4: FID vs.~NFE (top row), RMSE vs.~NFE (middle row), and PSNR vs.~NFE (bottom row).}
    \label{fig:nfe_vs_fid_cifar10}    
\end{figure}

\newpage
\section{ImageNet-64/128}
\label{a:image_gen_imagenet}
\begin{figure}[h!]
    \centering
        \begin{tabular}{@{\hspace{0pt}}c@{\hspace{0pt}}c@{\hspace{0pt}}c@{\hspace{0pt}}c@{\hspace{0pt}}}
        {\quad \ \scriptsize ImageNet-64: $\eps$-pred} & {\quad \ \scriptsize ImageNet-64: FM/$v$-CS} & {\quad \ \scriptsize ImageNet-64: FM-OT}  & {\quad \ \scriptsize ImageNet-128: FM-OT} \\
         \includegraphics[width=0.25\textwidth]{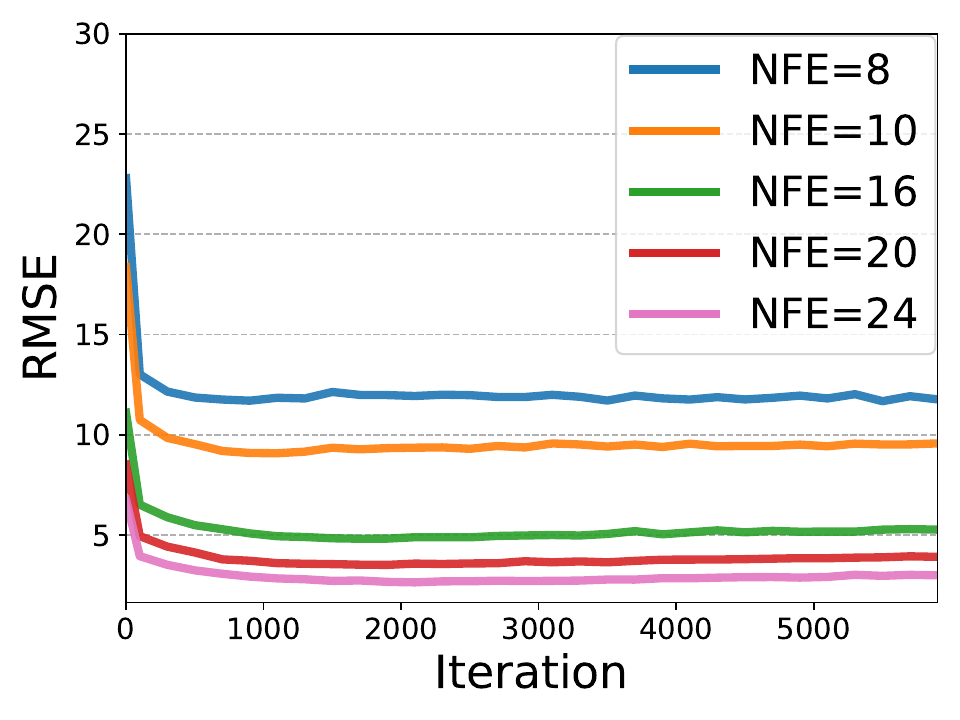}  & \includegraphics[width=0.25\textwidth]{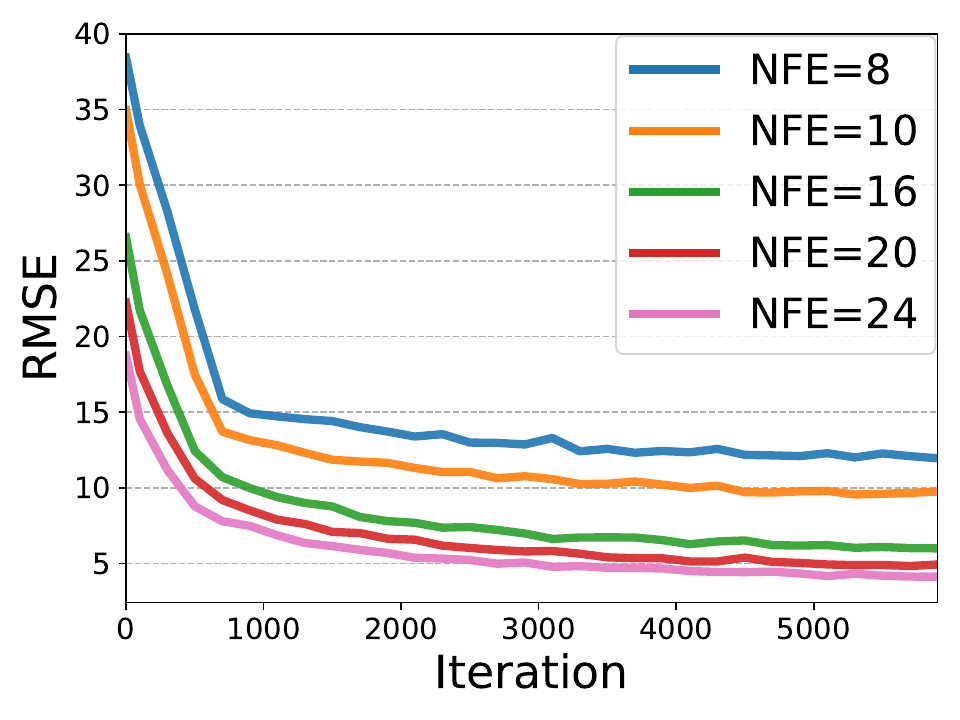} & \includegraphics[width=0.25\textwidth]{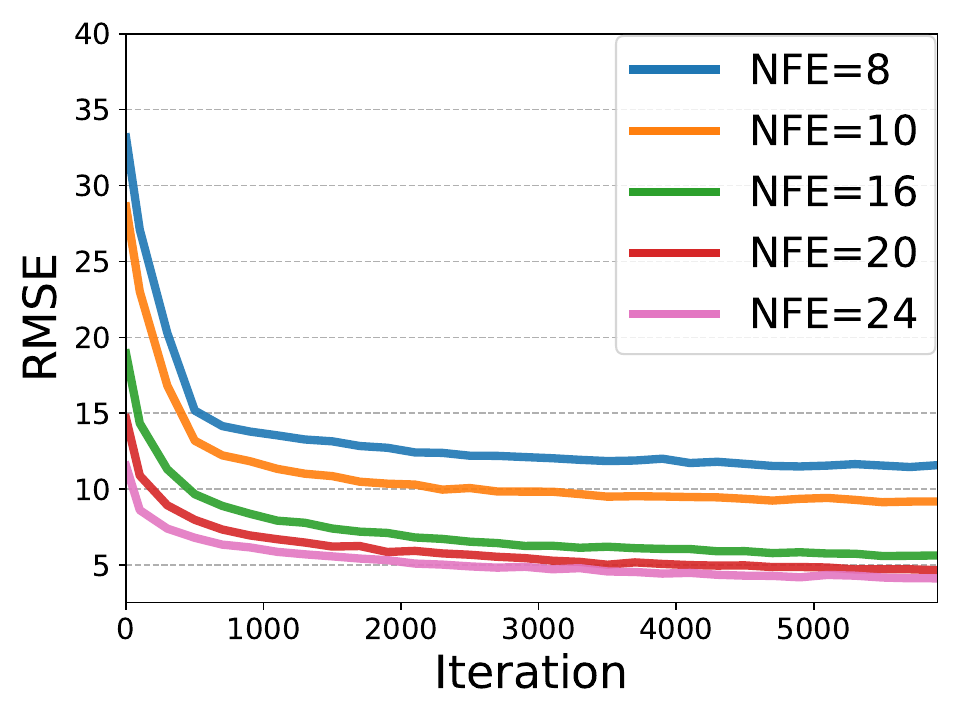} &
         \includegraphics[width=0.25\textwidth]{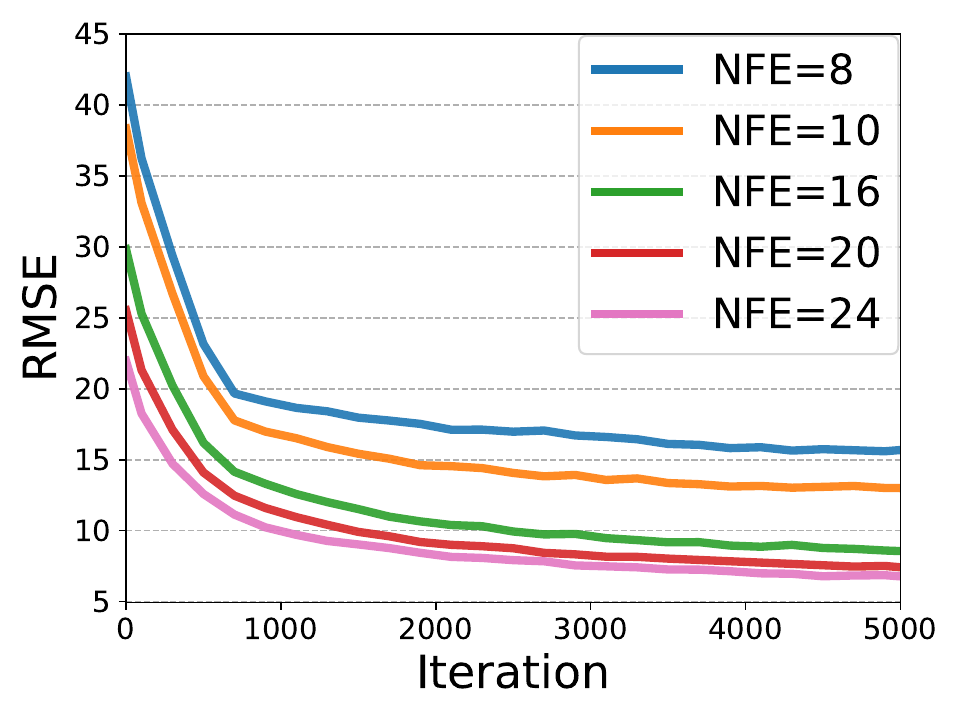}
        \end{tabular}    
    \caption{Validation RMSE vs.~training iterations of Bespoke RK2 solvers on ImageNet-64, and ImageNet-128.}
    \label{fig:rmse_vs_iteration_imagenet}
\end{figure}

\begin{figure}[h!]
    \centering
        \begin{tabular}{@{\hspace{0pt}}c@{\hspace{0pt}}c@{\hspace{0pt}}c@{\hspace{0pt}}c@{\hspace{0pt}}}
        {\quad \ \scriptsize ImageNet-64: $\eps$-pred} & {\quad \ \scriptsize ImageNet-64: FM/$v$-CS} & {\quad \ \scriptsize ImageNet-64: FM-OT}  & {\quad \ \scriptsize ImageNet-128: FM-OT} \\
         \includegraphics[width=0.25\textwidth]{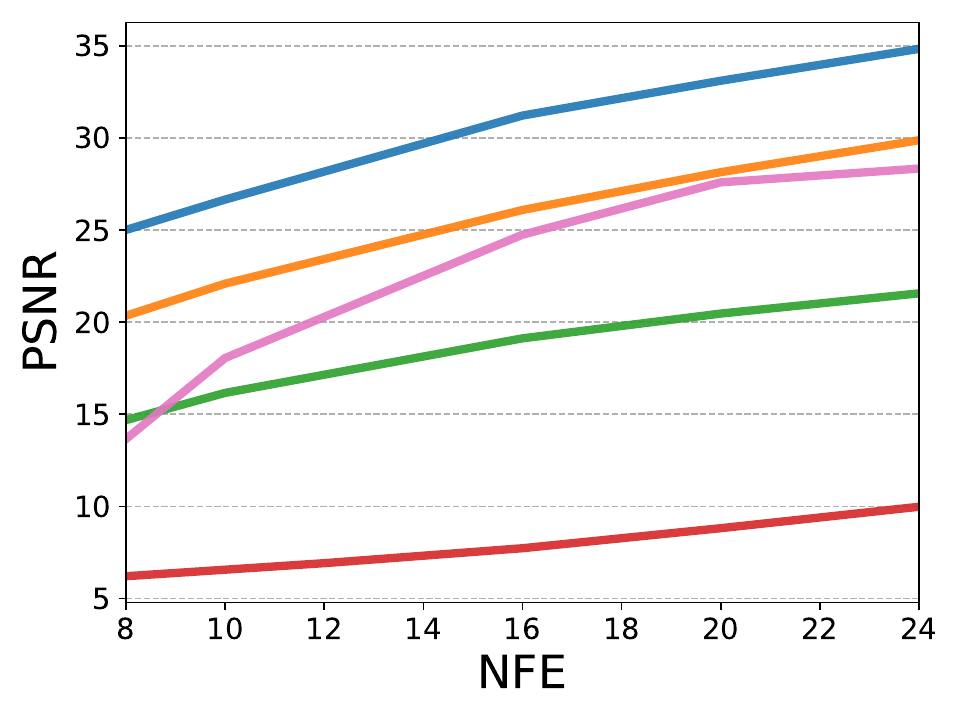}  & \includegraphics[width=0.25\textwidth]{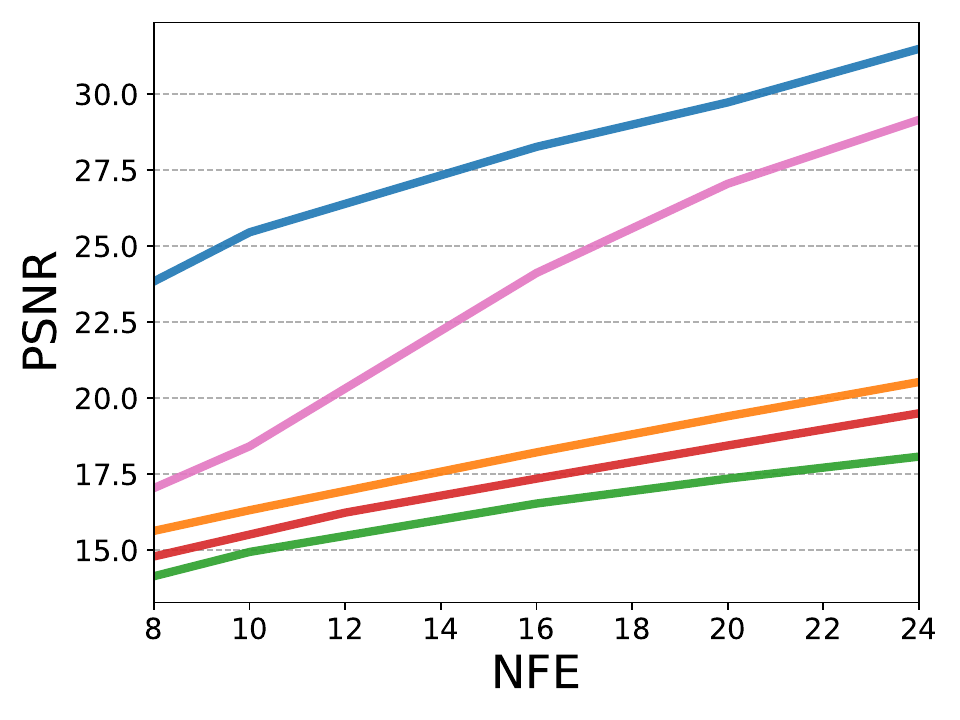} & \includegraphics[width=0.25\textwidth]{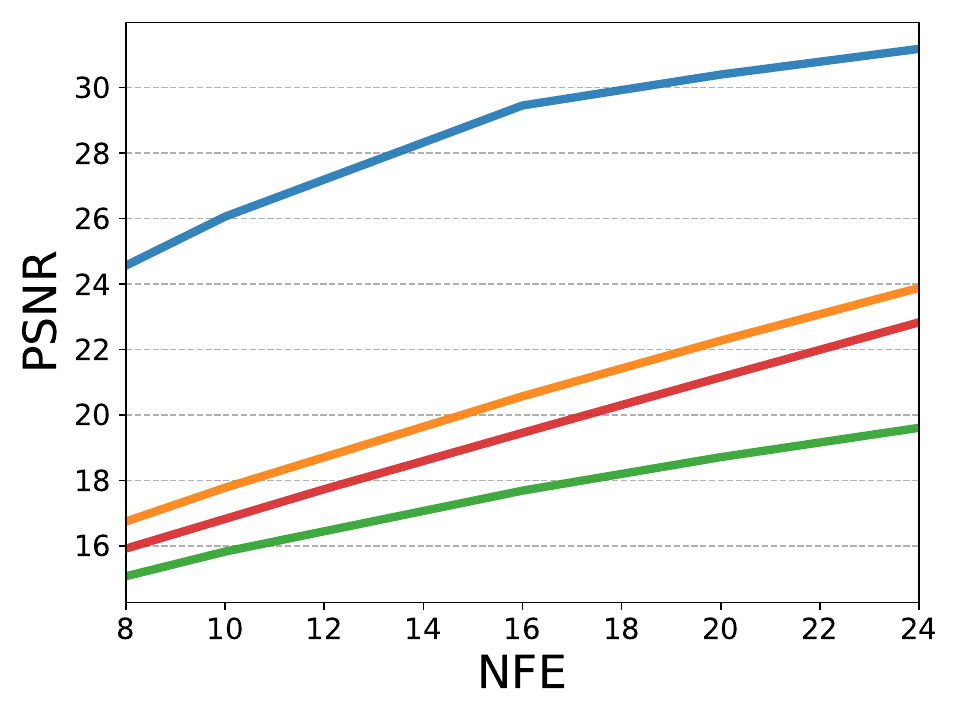} &
         \includegraphics[width=0.25\textwidth]{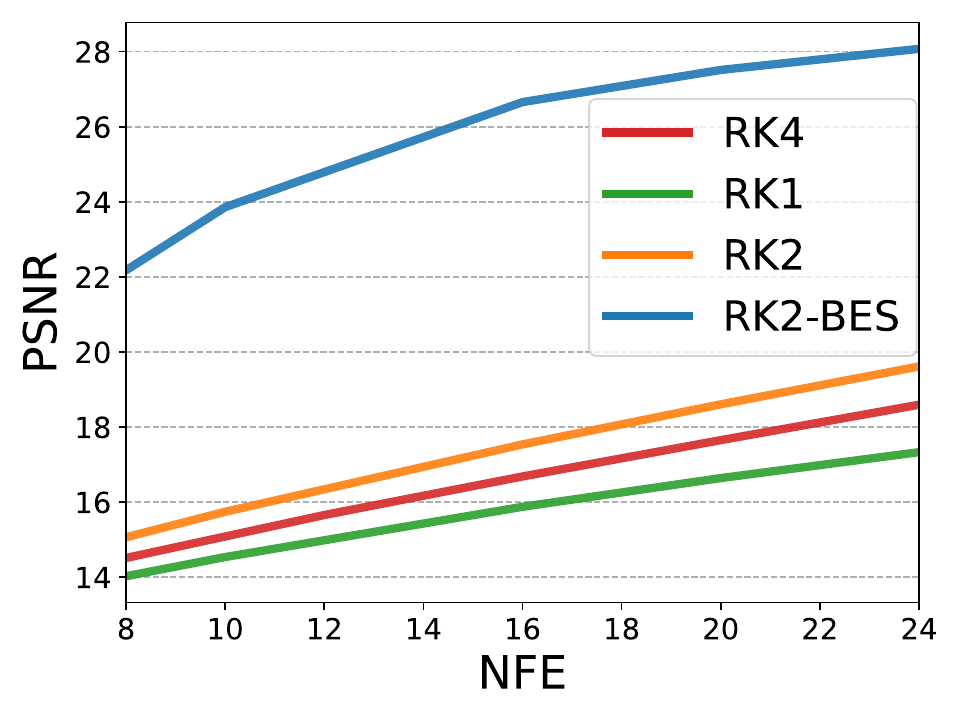}
        \end{tabular}    
    \caption{Bespoke RK2, RK1, RK2, and RK4 solvers on ImageNet-64, and ImageNet-128; PSNR vs.~NFE.  }
    \label{fig:nfe_vs_psnr_imagenet}
\end{figure}

\section{AFHQ-256}
\label{a:image_gen_afhq}
\begin{figure}[h!]
    \centering
    \begin{tabular}{@{\hspace{0pt}}c@{\hspace{0pt}}c@{\hspace{0pt}}}     \includegraphics[width=0.25\textwidth]{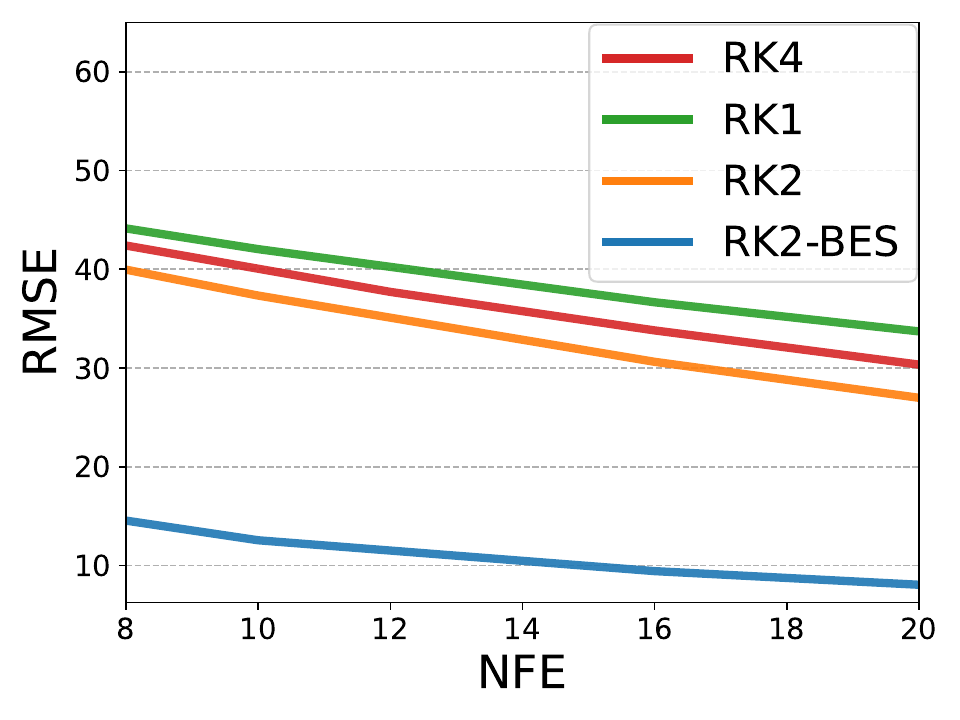} & \includegraphics[width=0.25\textwidth]{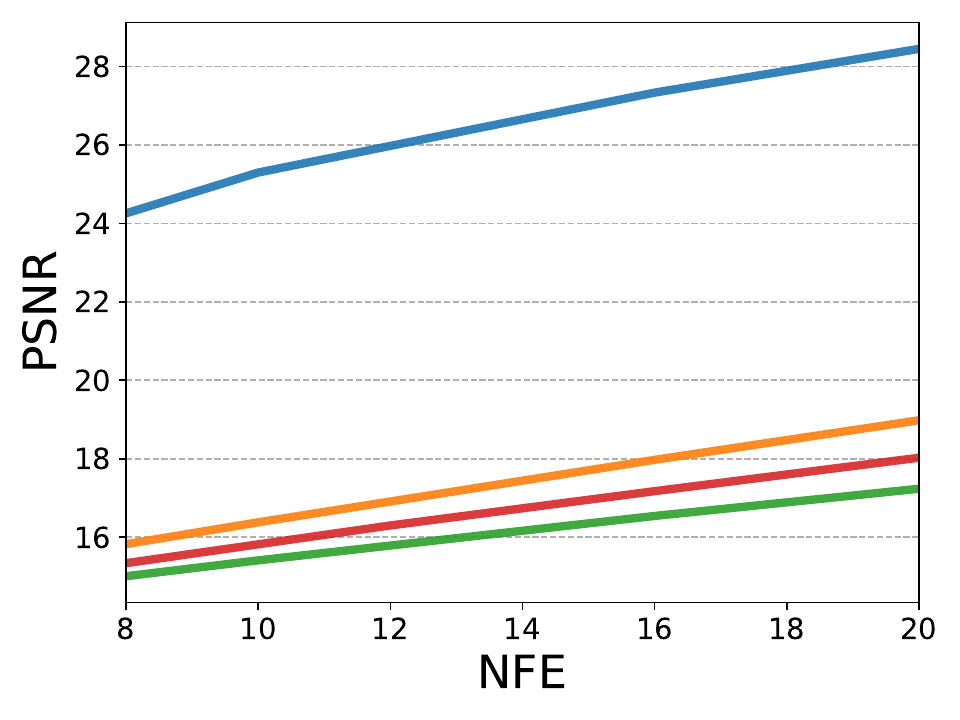} 
\end{tabular}    
    \caption{Bespoke RK2, RK1, RK2, and RK4 solvers on AFHQ-256; PSNR vs.~NFE (left), and RMSE vs.~NFE (right).}
    \label{fig:nfe_vs_metrics_afhq}
\end{figure}

\newpage
\section{Ablations}
\label{a:ablations}

\begin{figure}[h!]
    \centering
    \begin{tabular}{@{\hspace{0pt}}c@{\hspace{0pt}}c@{\hspace{0pt}}c@{\hspace{0pt}}}
    \includegraphics[width=0.335\textwidth]{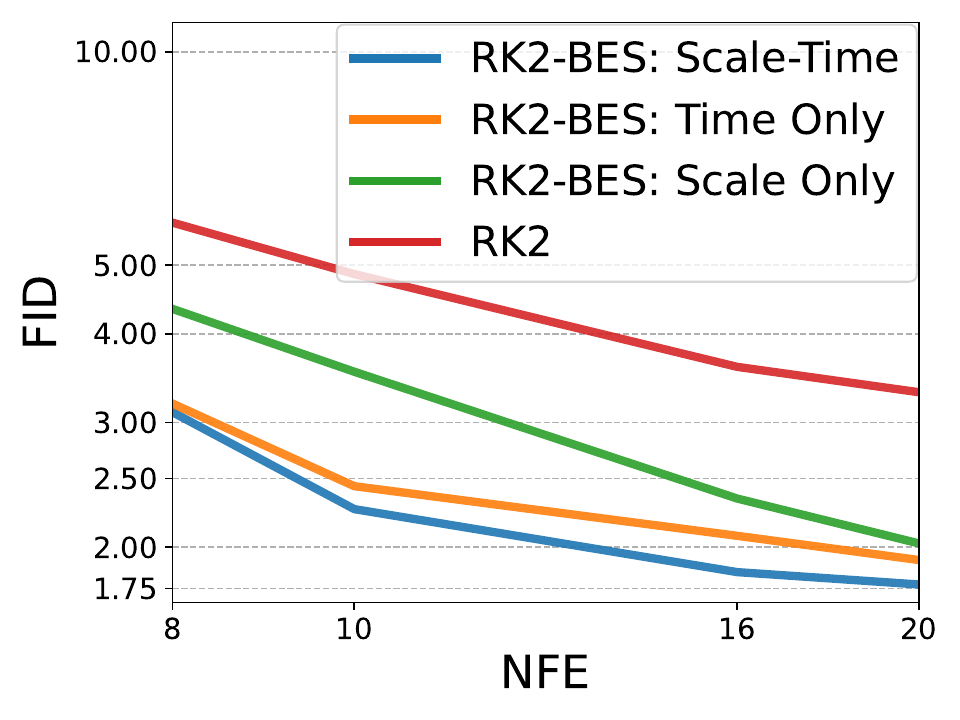} & \includegraphics[width=0.335\textwidth]{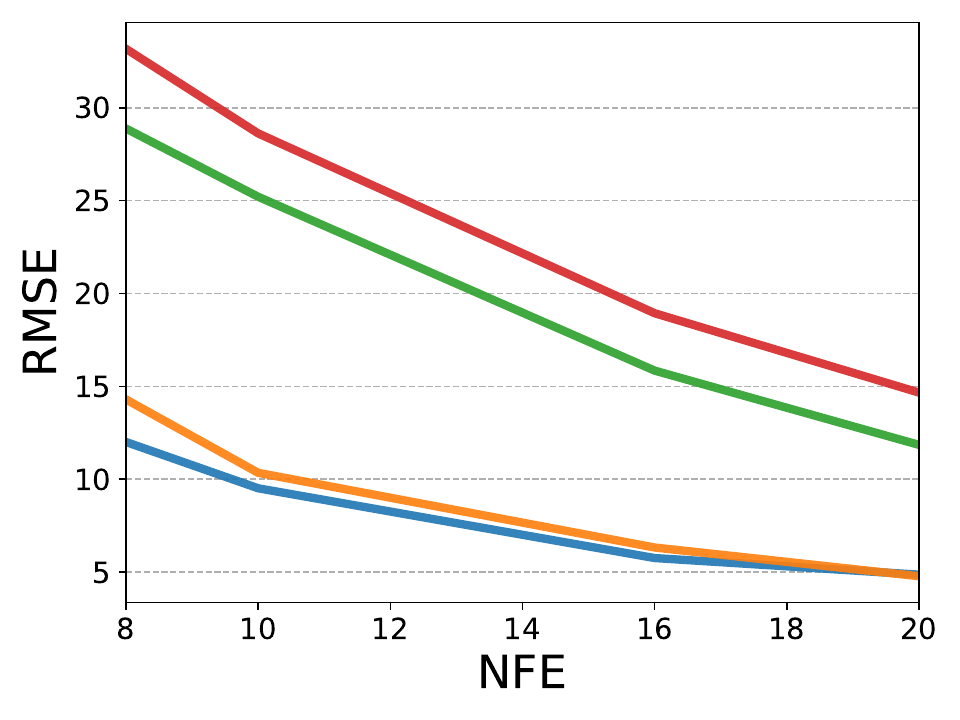} & \includegraphics[width=0.335\textwidth]{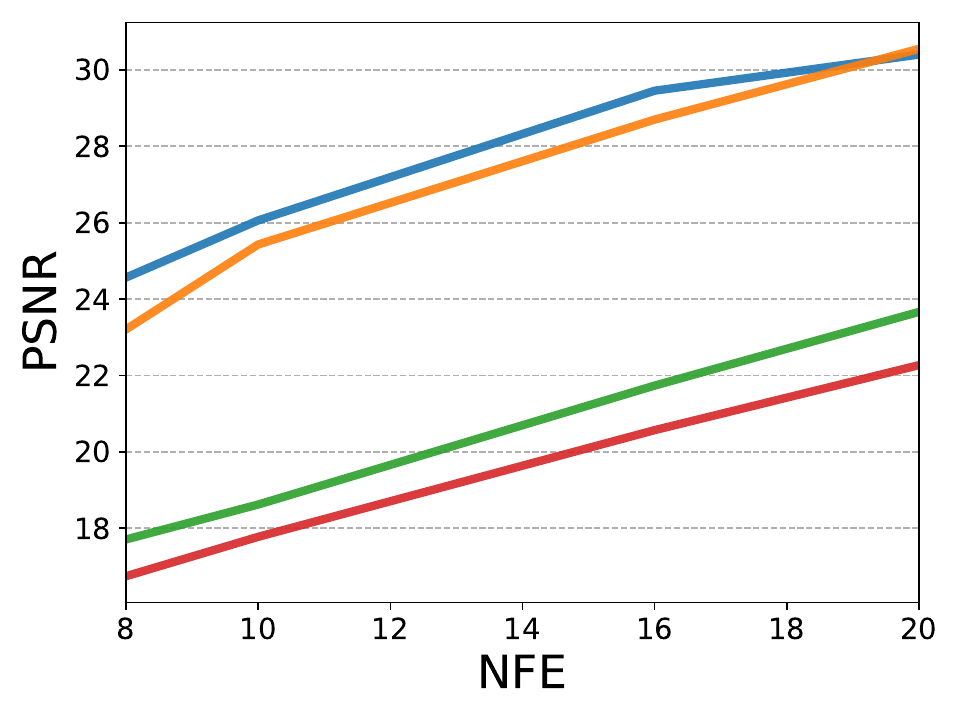} \\
\end{tabular}    
    \caption{Bespoke ablation I: RK2, Bespoke RK2 with full scale-time optimization, time-only optimization (keeping $s_r\equiv 1$ fixed), and scale-only optimization (keeping $t_r=r$ fixed) on FM-OT ImageNet-64: FID vs.~NFE (left), RMSE vs.~NFE (center), and PSNR vs.~NFE (right). Note that most improvement provided by time optimization where scale improves FID for all NFEs, and RMSE for $<20$ NFEs.}
    \label{fig_a:scale_time_ablation}
\end{figure}
\begin{figure}[h!]
    \centering
    \begin{tabular}{@{\hspace{0pt}}c@{\hspace{0pt}}c@{\hspace{0pt}}c@{\hspace{0pt}}}
    \includegraphics[width=0.335\textwidth]{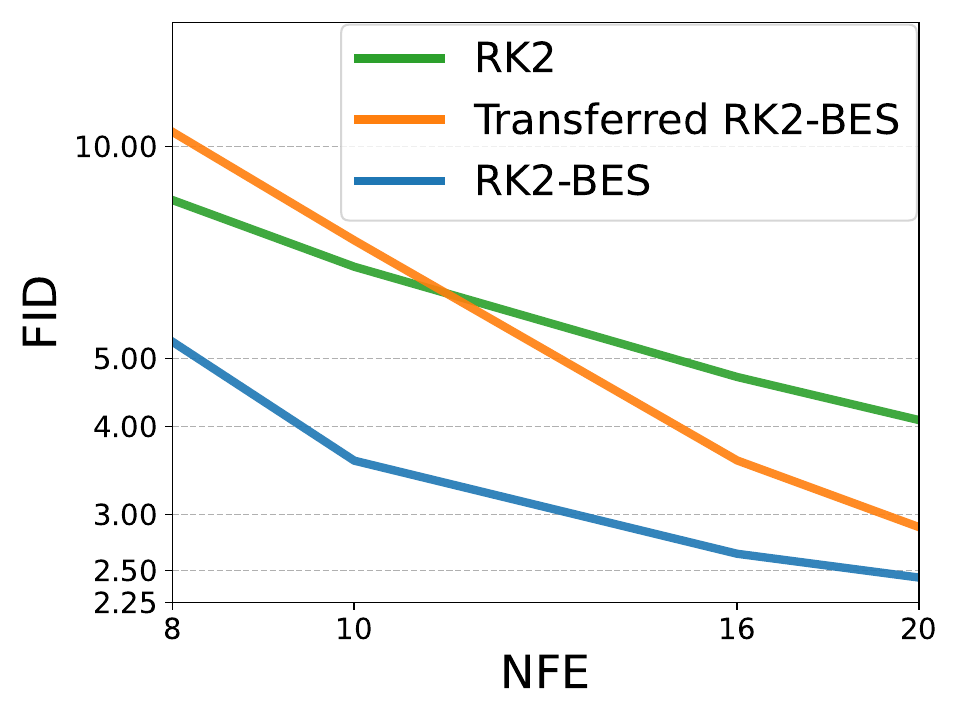} & \includegraphics[width=0.335\textwidth]{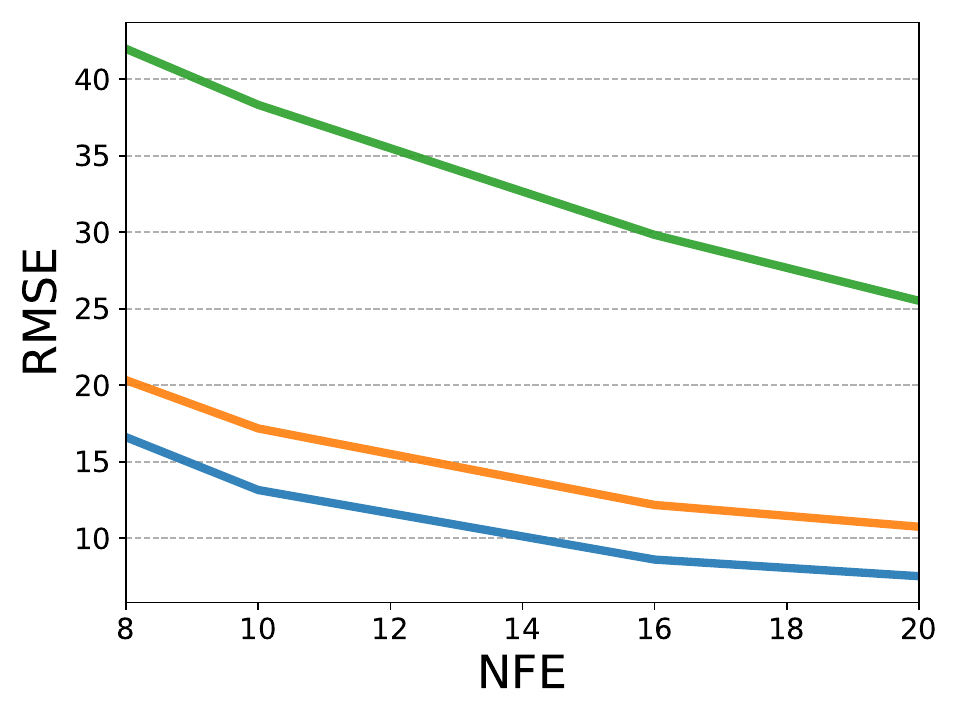} & \includegraphics[width=0.335\textwidth]{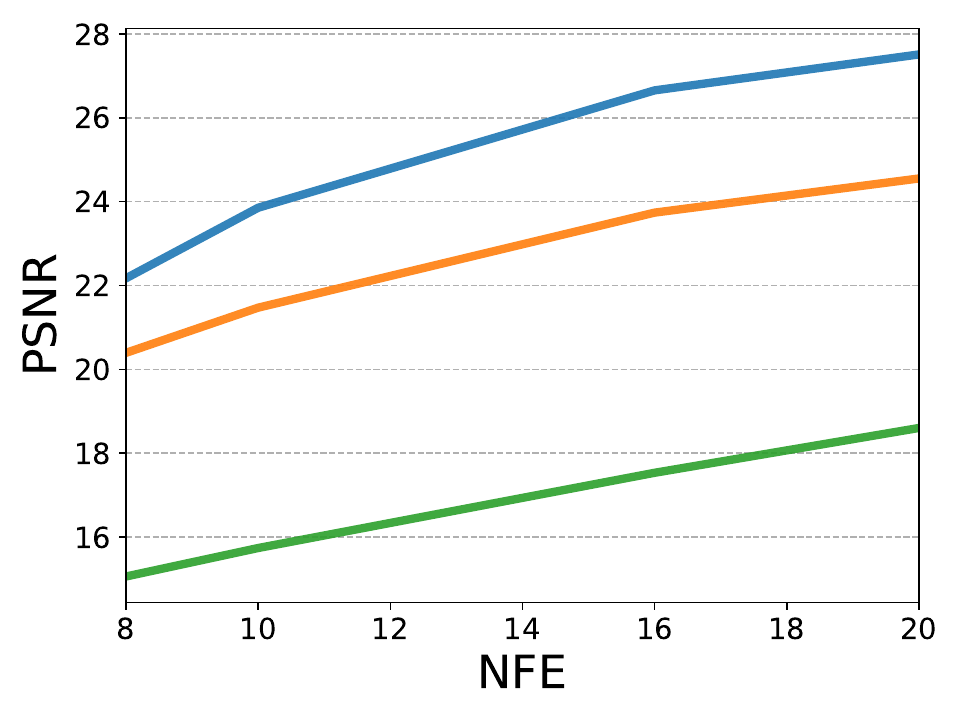} \\
\end{tabular}    
    \caption{Bespoke ablation II: RK2 evaluated on FM-OT ImageNet-128 model, Bespoke RK2 trained and evaluated on FM-OT ImageNet-128 model, and Bespoke RK2 trained on FM-OT ImageNet-64 and evaluated on FM-OT ImageNet-128 model (transferred): FID vs.~NFE (left), RMSE vs.~NFE (center), and PSNR vs.~NFE (right). Note that the transferred Bespoke solver is still inferior to the Bespoke solver but improves considerably RMSE and PSNR compared to the RK2 baseline. In FID the transferred solver improves over the baseline only for NFE=16,20. }
    \label{fig_a:transferred_ablation}
\end{figure}

\newpage
\section{Trained Bespoke solvers}
In this section we present the trained Bespoke solvers by visualizing their respective parameters $\theta$. 
\begin{figure}[h!]
    \centering
    \begin{tabular}{@{\hspace{0pt}}c@{\hspace{0pt}}c@{\hspace{0pt}}c@{\hspace{0pt}}c@{\hspace{0pt}}}
    {\quad \ \scriptsize NFE=8} & {\quad \ \scriptsize NFE=10} & {\quad \ \scriptsize NFE=16}  & {\quad \ \scriptsize NFE=20} \\
    \raisebox{3\height}{\rotatebox[origin=c]{90}{\scriptsize FM-OT}}\includegraphics[width=0.25\textwidth]{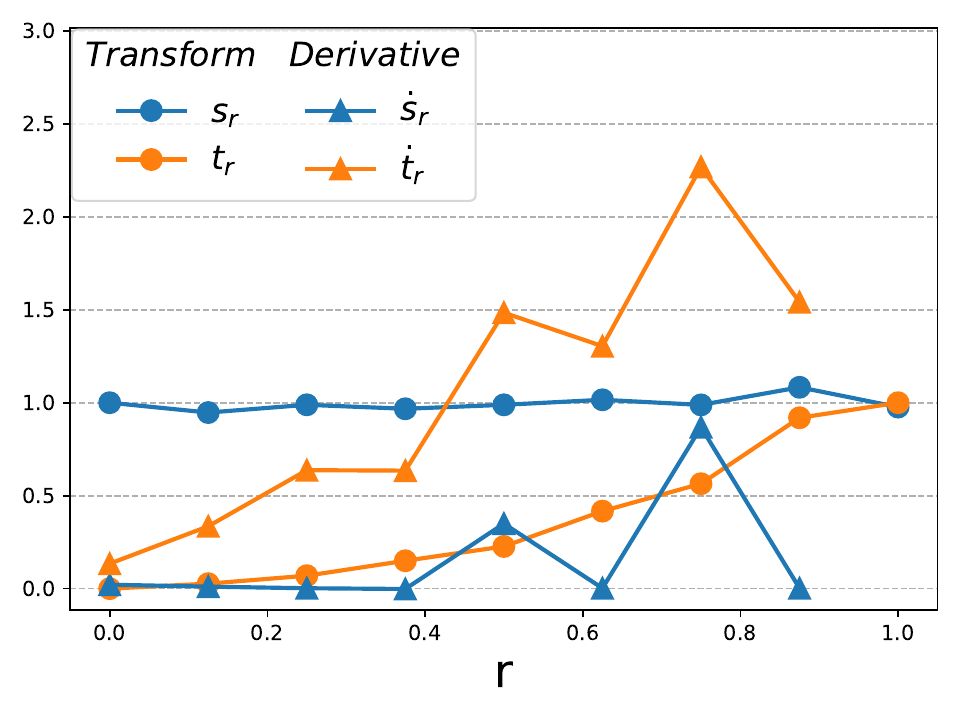} & \includegraphics[width=0.25\textwidth]{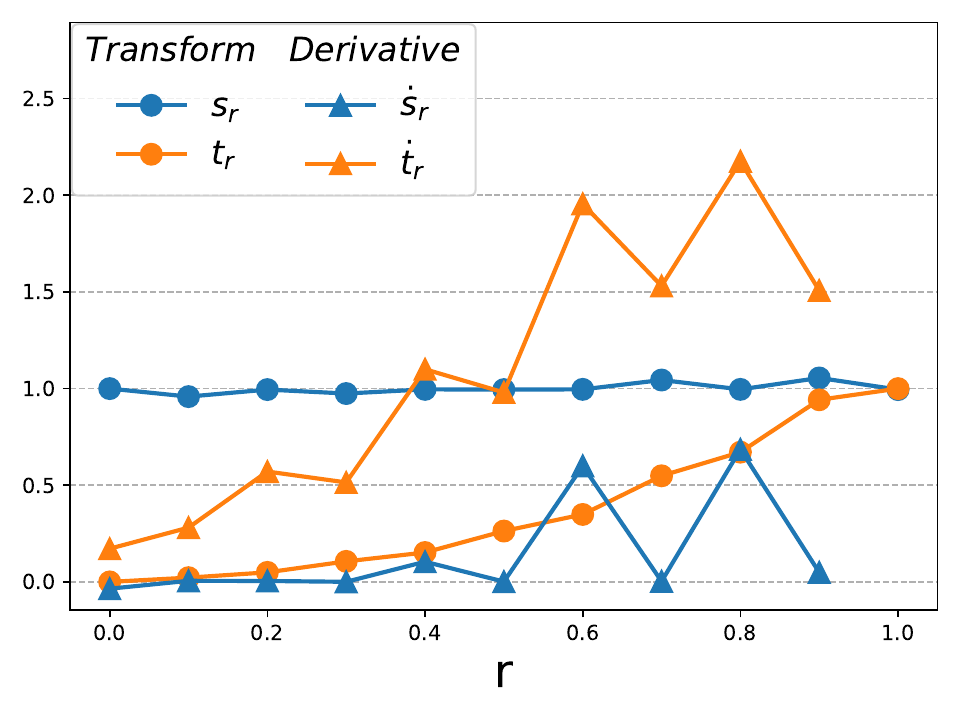} & \includegraphics[width=0.25\textwidth]{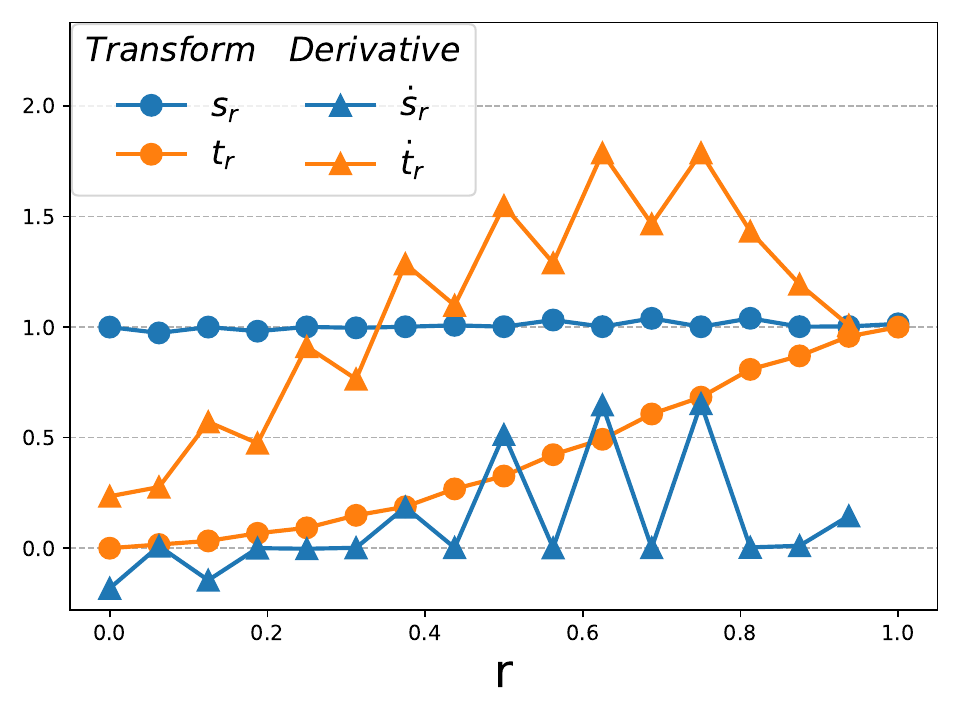} &
    \includegraphics[width=0.25\textwidth]{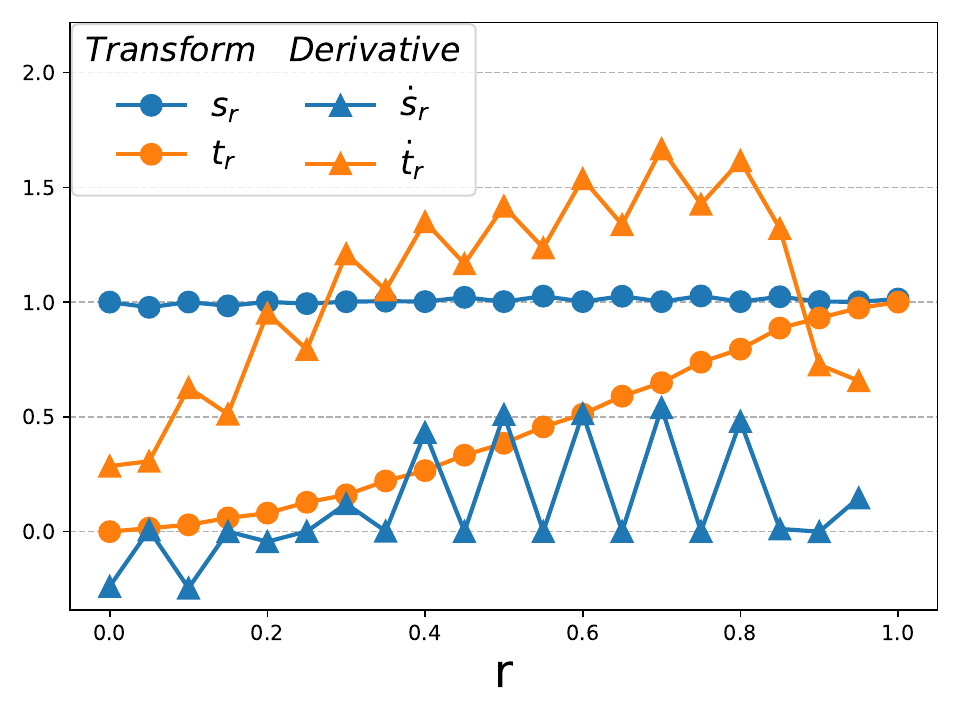}\\
\end{tabular}    
    \caption{Trained $\theta$ of Bespoke-RK2 solvers on ImageNet-128 FM-OT for NFE 8/10/16/20.}
    \label{fig_a:scheme_imagenet128}
\end{figure}
\begin{figure}[h!]
    \centering
    \begin{tabular}{@{\hspace{0pt}}c@{\hspace{0pt}}c@{\hspace{0pt}}c@{\hspace{0pt}}c@{\hspace{0pt}}}
    {\quad \ \scriptsize NFE=8} & {\quad \ \scriptsize NFE=10} & {\quad \ \scriptsize NFE=16}  & {\quad \ \scriptsize NFE=20} \\
    \raisebox{3.5\height}{\rotatebox[origin=c]{90}{\scriptsize $\eps$-pred}}\includegraphics[width=0.25\textwidth]{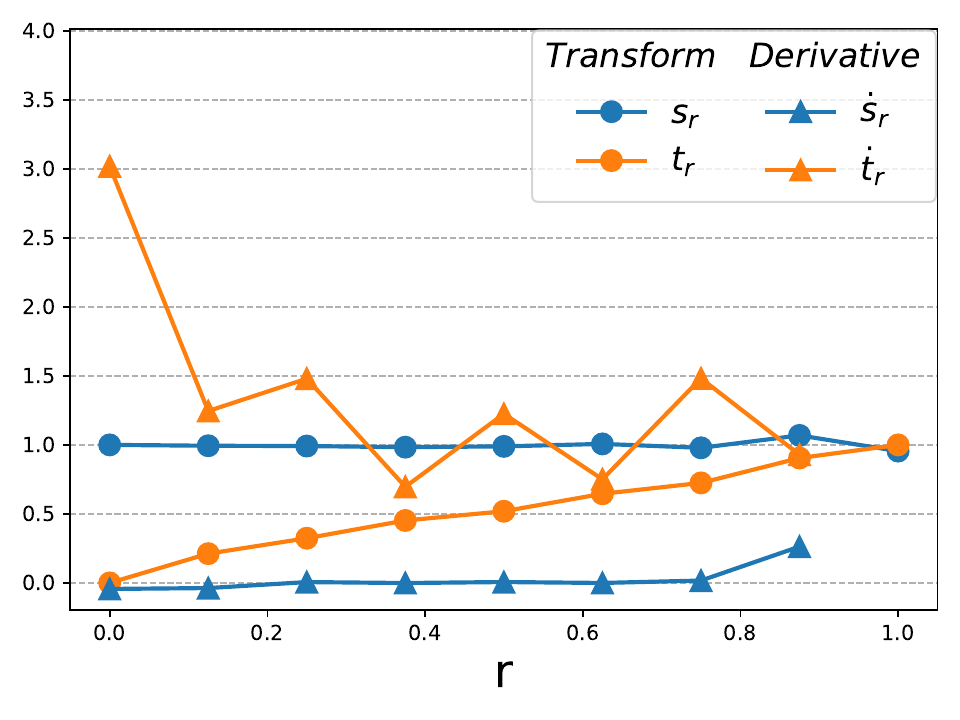} & \includegraphics[width=0.25\textwidth]{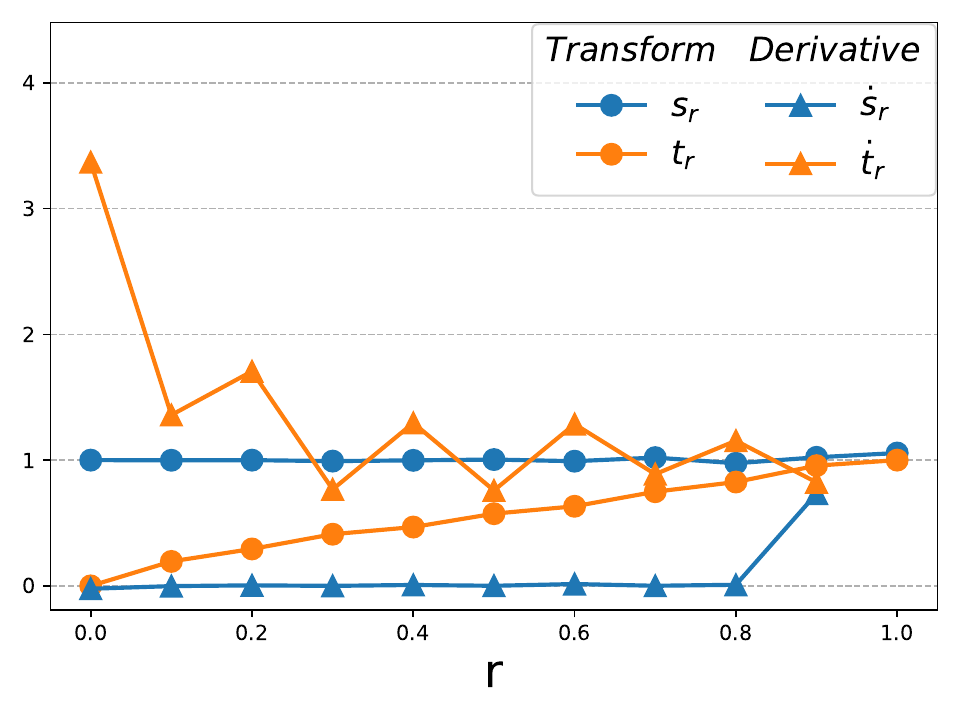} & \includegraphics[width=0.25\textwidth]{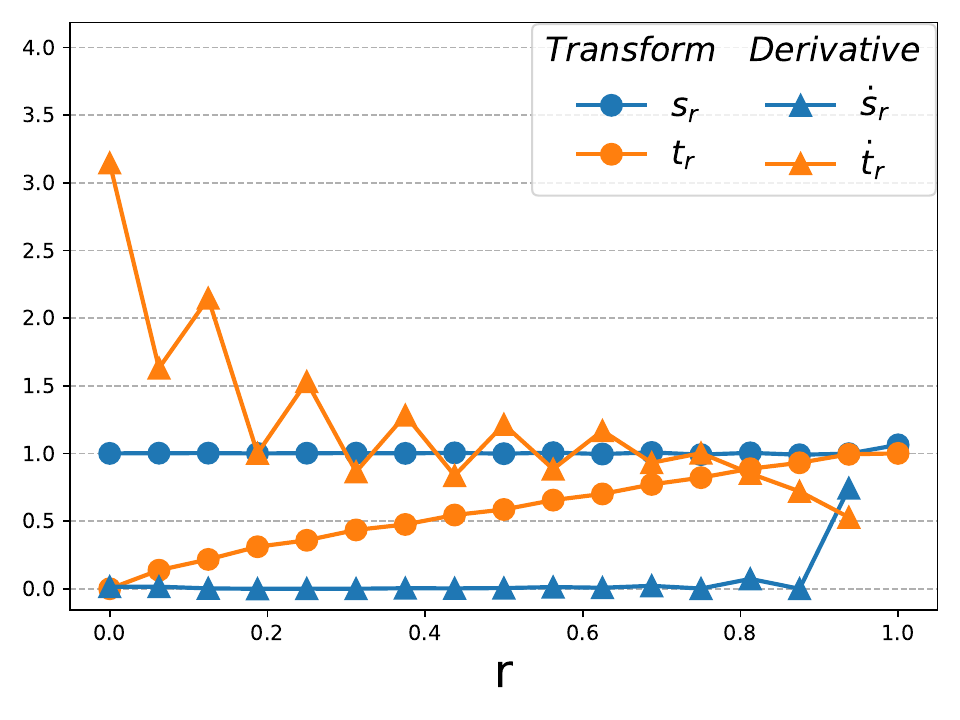} &
    \includegraphics[width=0.25\textwidth]{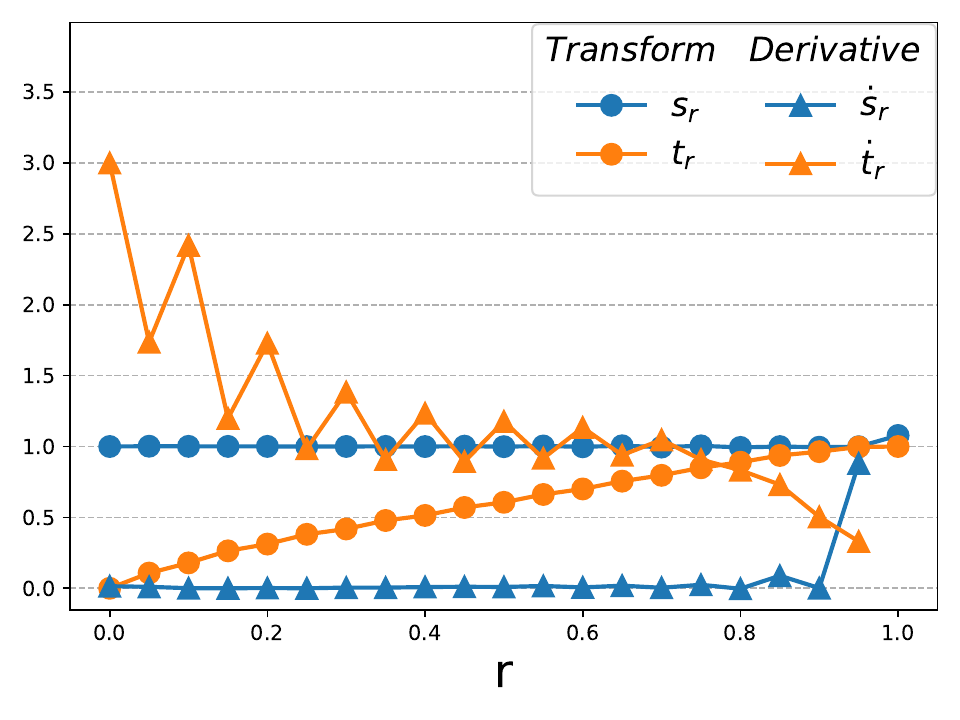}\\
     \raisebox{2.5\height}{\rotatebox[origin=c]{90}{\scriptsize FM/$v$-CS}}\includegraphics[width=0.25\textwidth]{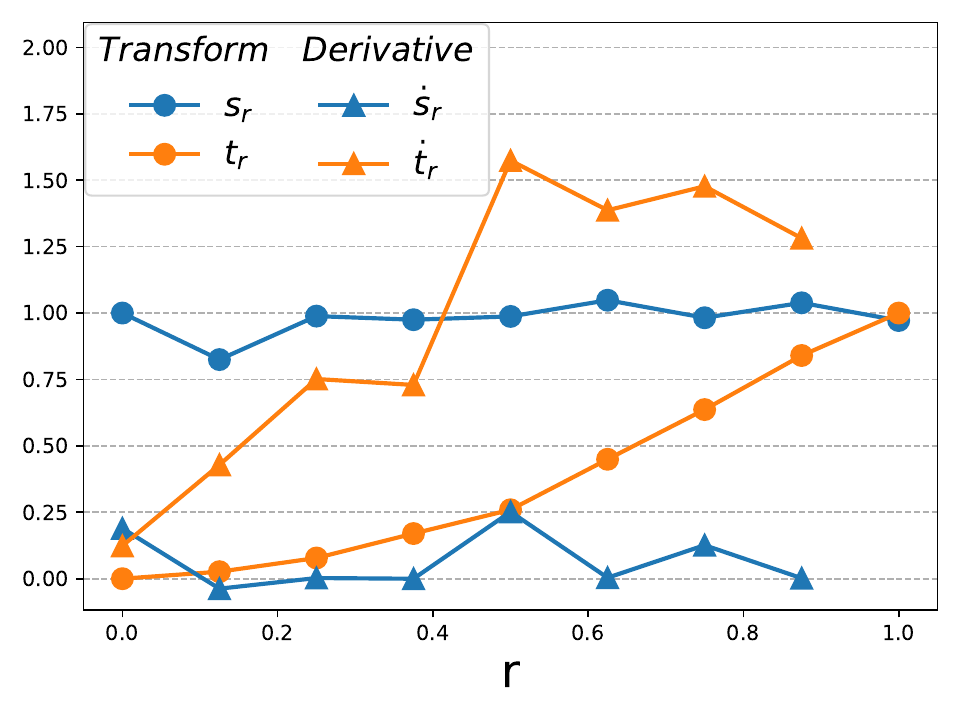} & \includegraphics[width=0.25\textwidth]{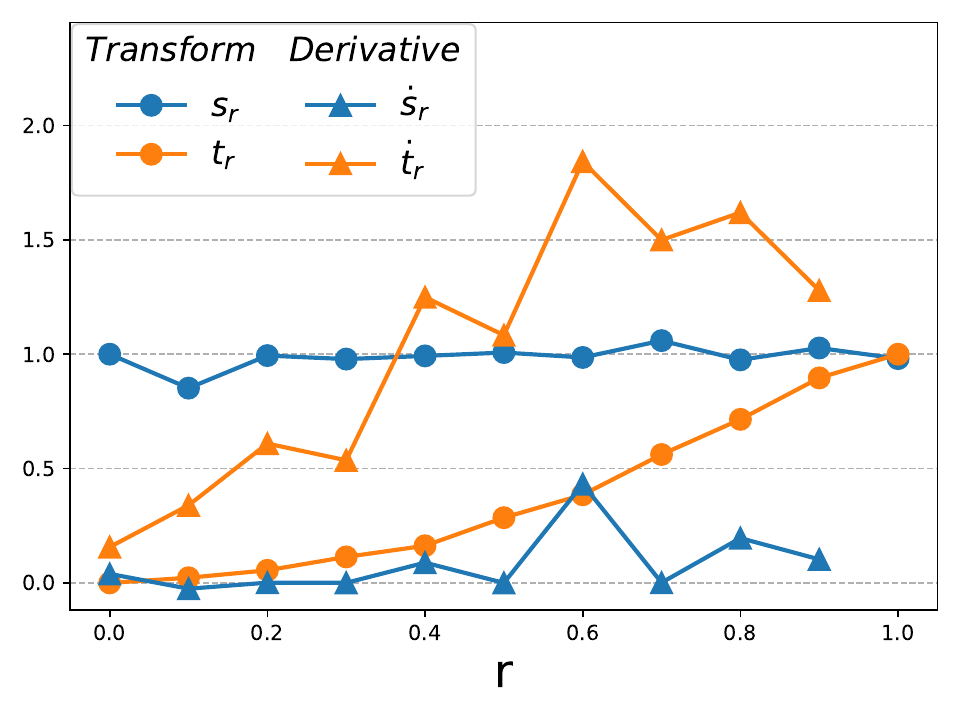} & \includegraphics[width=0.25\textwidth]{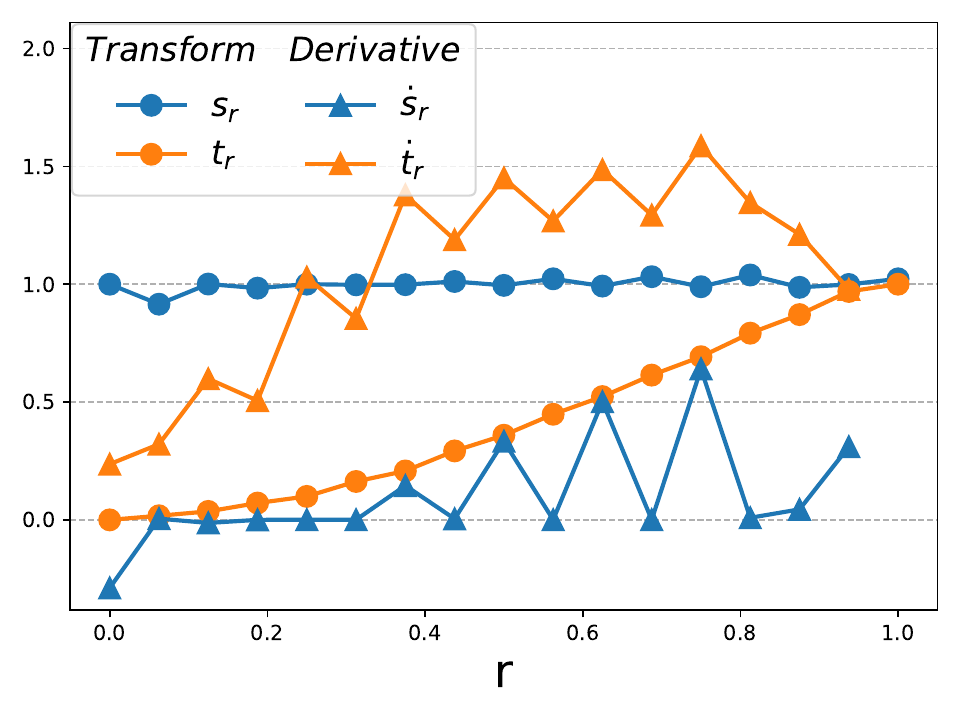} &
    \includegraphics[width=0.25\textwidth]{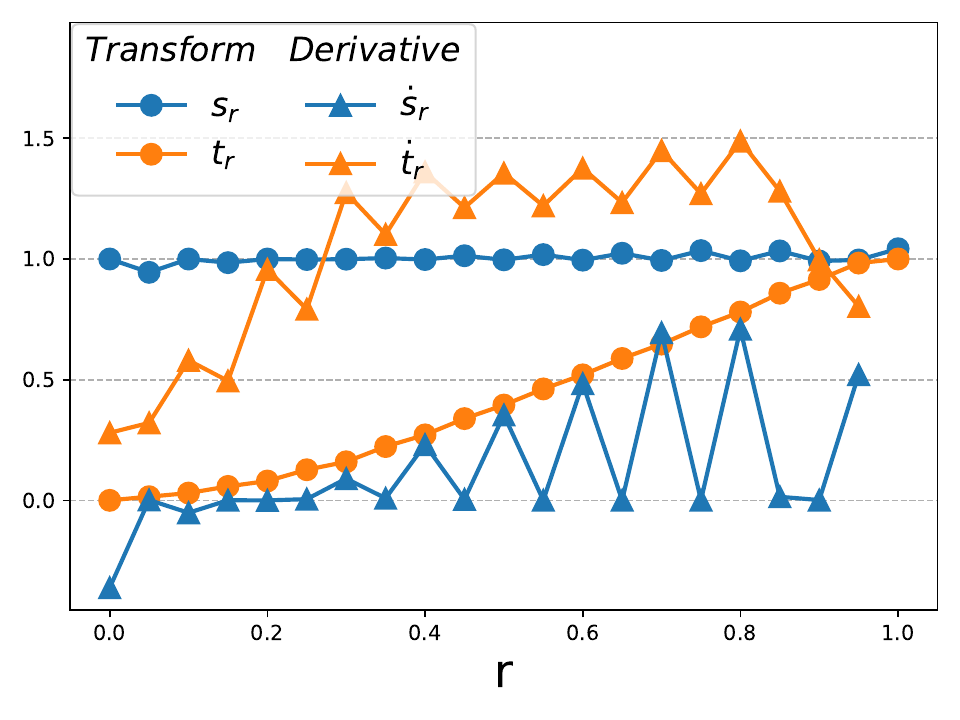}\\
    \raisebox{3\height}{\rotatebox[origin=c]{90}{\scriptsize FM-OT}}\includegraphics[width=0.25\textwidth]{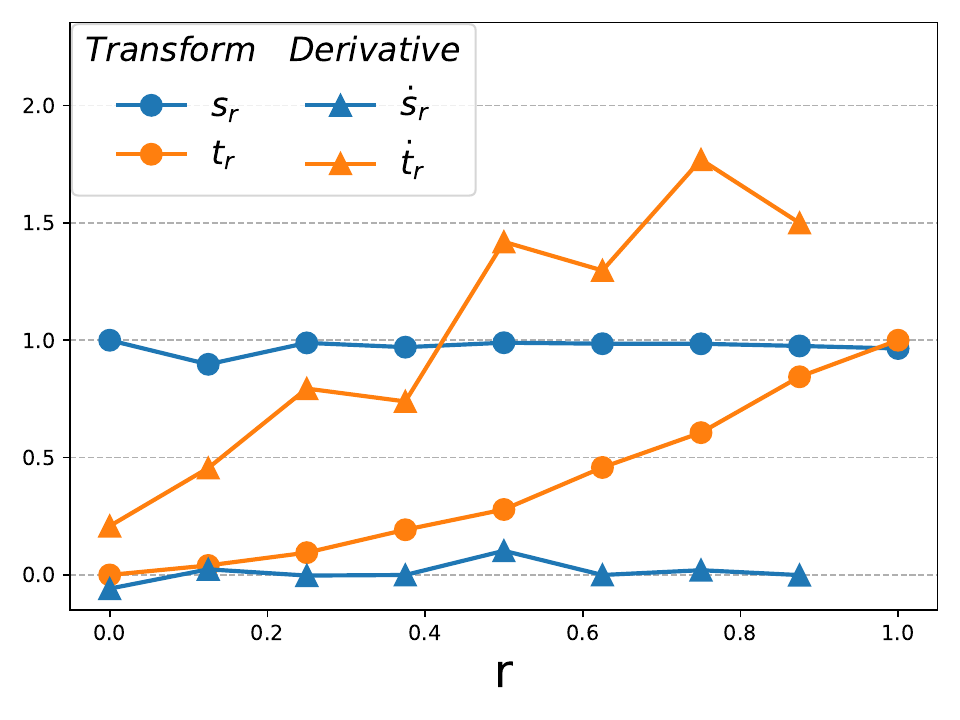} & \includegraphics[width=0.25\textwidth]{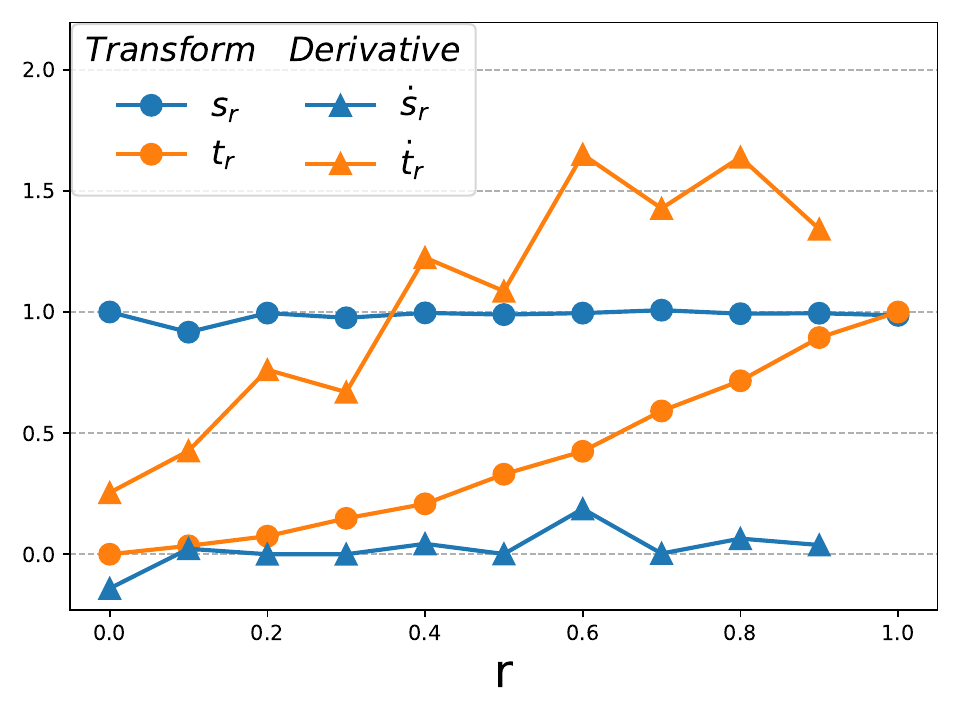} & \includegraphics[width=0.25\textwidth]{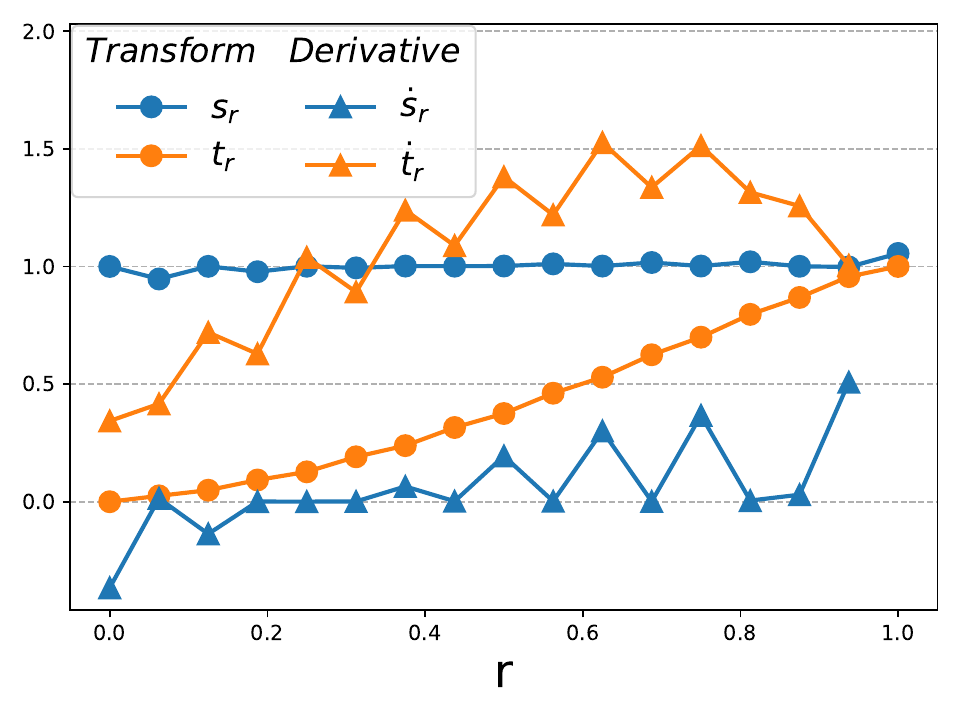} &
    \includegraphics[width=0.25\textwidth]{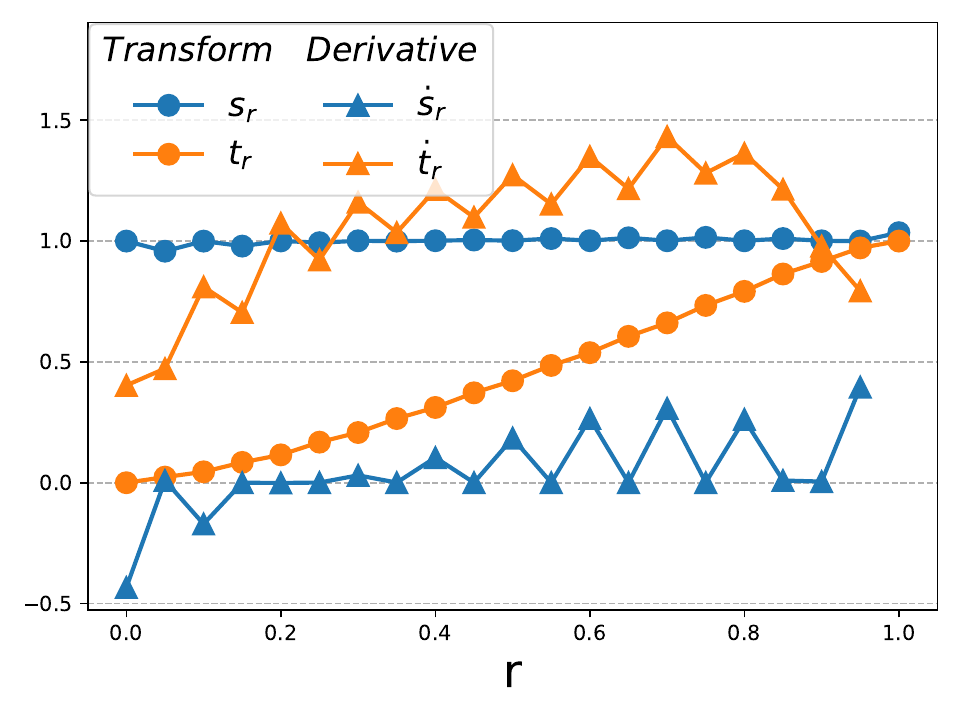}\\
\end{tabular}    
    \caption{Trained $\theta$ of Bespoke-RK2 solvers on ImageNet-64 for NFE 8/10/16/20; $\eps$-VP (top), FM/$v$-CS (middle), and FM-OT (bottom).}
    \label{fig_a:scheme_imagenet64}
\end{figure}
\begin{figure}[h!]
    \centering
    \begin{tabular}{@{\hspace{0pt}}c@{\hspace{0pt}}c@{\hspace{0pt}}c@{\hspace{0pt}}c@{\hspace{0pt}}}
    {\quad \ \scriptsize NFE=8} & {\quad \ \scriptsize NFE=10} & {\quad \ \scriptsize NFE=16}  & {\quad \ \scriptsize NFE=20} \\
    \raisebox{3.5\height}{\rotatebox[origin=c]{90}{\scriptsize $\eps$-pred}}\includegraphics[width=0.25\textwidth]{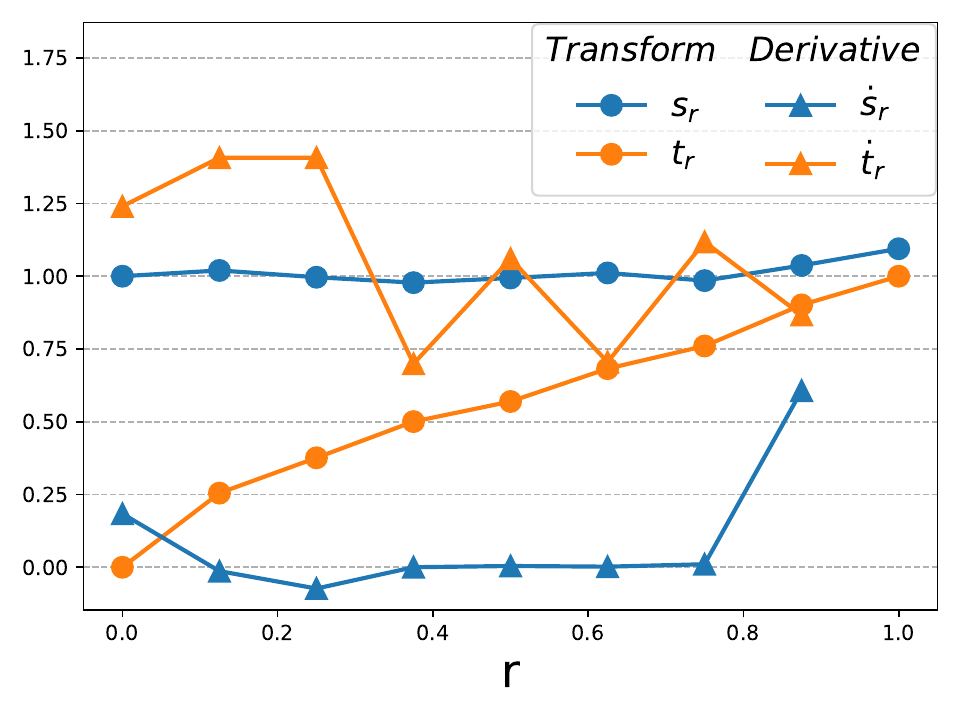} & \includegraphics[width=0.25\textwidth]{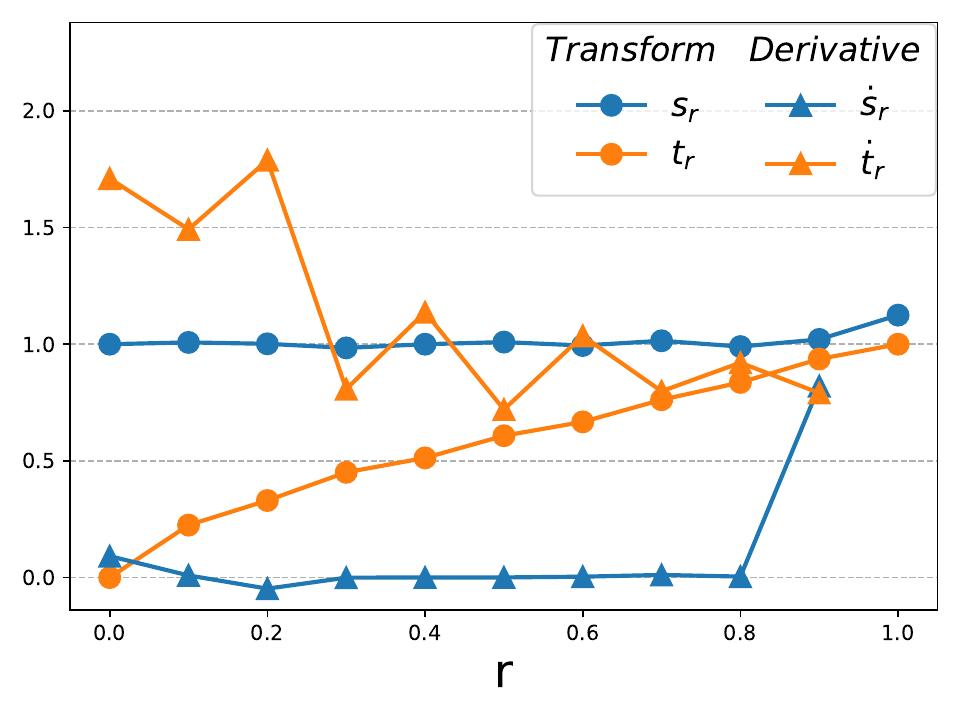} & \includegraphics[width=0.25\textwidth]{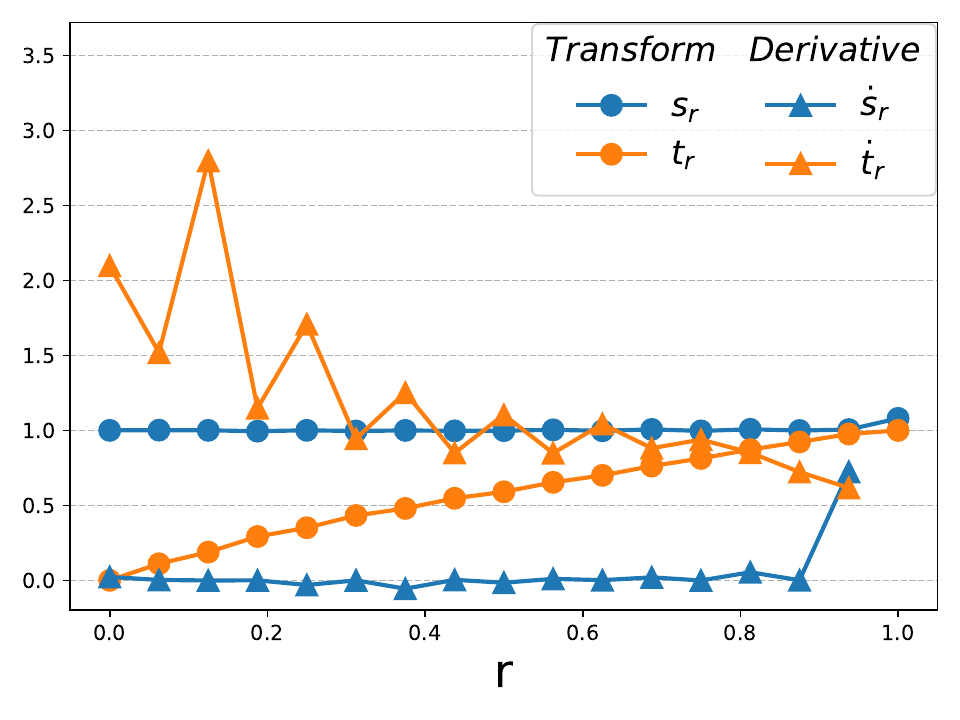} &
    \includegraphics[width=0.25\textwidth]{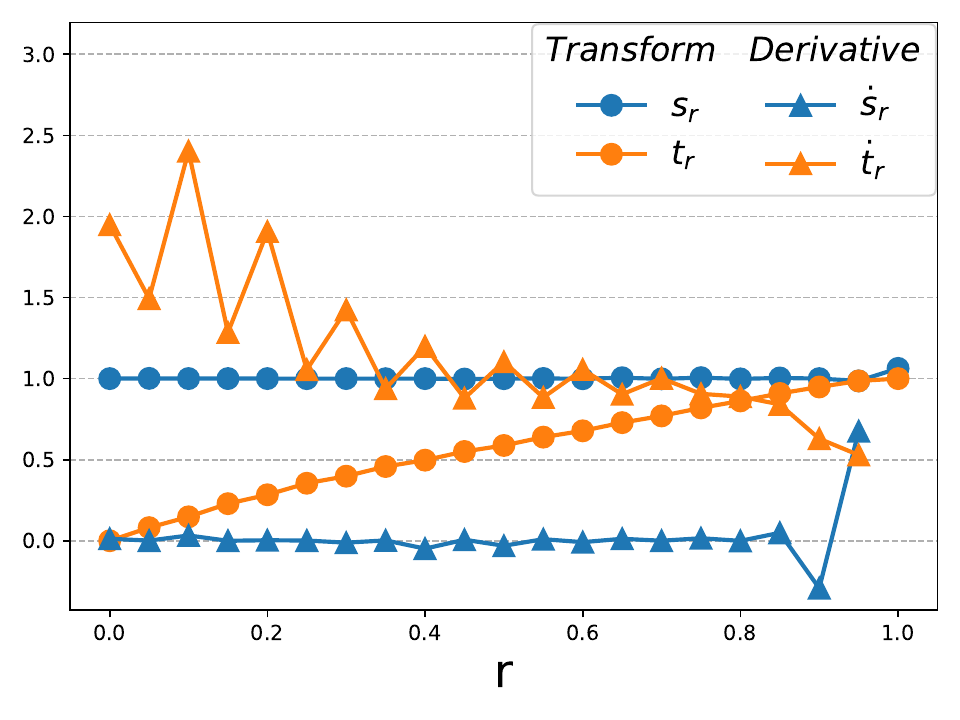}\\
     \raisebox{2.5\height}{\rotatebox[origin=c]{90}{\scriptsize FM/$v$-CS}}\includegraphics[width=0.25\textwidth]{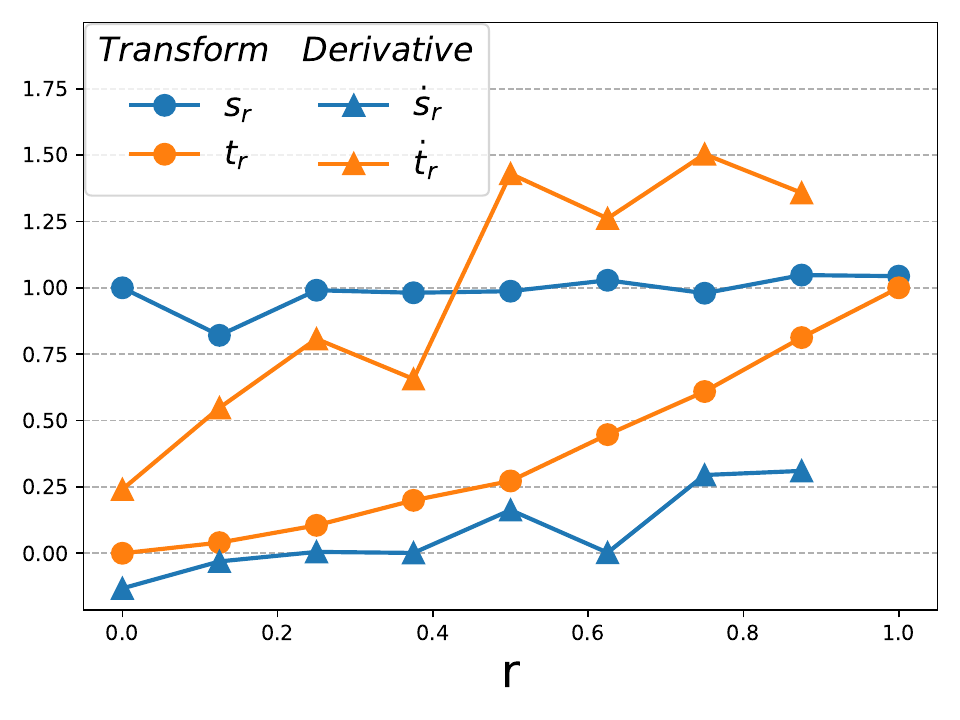} & \includegraphics[width=0.25\textwidth]{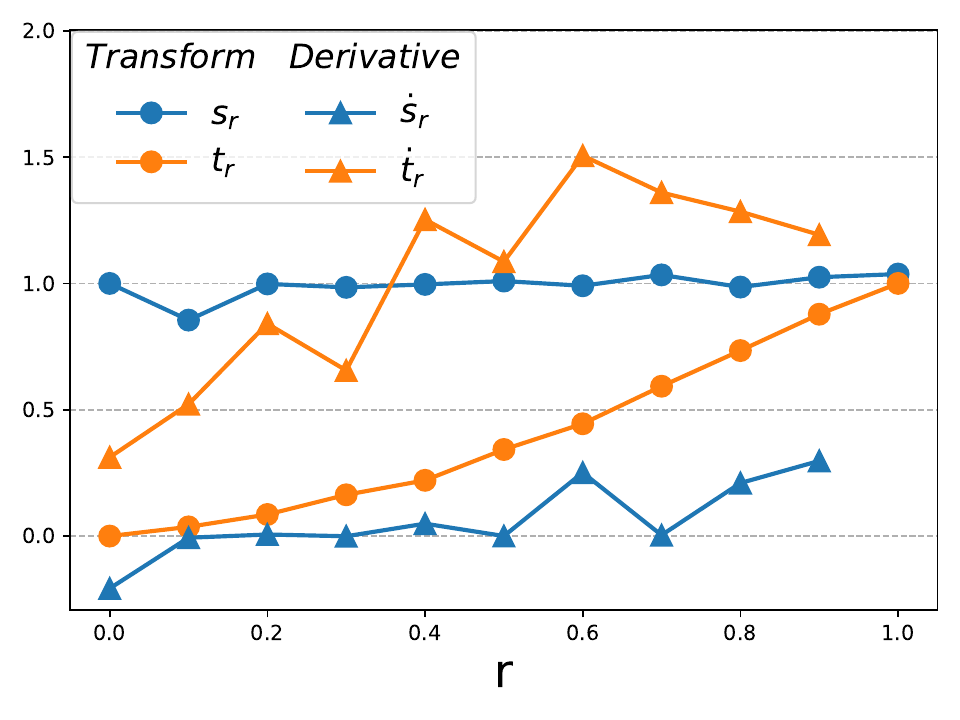} & \includegraphics[width=0.25\textwidth]{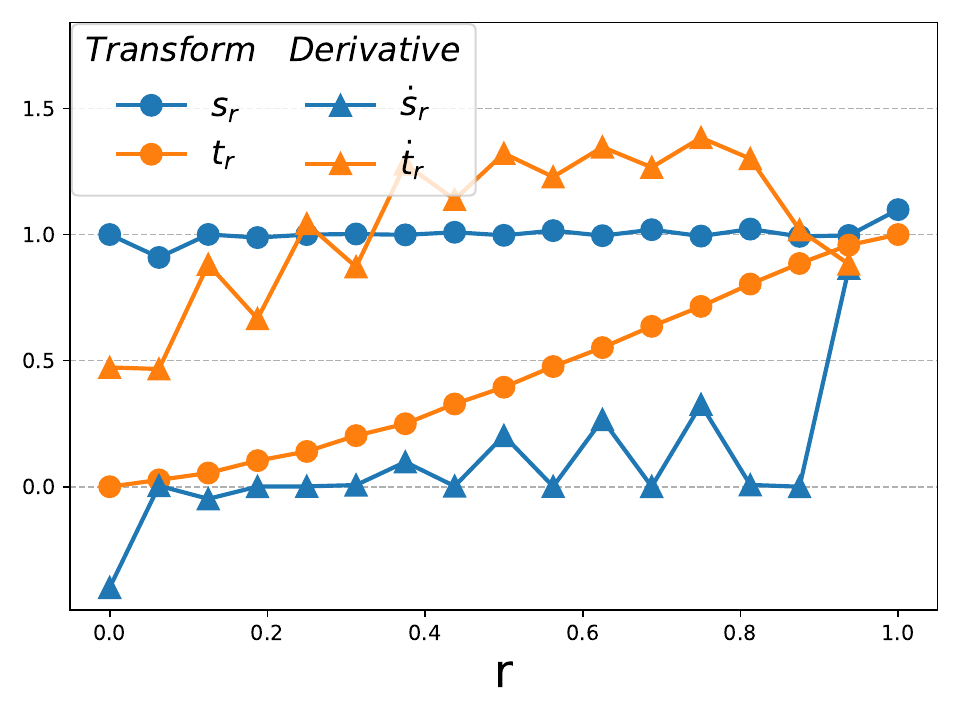} &
    \includegraphics[width=0.25\textwidth]{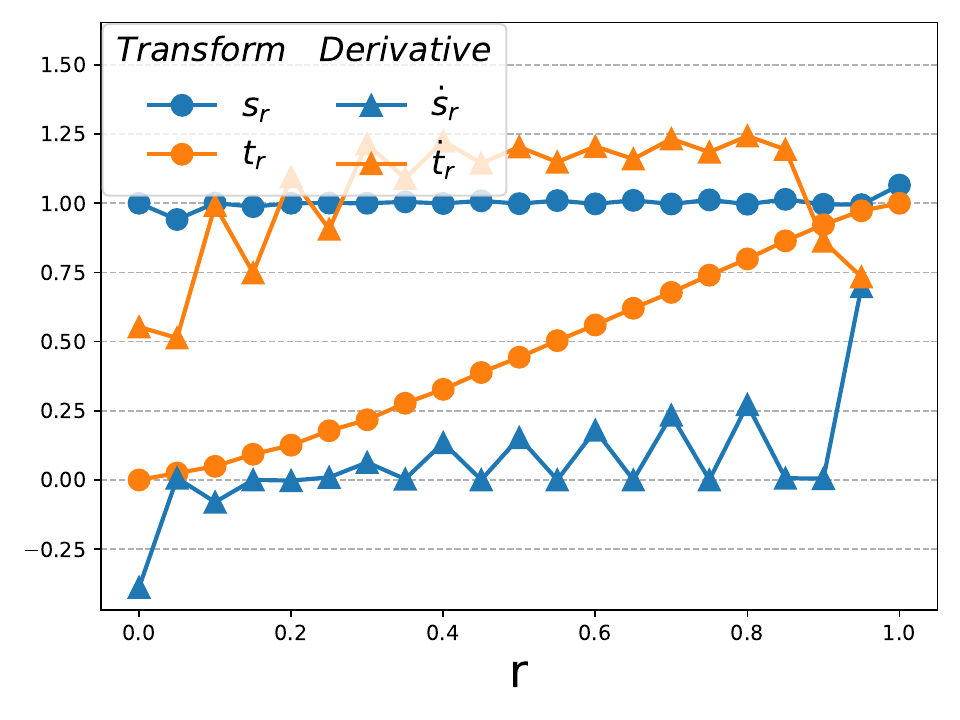}\\
    \raisebox{3\height}{\rotatebox[origin=c]{90}{\scriptsize FM-OT}}\includegraphics[width=0.25\textwidth]{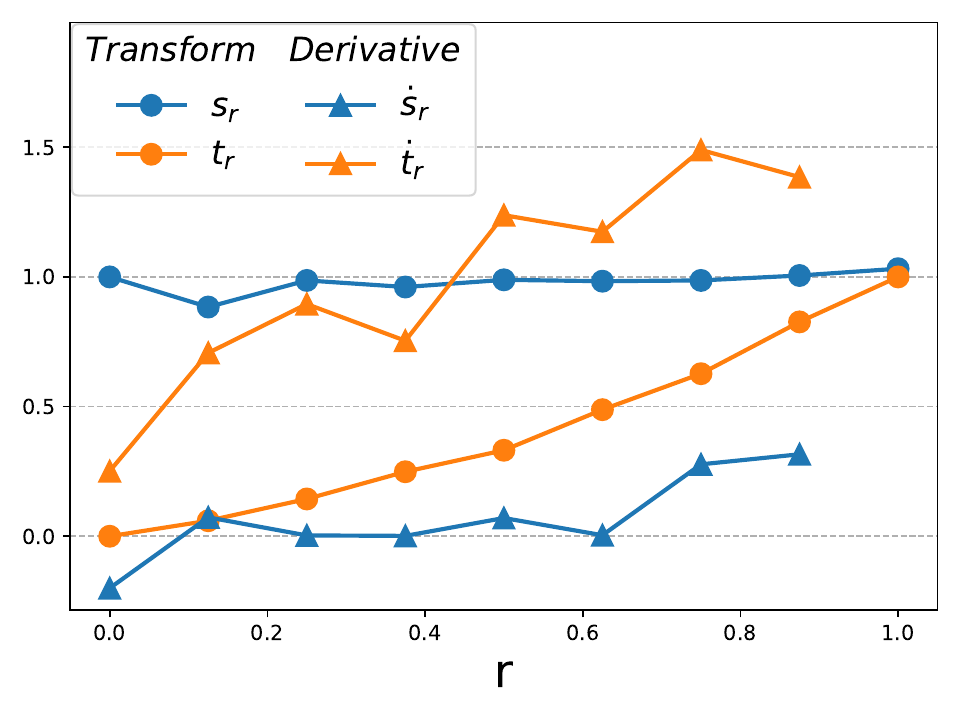} & \includegraphics[width=0.25\textwidth]{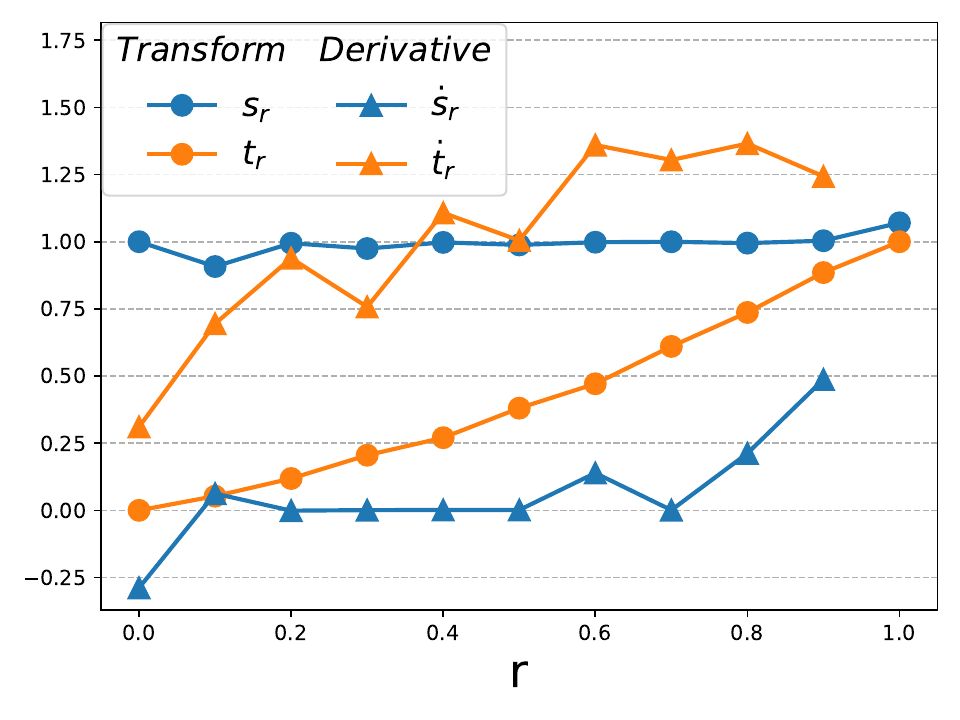} & \includegraphics[width=0.25\textwidth]{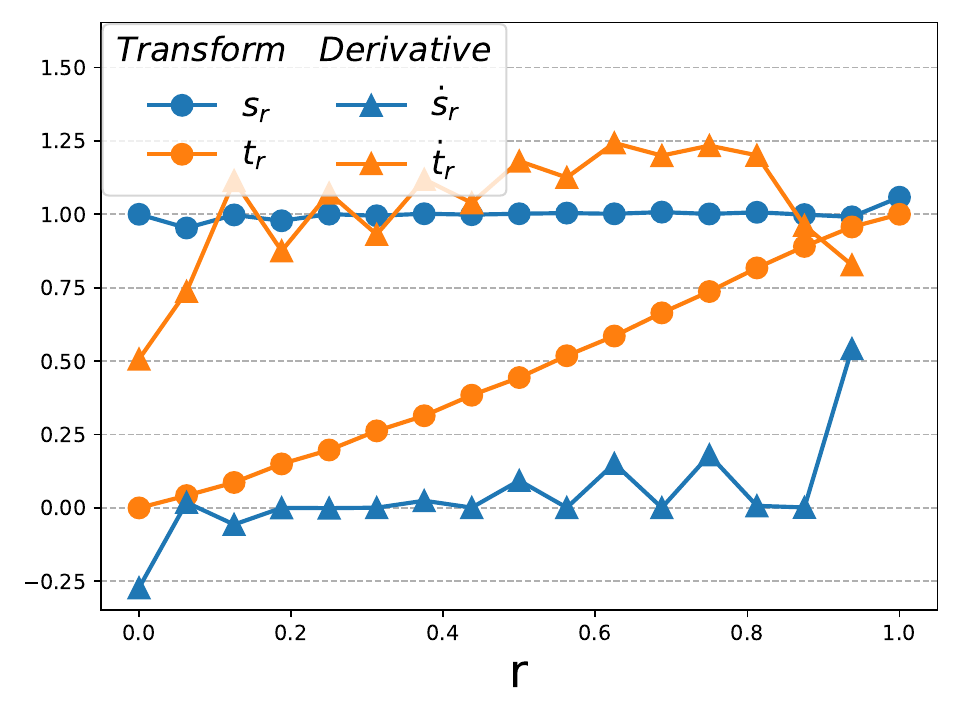} &
    \includegraphics[width=0.25\textwidth]{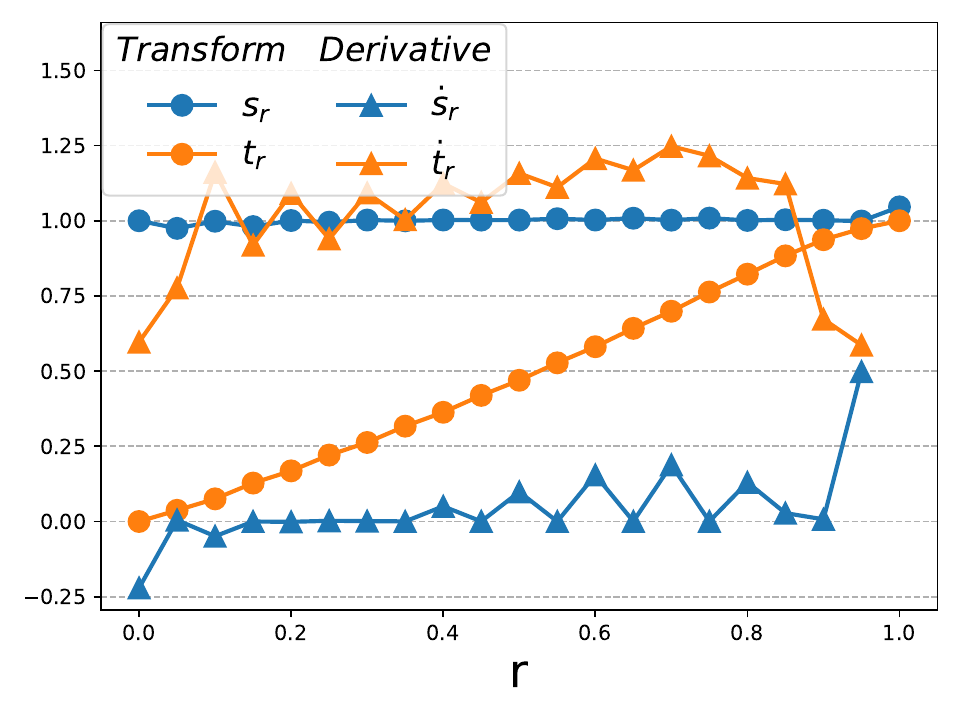}\\
\end{tabular}    
    \caption{Trained $\theta$ of Bespoke-RK2 solvers on CIFAR10 for NFE 8/10/16/20; $\eps$-pred (top), FM/$v$-CS (middle), and FM-OT (bottom).}
    \label{fig_a:scheme_cifar10}
\end{figure}

\section{Pre-trained Models}
All our FM-OT and FM/$v$-CS models were trained with Conditional Flow Matching (CFM) loss derived in \cite{lipman2022flow},
\begin{equation}
    \gL_{\text{CFM}}(\theta) = \E_{t, p_0(x_0), q(x_1)}\norm{v_t(x_t;\theta) - (\dot{\sigma}_tx_0 + \dot{\alpha_t}x_1)}^2,
\end{equation}
where $t \sim \gU([0,1])
$, $p_0(x_0)=\gN\parr{x_0|0,I}$, $q(x_1)$ is the data distribution, $v_t(x_t;\theta)$ is the network, $(\alpha_t, \sigma_t)$ is the scheduler as defined in \eqref{e:def_scheduler}, and $x_t=\sigma_tx_0 + \alpha_tx_1$. For FM-OT the scheduler is
\begin{equation}
    \alpha_t=t, \quad \sigma_t=1-t,
\end{equation}
and for FM/$v$-CS the scheduler is 
\begin{equation}
    \alpha_t=\sin\frac{\pi}{2}t, \quad \sigma_t=\cos\frac{\pi}{2}t.
\end{equation}
All our $\eps$-VP models were trained with noise prediction loss as derived in \cite{ho2020denoising} and \cite{song2020score},
\begin{equation}
    \gL_{\text{noise}}(\theta) = \E_{t, p_0(x_0), q(x_1)}\norm{\epsilon_t(x_t;\theta) -x_0}^2,
\end{equation}
where the VP scheduler is
\begin{equation}
    \quad \alpha_t= \xi_{1-t}, \quad \sigma_t = \sqrt{1-\xi_{1-t}^2},\quad \xi_{s}= e^{-\frac{1}{4}s^2(B-b) -\frac{1}{2}sb},
\end{equation}
and $B=20,\ b=0.1$. All models use U-Net architecture as in \citet{dhariwal2021diffusion}, and the hyper-parameters are listed in Table \ref{tab:training_hyper-params}.
\begin{table}
\centering
\resizebox{\textwidth}{!}{
\begin{tabular}{l c c c c c}
\toprule
 & CIFAR10 & CIFAR10  & ImageNet-64 & ImageNet-128 & AFHQ 256  \\
 & $\eps$-VP & FM-OT;FM/$v$-CS  & $\eps$-VP;FM-OT;FM/$v$-CS & FM-OT & FM-OT  \\
\midrule
Channels & 128 & 128 & 196 & 256 & 256 \\
Depth & 4 & 4  & 3  & 2 & 2 \\
Channels multiple & 2,2,2 & 2,2,2 & 1,2,3,4 & 1,1,2,3,4 & 1,1,2,2,4,4 \\
Heads & 1 & 1 & - & - & -\\
Heads Channels & - & - & 64 & 64 & 64 \\
Attention resolution & 16 & 16 & 32,16,8 & 32,16,8 & 64,32,16 \\
Dropout & 0.1 & 0.3 & 1.0 & 0.0 & 0.0  \\
Effective Batch size & 512 & 512 & 2048 & 2048 & 256  \\
GPUs & 8 & 8 & 64 & 64 & 64  \\
Epochs & 2000 & 3000 & 1600 & 1437 & 862\\ 
Iterations & 200k & 300k &  1M & 900k & 50k\\
Learning Rate & 5e-4 & 1e-4 & 1e-4 & 1e-4 & 1e-4 \\
Learning Rate Scheduler & constant & constant & constant  &Poly Decay &Polyn Decay\\
Warmup Steps & - & - & - & 5k & 5k\\
P-Unconditional & - & - & 0.2 & 0.2 & 0.2\\
\bottomrule
\end{tabular}
}
\caption{Pre-trained models' hyper-parameters. }
\label{tab:training_hyper-params}
\end{table}

\newpage
\section{More results}
In this section we present more sampling results using RK2-Bespoke solvers, the RK2 baseline and Ground Truth samples (with DOPRI5).
\begin{figure}[h!]
    \centering
\vspace{-10pt}
\end{tabular}
\caption{Comparison of $\eps$-VP and FM/$v$-CS ImageNet-64 samples with DPM-2 and bespoke-RK2 solvers.}\label{fig:imagenet-64-dpm2}
\end{figure}
\end{document}